\newcommand{\bs}[1]{\boldsymbol{#1}}
\newcommand{\R}{\mathbb{R}}
\newcommand{\diag}[1]{\mathrm{diag}\left({#1}\right)}
\newcommand{\W}{\boldsymbol{W}}
\newcommand{\w}{\boldsymbol{w}}
\newcommand{\balpha}{\boldsymbol{\alpha}}
\newcommand{\bbeta}{\boldsymbol{\beta}}
\renewcommand{\v}{\boldsymbol{v}}
\newcommand{\p}{\boldsymbol{p}}
\newcommand{\q}{\boldsymbol{q}}
\newcommand{\x}{\boldsymbol{x}}
\newcommand{\y}{\boldsymbol{y}}
\newcommand{\z}{\boldsymbol{z}}
\newcommand{\btheta}{\boldsymbol{\theta}}
\renewcommand{\H}{\boldsymbol{H}}
\newcommand{\J}{\boldsymbol{J}}
\renewcommand{\b}{\boldsymbol{b}}
\newtheorem{remark}{Remark}
\newcommand{\gn}{\texttt{GradNet}}
\newcommand{\gns}{\texttt{GradNets}}
\newcommand{\mgn}{\texttt{mGradNet}}
\newcommand{\mgns}{\texttt{mGradNets}}
\newcommand{\cmgn}{\texttt{mGradNet-C}}
\newcommand{\mmgn}{\texttt{mGradNet-M}}
\newcommand{\mmgns}{\texttt{mGradNet-Ms}}
\newcommand{\gnc}{\texttt{GradNet-C}}
\newcommand{\gnm}{\texttt{GradNet-M}}
\newcommand{\gnms}{\texttt{GradNet-M}s}
\newcommand{\lse}{\mathrm{LSE}}
\newcommand{\softmax}{\mathrm{softmax}}
\renewcommand{\hat}[1]{\widehat{#1}}
\newtheorem{definition}{Definition}
\newtheorem{lemma}{Lemma}
\newtheorem{theorem}{Theorem}
\newtheorem{corollary}{Corollary}[theorem] 
\newtheorem{prop}{Proposition}[theorem]
\newcommand\blfootnote[1]{
    \begingroup
    \renewcommand\thefootnote{}\footnote{#1}
    \addtocounter{footnote}{-1}
    \endgroup
}
\begin{document}
\title{Gradient Networks}

\author{Shreyas Chaudhari\footnotemark[1] \and Srinivasa Pranav\footnotemark[1] \and Jos\'e M.F. Moura
\blfootnote{Equal contribution\\
Authors partially supported by NSF Graduate Research Fellowships (DGE-1745016, DGE-2140739), NSF Grant CCF-2327905, and ARCS Fellowship.\\ Code available at \href{https://github.com/SPronav/GradientNetworks}{https://github.com/SPronav/GradientNetworks}}
}

\date{
Carnegie Mellon University\\
Electrical and Computer Engineering}

\maketitle

\begin{abstract}
Directly parameterizing and learning gradients of functions has widespread significance,
with specific applications in
inverse problems, generative modeling, and optimal transport. This paper introduces gradient networks (\gns): novel neural network architectures
that parameterize gradients of various function classes.
\gns\ exhibit specialized architectural constraints that ensure correspondence to gradient functions. We provide a comprehensive \gn\ design framework that includes methods for transforming \gns\ into monotone gradient networks (\mgns), which are guaranteed to represent gradients of convex functions. Our results establish that our proposed \gn\ (and \mgn) universally approximate the gradients of (convex) functions.
Furthermore, these networks can be customized to correspond to specific spaces of potential functions, including transformed sums of (convex) ridge functions.
Our analysis leads to two distinct \gn\ architectures, \gnc\ and \gnm, and we describe the
corresponding monotone versions, \cmgn\ and \mmgn. Our empirical results demonstrate that these architectures provide efficient parameterizations and outperform existing methods by up to 15 dB in gradient field tasks and by up to 11 dB in Hamiltonian dynamics learning tasks.
\end{abstract}


\section{Introduction} 
\label{sec:intro}

Deep neural networks are prized for their ability to parameterize and easily learn complicated, high-dimensional functions. Researchers have devoted substantial effort to developing deep neural networks that achieve state-of-the-art performance on numerous tasks spanning computer vision~\cite{krizhevsky2012imagenet}, natural language processing~\cite{vaswani2017attention}, and reinforcement learning~\cite{mnih2015human}. These neural networks are commonly unconstrained and effectively parameterize the space of all functions.
Many applications instead require learned functions that exhibit specific properties, which necessitates the design of neural networks corresponding to specific function classes -- a problem that has seldom been studied. Constraining neural networks to belong to a particular function class not only enhances interpretability, but also leads to theoretical performance guarantees essential for deploying trained models in safety-critical applications.

In particular, neural networks corresponding to \textit{gradients} of functions hold significant importance across several science and engineering disciplines. For example, physicists often wish to use a finite set of measurements to characterize the gradient field of a potential function, such as the temperature gradient over a surface. Score-based generative models are another application of learning gradient functions, where neural networks are trained to learn $\nabla_{\x} \log p(\x)$, the score function of an unknown probability distribution~\cite{hyvarinen2005estimation, song2019generative}. These methods have been applied to image \cite{songDiff, song2020improved} and 3D shape~\cite{cai2020learning} generation.

Learning gradients of \textit{convex} functions holds particular significance in optimal transport theory. Brenier's theorem states that the unique solution to the Monge problem with Euclidean cost is given by the gradient of a convex function~\cite{brenier1991polar, santambrogio2015optimal}. The theorem has inspired methods for learning Monge maps \cite{korotin2021neural, bunne2022supervised, makkuva2020optimal, Our_ICASSP_Paper} using gradients of parameterized convex functions.
Applications of learned gradients of convex functions also extend to gradient-based optimization. Optimization routines can incorporate a learned gradient function to define a set of iterative updates that map an input to a desired output~\cite{DOvongkulbhisal2018discriminative}.
{In particular, gradients of convex functions can be used to define gradients of regularization terms in optimization objectives, as in the regularization by denoising (RED) \cite{romano2017little, reehorst2018regularization} and plug-and-play (PnP) \cite{venkatakrishnan2013plug, kamilov2023plug} frameworks for solving inverse problems. In order to enhance interpretability and obtain convergence guarantees, recent works have designed denoisers for these frameworks that have symmetric Jacobians \cite{cohen2021has, hurault2022, gavaskar2021plug}, thereby ensuring that the denoisers are gradient functions.} 


Previously in~\cite{Our_ICASSP_Paper}, we considered learning gradients of convex functions and proposed monotone gradient networks. In this paper, we generalize the problem setting in~\cite{Our_ICASSP_Paper} to consider significantly broader function classes. We introduce new \gn\ architectures for learning gradients of these function classes, provide extensive theoretical analysis concerning \gn\ universal approximation capabilities, and validate our methods with extensive experiments.

\textbf{Contributions}: We propose gradient networks (\gns), neural networks for directly parameterizing gradients of arbitrary functions. We adapt \gns\ to monotone gradient networks (\mgns) that correspond to gradients of convex functions. Our proposed neural network architectures directly model the gradient function $\nabla F$ without first learning the underlying potential function $F$. We also analyze the expressivity of our proposed \gn\ and \mgn\ networks and prove that they are universal approximators of gradients of general and convex functions, respectively. We further describe methods for modifying these networks to represent popular subsets of these function classes,
including
gradients of sums of (convex) ridge functions and their (convexity-preserving) transformations. Our theoretical results translate into improved experimental results over existing works. In our gradient field approximation experiments, we find that our architectures achieve up to 15 dB lower mean squared error (MSE) than existing methods. Additionally, we predict the dynamics of the two-body problem and demonstrate an 11 dB improvement over existing methods.

\textbf{Paper Organization}: We first review existing methods for parameterizing and learning gradients of functions in Sec.~\ref{sec:related_work}. We then present \textit{gradient networks} (\gns) for learning gradients of arbitrary functions in Sec.~\ref{sec:gradient_networks}. In Sec.~\ref{sec:monotone_gradient_networks}, we describe \textit{monotone gradient networks} (\mgns): modified \gns\ for representing gradients of {convex} functions. We analyze the expressivity of our proposed \gns\ and \mgns\ in Sec.~\ref{sec:universal_approximation} and demonstrate methods for composing \gns\ and \mgns\ to learn (sub)gradients of of the function classes described in Fig.~\ref{fig:venn_diagram}. We present architectural modifications in Sec.~\ref{sec:enhancements} that empirically yield efficient parameterizations. Finally, in Sec.~\ref{sec:experiments}, we evaluate our proposed models on gradient field and Hamiltonian dynamics learning tasks.  

\begin{figure}[ht]
    \centering
    \includegraphics[width=0.8\columnwidth]{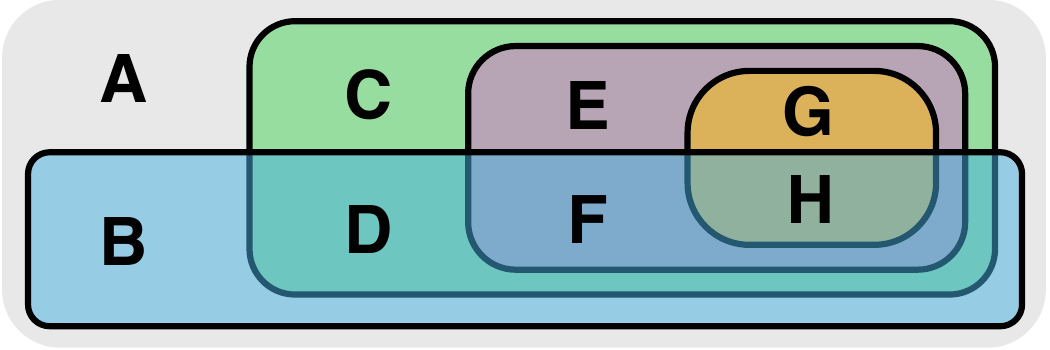}
    \caption{Relationships between relevant function classes considered in corresponding theory sections of this paper: A) all functions; B) monotone functions; Sec.~\ref{sec:monotone_gradients}: C) gradients, D) monotone gradients; Sec.~\ref{sec:transformations}: E) gradients of transformed sums of ridges, F) gradients of transformed sums of \textit{convex} ridges; Sec.~\ref{sec:ridges}: G) gradients of sums of ridges, H) gradients of sums of \textit{convex} ridges; Sec.~\ref{sec:subgradients}: respective subgradients.}
    \label{fig:venn_diagram}
\end{figure}

\textbf{Notation}: 
$\bs{w}$ and $\bs{W}$ respectively denote vectors and matrices. $\nabla F(\x)$ is the gradient of a potential function $F$ at a point $\x$ and is a vector-valued function. $\H_{F}(\x)$ is the Hessian of $F$ at $\x$, the matrix of second-order partial derivatives. When a function $f$ is vector-valued, $\J_{f}(\x)$ denotes the Jacobian of $f$ at $\x$, the matrix of partial derivatives. $\bs{A} \succeq 0$ indicates that $\bs{A}$ is symmetric positive semidefinite (PSD). $\mathrm{diag}(\cdot)$ is the vector-to-diagonal-matrix operator. $C^0(\mathcal{D})$ and $C^k(\mathcal{D})$ respectively denote the space of continuous and $k$-times continuously differentiable functions that map from domain $\mathcal{D}$ to the real numbers $\R$. $C^k_\times(\mathcal{D})$ are the \textit{convex} functions in $C^k(\mathcal{D})$. 


\section{Related Work}
\label{sec:related_work}

Several existing works use standard neural networks to directly model the gradient of a potential function~\cite{fermanian2023pnp, song2020improved, andrychowicz2016learning, reehorst2018regularization}. While these methods demonstrate satisfactory empirical performance, they lack theoretical justification and are often considered heuristic approaches. It is not guaranteed, and in fact highly unlikely, that an arbitrary neural network with matching input and output dimension corresponds to the gradient of a scalar-valued function. More formally, let $\mathcal{F}$ be a set such that each function $F \in \mathcal{F}$ is the gradient of a scalar-valued function. Given an $F \in \mathcal{F}$, standard neural network architectures, e.g., multilayer perceptrons (MLPs) and convolutional neural networks (CNNs), with nonpolynomial activations, can be used to approximate $F$. However, these architectures offer no guarantee that the learned function $\hat{F}$ is itself in $\mathcal{F}$. In contrast, the methods and network architectures discussed in this paper ensure that the approximator $\hat{F}$ is always a member of $\mathcal{F}$, thereby enhancing interpretability and enabling robust theoretical guarantees in practice.



Rather than directly approximating the gradient of a function using a standard neural network, an alternative approach may be to first parameterize and learn the underlying scalar potential function using a standard neural network. Subsequently differentiating the network with respect to its input yields the desired gradient. However, closely approximating a target function $F$ with an {arbitrary approximator} $G$ does not guarantee that $\nabla G$ closely approximates $\nabla F$ \cite{rudin1964principles}. Empirical evidence in \cite{czarnecki2017sobolev} confirms this fact when the approximating functions are neural networks. Other works directly train the gradient map of a neural network that parameterizes a potential function. For instance,~\cite{hurault2022} considers the potential $F(\x) = \frac{1}{2}\|\x - f_{\btheta}(\x)\|_2^2$ with neural network $f_{\btheta} : \R^n \to \R^n$ and uses automatic differentiation to obtain a gradient network architecture $\nabla F(\x)$. Nevertheless, these approaches often exhibit ill-behaved product structures in practice \cite{GradientNetworksSingleFeature} and, due to the Runge phenomena~\cite{metz2021gradients}, optimizing their parameters becomes cumbersome as the input dimension grows~\cite{ICGN}. 



{Feedforward neural networks have also been proposed to directly parameterize gradients of specific classes of potential functions, namely potentials $F(\x)$ expressible as the sum of ridge functions: $F(\x) = \sum_{i=1}^n \psi_i(\w_i^\top \x)$. Such potentials were considered in the \textit{fields of experts} framework proposed in \cite{FieldsOfExperts}. The fields of experts framework has inspired several methods for learning image priors that effectively generalize the ``transform domain thresholding'' approach discussed in \cite{reehorst2018regularization, donoho1994ideal}. These include works like \cite{patchbasedsparsemodelstohigherorderMRFs, hammernik2018learning} that correspond to nonconvex potentials and respectively achieve commendable performance for image restoration and magnetic resonance imaging reconstruction.
More recently,~\cite{EPFL, goujon2024learning} propose neural networks with elementwise, learnable spline activations to learn gradients of potentials expressible as sums of ridge functions (class G in Fig.~\ref{fig:venn_diagram}). The networks can be modified to correspond to gradients of sums of \textit{convex} ridge functions (class H in Fig.~\ref{fig:venn_diagram}) by restricting the splines to be nondecreasing. In this paper, we consider parameterizing the activations as learnable neural networks and discuss how these networks can be adapted to represent (sub)gradients of convex potentials. Furthermore, the class of functions expressible as a sum of convex ridges, as in~\cite{EPFL, goujon2024learning}, does not include several common convex functions including
\begin{align}
    \text{Infinity Norm}: &\max \{|x_1|, |x_2|, \dots, |x_n|\}\\
    \text{Exponential Product}: &\exp\left(\sum_{i=1}^n x_i \right)
\end{align} Thus, we also use group activations to yield universal approximation of gradients of (convex) function classes that extend beyond those expressible as sums of ridges.
}

The literature on parameterizing and learning the gradients of convex functions (monotone gradient functions) includes various approaches. Input Convex Neural Networks (ICNN)~\cite{ICNN} parameterize only convex functions by requiring positive weight matrices and convex, monotonically increasing elementwise activation functions. To extract the gradient, ~\cite{chen2018optimal, makkuva2020optimal, ConvexPotentialFlows} use a two-step approach of first evaluating a convex potential parameterized by an ICNN and then differentiating the network with respect to its input (via backpropagation). While~\cite{ConvexPotentialFlows} provides theoretical justification for this approach, the ICNN's architectural restrictions result in well-documented training difficulties in practice~\cite{ICGN, korotin2021neural, sivaprasad2021curious, hoedt2024principled}. In this paper, we avoid these challenges and bypass the two-step process by directly characterizing and learning the gradient.

{Input Convex Gradient Networks (ICGNs)~\cite{ICGN} extend the approach in \cite{lorraine2019jacnet} and characterize the gradient of twice-differentiable convex functions by exploiting the symmetric positive semidefinite structure of their Hessians. ICGNs specifically parameterize a Gram factor of the Hessian. The gradient map is then obtained by numerically integrating the learned Hessian. However, as noted in \cite{ICGN}, the only known architecture suitable for parameterizing the Gram factor is $\sigma(\W\x + \b)$, for which numerical integration is not required and serves only as a proof of concept. {These architectures, for which numerical integration is not required, parameterize gradients of sums of convex ridges (class H in Fig.~\ref{fig:venn_diagram}). Meanwhile, in this work we consider parameterizing functions for several other larger classes of functions, including transformations of sums of (convex) ridges, differentiable convex functions, and $L$-smooth nonconvex functions.} Additionally, the existence of deeper networks suitable for parameterizing the Gram factor of the ICGN is conjectured in \cite{ICGN}, but have not been identified to date. Numerical integration for deeper networks would also introduce computational challenges in high dimensions (e.g., error accumulation, convergence)~\cite{numerical_integration}.}  


\section{Gradient Networks (\gn)}
\label{sec:gradient_networks}

In this section, we introduce \textit{gradient networks} (\gns) for parameterizing and learning gradients of continuously differentiable functions. We first state sufficient conditions for a neural network to be a \gn\ and then discuss various approaches for constructing \gns.
\begin{definition}(\gn)
    A gradient network ({\gn}) is a neural network $f$ satisfying $f = \nabla F$ for some $F \in C^1\left(\R^d\right)$.
\label{def:gn}
\end{definition}
By Def.~\ref{def:gn}, differentiating a neural network that parameterizes a function $F \in C^1(\R^d)$ yields a \gn. However, due to the discussion in Sec.~\ref{sec:related_work}, we focus on directly modeling and learning $\nabla F$ (without first learning $F$). We now consider methods for parameterizing \gns. In the case where a \gn\ parameterizes the gradient of a \textit{twice} continuously differentiable function, {we use the following known result}.
{{\begin{lemma}[Antiderivatives and Symmetric Jacobians]
     A differentiable function $f : \R^d\to \R^d$ has a scalar-valued antiderivative if and only if its Jacobian is symmetric everywhere, i.e., $\forall \x \in \R^d\;\J_{f}(\x) = \J_{f}(\x)^\top$.
     \label{lemma:gn_jac_sym}
 \end{lemma}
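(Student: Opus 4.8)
The plan is to prove the biconditional by treating the two implications separately. For the forward direction (antiderivative implies symmetric Jacobian), I would suppose $f = \nabla F$ and observe that the Jacobian of $f$ is then precisely the Hessian of $F$, whose $(i,j)$ entry is $\partial^2 F / \partial x_j \partial x_i$. Symmetry is then immediate from Clairaut's (Schwarz's) theorem on the equality of mixed partial derivatives: differentiability of $f$ guarantees that the second-order partials of $F$ exist, and under the continuity hypotheses in force they agree, so $\J_f(\x) = \J_f(\x)^\top$. This direction is essentially a one-line invocation of a classical result.

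For the reverse direction (symmetric Jacobian implies existence of an antiderivative), I would give a constructive proof by exhibiting $F$ explicitly, exploiting that the domain $\R^d$ is star-shaped about the origin. I would define $F$ as the line integral of $f$ along the straight segment from $\0$ to $\x$, namely $F(\x) = \int_0^1 f(t\x)^\top \x \, dt$, and then verify $\nabla F = f$ by computing $\partial F / \partial x_j$ directly. Differentiating under the integral sign yields $\partial F / \partial x_j = \int_0^1 \bigl[ f_j(t\x) + t \sum_i x_i\, \partial_j f_i(t\x) \bigr]\, dt$.

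Here is where the hypothesis enters. The symmetry assumption $\partial_j f_i = \partial_i f_j$ lets me rewrite the second term as $t \sum_i x_i\, \partial_i f_j(t\x) = t \tfrac{d}{dt} f_j(t\x)$ by the chain rule. The integrand then collapses to the total derivative $\tfrac{d}{dt}\bigl[ t\, f_j(t\x) \bigr]$, and the fundamental theorem of calculus gives $\partial F / \partial x_j = \bigl[ t\, f_j(t\x) \bigr]_0^1 = f_j(\x)$, for every $j$, which is exactly $\nabla F = f$.

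The main obstacle, and the only place requiring genuine care, is justifying the interchange of differentiation and integration (the Leibniz rule), which is what pins down the smoothness assumptions: I expect to need $f$ continuously differentiable so that the integrand and its $x_j$-derivative are jointly continuous on the compact parameter segment, guaranteeing the interchange is valid. An alternative, coordinate-free route is to note that symmetry of the Jacobian is exactly the statement that the $1$-form $\omega = \sum_i f_i\, dx_i$ is closed, and then invoke the Poincar\'e lemma on the contractible domain $\R^d$ to conclude $\omega$ is exact, i.e., $\omega = dF$; but the explicit line-integral construction above is more self-contained and makes the role of the symmetry hypothesis fully transparent.
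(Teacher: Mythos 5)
Your proposal takes essentially the same route as the paper: the forward direction is the classical symmetry-of-second-derivatives theorem (credited to Clairaut, Schwarz, and Young, which the paper invokes by citation), and your explicit line-integral construction $F(\x) = \int_0^1 f(t\x)^\top \x\, dt$ on the star-shaped domain $\R^d$ is precisely the standard proof of the Poincar\'e-lemma-based Prop.~1 of~\cite{ICGN} that the paper cites for the converse. Your caveat about the Leibniz interchange is apt — the lemma assumes only differentiability of $f$, while both your computation and the cited references effectively operate under a $C^1$ hypothesis (the merely differentiable case requires the Young form of the symmetry theorem and a Goursat-style path-independence argument) — but since the paper's own proof-by-citation carries the same implicit assumption, this does not distinguish your argument from theirs.
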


The sufficient condition in Lemma~\ref{lemma:gn_jac_sym} follows from the theorem of symmetric second derivatives credited to Clairaut, Schwarz, Young, and others \cite{tao_analysis2}. The necessary condition follows from a slight modification of Prop.~1 in~\cite{ICGN} to consider all functions with symmetric Jacobians.}
Thus, a neural network is a \gn\ if its Jacobian with respect to its input is everywhere symmetric. We use Lemma~\ref{lemma:gn_jac_sym} to provide a neural \gn\ parameterization in Prop.~\ref{prop:single_layer_gn} below.
}
\begin{prop}
    The neural network
    \begin{equation}
    \bs{W}^{\top}\sigma(\bs{W}\x + \bs{a}) + \bs{b}
        \label{eq:single_layer_gn}
    \end{equation}
   is a \gn\ if there exists $\psi \in C^1(\R^m)$ such that $\sigma = \nabla \psi$. In particular, such a $\psi$ exists if activation $\sigma:\R^m \to \R^m$ is differentiable and its Jacobian $\J_{\sigma}$ is everywhere symmetric.
   \label{prop:single_layer_gn}
\end{prop}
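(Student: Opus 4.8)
The plan is to prove the statement by \emph{construction}: rather than verifying a symmetry condition, I would simply exhibit the potential function whose gradient is the given network. First I would define the candidate potential
\begin{equation}
    F(\x) = \psi(\W\x + \a) + \b^\top \x,
\end{equation}
where $\psi \in C^1(\R^m)$ is the scalar-valued function supplied by the hypothesis $\sigma = \nabla \psi$. This guess is the natural ``antiderivative'' of the network: the outer $\W^\top$ and inner $\W$ are exactly what the chain rule produces when differentiating a potential composed with the affine pre-activation, and the constant offset $\b$ is the gradient of a linear term.

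Next I would differentiate. The affine map $\x \mapsto \W\x + \a$ has constant Jacobian $\W$, so the chain rule gives $\nabla_{\x}\,\psi(\W\x+\a) = \W^\top \nabla\psi(\W\x+\a) = \W^\top \sigma(\W\x+\a)$, while the linear term contributes $\nabla_{\x}(\b^\top\x) = \b$. Hence
\begin{equation}
    \nabla F(\x) = \W^\top \sigma(\W\x + \a) + \b,
\end{equation}
which is precisely the network in \eqref{eq:single_layer_gn}. To conclude that this network is a \gn\ in the sense of Def.~\ref{def:gn}, I would then check regularity: since $\psi \in C^1(\R^m)$ and the affine map is smooth, their composition is $C^1$, and the linear term is smooth, so $F \in C^1(\R^d)$ and therefore $\nabla F$ is a \gn. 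For the ``in particular'' clause, I would invoke Lemma~\ref{lemma:gn_jac_sym} with $\sigma : \R^m \to \R^m$ playing the role of $f$: because $\sigma$ is differentiable with everywhere-symmetric Jacobian $\J_{\sigma}$, the lemma furnishes a scalar-valued antiderivative $\psi$ with $\sigma = \nabla\psi$, and continuity of $\sigma$ forces $\nabla\psi$ continuous, giving $\psi \in C^1(\R^m)$ as the first clause requires.

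I expect no genuine analytical obstacle here; the core argument is a single chain-rule computation once the potential has been guessed. The only points needing care are bookkeeping ones: confirming that the constructed $F$ truly lands in $C^1(\R^d)$ (so that Def.~\ref{def:gn} is satisfied, not merely that a formal gradient exists), and tracking the dimensions of $\W$, $\a$, and $\b$ so that the transpose $\W^\top$ emerges in the correct position after differentiation. As a consistency check one could instead verify directly that the Jacobian of the network, $\W^\top \J_{\sigma}(\W\x+\a)\,\W$, is symmetric whenever $\J_{\sigma}$ is, and then appeal to Lemma~\ref{lemma:gn_jac_sym}; but the explicit potential $F$ is preferable since it yields the stronger, constructive conclusion $f = \nabla F$ directly. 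The substantive content is effectively deferred to Lemma~\ref{lemma:gn_jac_sym}, which is where the existence of the antiderivative is established.
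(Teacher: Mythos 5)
Your proposal is correct and matches the paper's proof: both exhibit the potential $F(\x) = \psi(\W\x + \a) + \b^\top\x$ and differentiate via the chain rule, then invoke Lemma~\ref{lemma:gn_jac_sym} for the ``in particular'' clause. Your version merely spells out the chain-rule computation and the $C^1$ regularity bookkeeping that the paper leaves implicit.
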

\begin{proof}
\eqref{eq:single_layer_gn} is the gradient of $\psi(\bs{W}\x + \bs{a}) + \bs{b}^\top\x$. If $\sigma$ is additionally differentiable with $\J_{\sigma}$ everywhere symmetric then, by Lemma~\ref{lemma:gn_jac_sym}, there exists $\psi \in C^2(\R^d)$ such that $\sigma = \nabla \psi$.
\end{proof}

Prop.~\ref{prop:single_layer_gn} states that~\eqref{eq:single_layer_gn} is a \gn\ if the activation function $\sigma$ has an antiderivative. It hence permits the use of \textit{group} activations such as softmax. Furthermore, to guarantee existence of an antiderivative, it is sufficient, but not necessary, that $\sigma$ is differentiable and its Jacobian is symmetric everywhere. For example, the elementwise activation $\mathrm{ReLU}(\x) = \max(0, \x)$ is continuous and nondifferentiable, but it is the gradient of $\sum_{i}\max(0, \frac{1}{2}x_i^2)$.
{Architectures of the form in \eqref{eq:single_layer_gn}, with a continuous elementwise activation function $\sigma(\x) = \begin{bmatrix}
    \sigma_1(x_1)\ \dots\ \sigma_m(x_m) 
\end{bmatrix}^\top$, include ``transform domain thresholding'' methods \cite{reehorst2018regularization, donoho1994ideal} where the activation is a continuous thresholding operator. {Similar architectures appear in \cite{goujon2024learning}, which uses linear spline activations, and in \cite{hammernik2018learning}, which parameterizes each $\sigma_i$ as a linear combination of Gaussian radial basis functions}. An alternative approach is to use neural networks to parameterize the elementwise activation functions. 
\begin{remark}
    \eqref{eq:single_layer_gn} is a \gn\ if $\sigma(\x)$ is an elementwise activation where each $\sigma_i$ is a neural network in $C^1(\R)$.
    \label{rem:gn_nn_activations}
\end{remark}}

{ We later prove in Sec.~\ref{sec:ridges} that the network in Rem.~\ref{rem:gn_nn_activations} can approximate the gradient of any differentiable function that is the sum of ridge functions. In addition, if we restrict the domain of the \gn\ to be compact, then it is sufficient for~\eqref{eq:single_layer_gn} to have continuous elementwise activation $\sigma$, since all continuous $\sigma_i$ are integrable on compact subsets of $\R$.} Lastly, by linearity of the gradient operator, the networks in \eqref{eq:single_layer_gn}, and other \gn\ parameterizations, can be linearly combined to yield another \gn.
{\begin{remark}
    Linear combinations of \gns\ are also \gns.
    \label{rem:gn_linear_combinations}
\end{remark}
}

This section established methods for designing \gns: neural networks guaranteed to correspond to gradients of continuously differentiable functions. It introduced a specific \gn\ architecture, which we analyze in Sec.~\ref{sec:universal_approximation}, and serves as a foundation for designing other \gn\ architectures like \textit{monotone} gradient networks in the following section. 


\section{Monotone Gradient Networks (\mgn)}
\label{sec:monotone_gradient_networks}


Monotone gradient networks (\mgns) are neural networks guaranteed to represent gradients of convex functions and are a subclass of gradient networks (\gn) \cite{Our_ICASSP_Paper}. This section introduces \mgns\ in a manner similar to the presentation of \gns\ in Sec.~\ref{sec:gradient_networks}. We first define \mgns\ and then discuss examples and their properties. 

\begin{definition}(\mgn)
    A monotone gradient network ({\mgn}) is a neural network $f$ satisfying $f = \nabla F$ for some convex $F \in C_{\times}^1\left(\R^d\right)$.
    \label{def:mgn}
\end{definition} 

In Sec.~\ref{sec:subgradients}, we extend Def.~\ref{def:mgn} to accommodate subgradients of non-differentiable convex functions. Next, we recall that a differentiable function is convex if and only if its gradient is monotone \cite{rockafellar2009variational}. Therefore, \mgns\ are guaranteed to be monotone functions. 
\begin{definition}[Monotonicity]
    \label{def:monotone}
    $f : \R^d \to \R^d$ is monotone if:
    \begin{align}
        \forall\x,\y \in \R^d,\;(f(\x) - f(\y))^\top(\x - \y) \geq 0
        \label{eq:monotone}
    \end{align}
\end{definition}
{
It is generally challenging to design an \mgn\ satisfying~\eqref{eq:monotone} for all possible pairs of inputs. Instead, it is more tractable to rely on single-input characterizations of monotonicity.
First, we provide an approach to parameterize gradients of convex functions using \gns\ (from Sec.~\ref{sec:gradient_networks}) and known Lipschitz regularity techniques.
\begin{remark}
    Let $L>0$ and $f$ be a \gn\ that is $L$-Lipschitz, i.e., $\forall \x,\y\in \R^d,\; \|f(\x)-f(\y)\|\leq L \|\x-\y\|$. Then $g(\x) = L\x - f(\x)$ is an \mgn.
    \label{rem:gn_as_mgn}
\end{remark}
{Rem.~\ref{rem:gn_as_mgn} follows from Def.~\ref{def:gn} and Remark~2.2 in~\cite{goujon2024learning}, which implies monotonicity of $g$. It demonstrates how to construct an \mgn\ using a \gn\ with known Lipschitz constant. Rem.~\ref{rem:gn_as_mgn} relates to several Regularization by Denoising (RED) works that design learnable functions of the form $\x - f(\x)$ where $f(\x)$ is constrained to be nonexpansive~\cite{ryu2019plug, terris2020building}. These methods differ from our approach as they do not guarantee that the Jacobian of $\x - f(\x)$ is symmetric.
}

Next, we propose an alternative method that directly parameterizes monotone gradients and avoids Lipschitz assumptions.}
We use the fact that $F \in C^2(\R^d)$ is convex if and only if its Hessian $\H_f$ is everywhere positive semidefinite (PSD)~\cite{boyd2004convex}. Since $\bs{H}_F = \bs{J}_{\nabla F}^\top$, using a differentiable \mgn\ to parameterize a monotone $\nabla F$ requires the Jacobian of the \mgn, with respect to its input, to be PSD everywhere.
{\begin{prop}
    The neural network in~\eqref{eq:single_layer_gn} is an \mgn\ if there exists convex $\psi \in C_{\times}^1(\R^m)$ such that $\sigma = \nabla \psi$. In particular, such a $\psi$ exists if activation $\sigma:\R^m \to \R^m$ is differentiable and its Jacobian $\J_{\sigma}$ is everywhere PSD.
    \label{prop:single_layer_mgn}
\end{prop}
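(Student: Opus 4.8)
The plan is to mirror the structure of the proof of Proposition~\ref{prop:single_layer_gn}, upgrading each ingredient from the $C^1$/antiderivative setting to the convex/PSD-Hessian setting. The key observation is that \eqref{eq:single_layer_gn} is already known to be a \gn\ whenever $\sigma = \nabla\psi$, so it equals $\nabla\big(\psi(\bs{W}\x+\bs{a}) + \bs{b}^\top\x\big)$. To promote this \gn\ to an \mgn, it suffices to show that the potential $G(\x) = \psi(\bs{W}\x+\bs{a}) + \bs{b}^\top\x$ is convex in $\x$, since by Def.~\ref{def:mgn} an \mgn\ is precisely the gradient of a convex $C^1$ function.

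First I would handle the main claim. Assuming $\psi \in C_\times^1(\R^m)$ is convex, I would argue that $\x \mapsto \psi(\bs{W}\x + \bs{a})$ is convex because it is the composition of the convex function $\psi$ with an affine map $\x \mapsto \bs{W}\x + \bs{a}$, and convexity is preserved under affine precomposition (see \cite{boyd2004convex}). The linear term $\bs{b}^\top\x$ is affine, hence convex, and the sum of convex functions is convex. Therefore $G$ is convex, $G \in C_\times^1(\R^d)$, and its gradient $\nabla G$, which is exactly \eqref{eq:single_layer_gn}, is an \mgn.

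For the second sentence I would establish the sufficient condition that guarantees existence of such a convex $\psi$. Starting from the hypotheses that $\sigma$ is differentiable with $\J_\sigma$ everywhere PSD, I note that a PSD matrix is in particular symmetric, so $\J_\sigma$ is everywhere symmetric; by Lemma~\ref{lemma:gn_jac_sym} there exists $\psi \in C^2(\R^m)$ with $\sigma = \nabla\psi$. It then remains to verify that this $\psi$ is convex. Since $\psi \in C^2$, I would invoke the second-order characterization: $\psi$ is convex if and only if its Hessian $\H_\psi$ is everywhere PSD \cite{boyd2004convex}. But $\H_\psi = \J_{\nabla\psi} = \J_\sigma$, which is PSD by assumption, so $\psi \in C_\times^2(\R^m) \subset C_\times^1(\R^m)$, completing the argument.

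The routine steps here are the composition and second-order convexity facts; the only point requiring care is the clean bookkeeping of the equivalence $\H_\psi = \J_\sigma$ and the observation that PSD implies symmetric, so that Lemma~\ref{lemma:gn_jac_sym} applies before convexity is invoked. I expect the sole subtlety — minor, but worth flagging — to be ensuring the Hessian/Jacobian transpose convention matches the paper's notation ($\H_F = \J_{\nabla F}^\top$, which is immaterial since the Hessian is symmetric), so that the PSD hypothesis on $\J_\sigma$ transfers directly to $\H_\psi$ without any sign or transpose ambiguity.
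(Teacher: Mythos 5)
Your proposal is correct and takes essentially the same approach as the paper: both exhibit the potential $\psi(\bs{W}\x+\bs{a})+\bs{b}^\top\x$, observe it is convex because convexity is preserved under affine precomposition, and for the second claim combine Lemma~\ref{lemma:gn_jac_sym} with the second-order characterization of convexity via PSD Hessians. Your explicit remarks that PSD entails symmetry (so Lemma~\ref{lemma:gn_jac_sym} applies) and that the transpose convention $\H_\psi = \J_\sigma^\top$ is immaterial simply spell out details the paper leaves implicit.
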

\begin{proof}
    \eqref{eq:single_layer_gn} is the gradient of $\psi(\bs{W}\x + \bs{a}) + \bs{b}^\top\x$, which is convex since $\psi$ is convex. If $\sigma$ is additionally differentiable with $\J_{\sigma} \succeq 0$, then by Lemma~\ref{lemma:gn_jac_sym} and the fact that a twice differentiable function is convex if and only if its Hessian is PSD~\cite{boyd2004convex}, there exists $\psi \in C^2(\R^d)$ such that $\sigma = \nabla \psi$.
\end{proof}}
We highlight three key points concerning Prop.~\ref{prop:single_layer_mgn}: 1) A neural network with Jacobian that is everywhere PSD is guaranteed to be an \mgn; specifically, a \gn\ in Prop.~\ref{prop:single_layer_gn} with elementwise, monotone activation $\sigma$ is an \mgn. 2) Most popular elementwise activations, including softplus\footnote{The softplus function $\frac{1}{\beta}\log(1 + \exp(\beta x))$, with scaling factor $\beta$, is a smooth approximation of the commonly used ReLU activation.}, tanh, and sigmoid, are nondecreasing and can be used to specify $\sigma$ in Prop.~\ref{prop:single_layer_mgn}. 3) Similar to the discussion on \gns\ in Sec.~\ref{sec:gradient_networks}: if the activation $\sigma$ in Prop.~\ref{prop:single_layer_mgn} has an antiderivative, it need not be differentiable; if we consider a compact domain, continuity of an elementwise activation $\sigma$ is sufficient.

{We leverage the generality of Prop.~\ref{prop:single_layer_mgn} to propose \mgns\ with group activation $\sigma$ and show that they universally approximate gradients of all differentiable convex functions. These \mgns\ with group activations are provably more expressive than the methods in \cite{FieldsOfExperts,donoho1994ideal,patchbasedsparsemodelstohigherorderMRFs,hammernik2018learning, EPFL,goujon2024learning}. Specifically, in Sec.~\ref{sec:monotone_gradients}, we prove that there exist sequences of \mgns\ with softmax activation functions that can universally approximate gradients of differentiable convex functions. The generality of Prop.~\ref{prop:single_layer_mgn} also permits neural parameterizations of elementwise activations}.
{\begin{remark}
    \eqref{eq:single_layer_gn} is an \mgn\ if $\sigma(\x)$ is an elementwise activation where each $\sigma_i$ is an \mgn\ in $C^1(\R)$.
    \label{rem:mgn_mgn_activations}
\end{remark}
{The \mgns\ specified by Rem.~\ref{rem:mgn_mgn_activations} are more amenable for direct implementation than the spline activations proposed in \cite{EPFL, goujon2024learning}, while being provably as expressive. In particular, the spline activations in \cite{EPFL, goujon2024learning} are defined using a fixed grid and hence require careful tuning of the grid domain and a large number of knots to achieve satisfactory performance. In contrast, we observe in Sec.~\ref{sec:experiments} that \mgns\ in Rem.~\ref{rem:mgn_mgn_activations} provide more efficient parameterizations of the activations.

Similar to techniques described in \cite{mukherjee2020learned, goujon2024learning}, \mgns\ can be modified to correspond to gradients of $\mu$-strongly convex functions.}
\begin{remark}
    Let $\mu > 0$ and $f$ be an \mgn. The function $g(\x) = f(\x) + \mu\x$ is an \mgn\ corresponding to the gradient of a $\mu$-strongly convex function.
    \label{rem:strongly_monotone}
\end{remark}
As gradients of strongly convex functions are invertible, the method in Rem.~\ref{rem:strongly_monotone} parameterizes \textit{invertible} \mgns, which, for example, can be employed in normalizing flows \cite{ConvexPotentialFlows}. If an \mgn\ corresponds to the gradient of a strongly convex function $F$, then its inverse corresponds to the gradient of the Fenchel dual $F^*$ \cite{rockafellar1970convex}.
Lastly, we note that conical combinations of convex functions are also convex~\cite{boyd2004convex}. Analogous to Rem.~\ref{rem:gn_linear_combinations}, \mgns\ can be combined to produce other \mgns.
\begin{remark}
Conical combinations (linear combinations with nonnegative coefficients) of \mgns\ yield \mgns. \label{rem:mgn_conical_combinations}
\end{remark}


\section{Universal Approximation Results}
\label{sec:universal_approximation}
In this section, we analyze the expressivity of \gn\ and \mgn\ architectures respectively specified by Propositions~\ref{prop:single_layer_gn} and~\ref{prop:single_layer_mgn}.
We show that these architectures can \textit{universally approximate} various function classes. We first formally define universal approximation and a set of gradient functions, which we use throughout the section. 
\begin{definition}[Universal Approximation]
    Let $\mathcal{S}\subset\R^d$ be compact, $\mathcal{F} : \mathcal{S} \to \R^d$ be a class of continuous functions, and $\mathcal{G} : \mathcal{S} \to \R^d$ be a class of approximators. $\mathcal{G}$ universally approximates $\mathcal{F}$ if for any $f\in \mathcal{F}$, there exists a sequence of $g_n \in \mathcal{G}$ that uniformly converges to $f$.
    \label{def:universal_approx}
\end{definition}
Def.~\ref{def:universal_approx} is equivalent to $\mathcal{G}$ being dense in $\mathcal{F}$ with respect to the supremum norm.
Since $\mathcal{S}$ must be a subset of some scaled and shifted version of $[0,1]^d$, our universal approximation proofs, without loss of generality, consider the domain $[0,1]^d$.
\begin{definition}[Set of Gradient Functions]
    Let $\mathcal{F}$ be a set of differentiable functions. Then the set $\nabla \mathcal{F} = \{\nabla F : F \in \mathcal{F}\}$. 
\end{definition}
 In Sec.~\ref{sec:monotone_gradients}, we prove that the \mgn\ in~\eqref{eq:single_layer_gn} with scaled softmax activation and increasing hidden dimension can universally approximate the gradient of any convex function. We extend this result to show that the difference of two \mgns\ can universally approximate the gradient of any $L$-smooth function.

In Sec.~\ref{sec:ridges}, we analyze the impact of the activation function on the approximation capabilities of the networks. We show that \gns\ and \mgns\ with nonpolynomial activations can learn the gradient of any function expressible as the sum of ridge and convex ridge functions, respectively. In Sec.~\ref{sec:transformations}, we introduce a simple augmentation that enables our networks to universally approximate the gradient of a transformed sum of (convex) ridges. {Finally, in Sec.~\ref{sec:subgradients}, we extend \mgn\ results to subgradients of convex functions}.

\subsection{(Monotone) Gradient Functions}
\label{sec:monotone_gradients}
We start by proving that \mgns\ of the form~\eqref{eq:single_layer_gn} can universally approximate monotone gradients of convex functions. The proof uses the following lemma: differentiating {convex} approximators of a convex potential yields {monotone} approximators of the monotone gradient of the potential. 
\begin{lemma}[Convex Function and Monotone Gradient Approximation: \cite{rockafellar1970convex} Theorem 25.7]
Let $\mathcal{S} \subseteq \R^d$ be open, convex and let $F\in C^1_\times(\mathcal{S})$ be finite. Let $G_i \in C^1_\times(\mathcal{S})$ be a sequence of finite functions such that $\forall \x \in \mathcal{S},~\lim_{i \to \infty} G_i(\x) = F(\x)$. Then $\forall \x \in \mathcal{S},\lim_{i \to \infty} \nabla G_i(\x) = \nabla F(\x)$. The sequence $\nabla G_i$ converges uniformly to $\nabla F$ on every compact subset of $\mathcal{S}$.
\label{lemma:rockafeller}
\end{lemma}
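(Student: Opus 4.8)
The plan is to reduce everything to the single defining subgradient inequality for convex functions and then exploit compactness, proceeding in three stages. The first stage promotes the hypothesized pointwise convergence $G_i \to F$ to uniform convergence on every compact $K \subset \mathcal{S}$. Finite convex functions on an open convex set are locally Lipschitz, and because the $G_i$ converge pointwise they are locally uniformly bounded; the standard local-Lipschitz estimates for convex functions then furnish a Lipschitz constant for the entire family $\{G_i\}$ that is uniform on a compact neighborhood of $K$. This simultaneously yields (i) uniform convergence $G_i \to F$ on $K$ and (ii) a uniform gradient bound $\sup_i \sup_{\x \in K} \|\nabla G_i(\x)\| \le M_K < \infty$, since the gradient of an $L$-Lipschitz differentiable function is bounded by $L$.

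The second stage establishes pointwise convergence of the gradients. I would fix $\x \in \mathcal{S}$. Since each $G_i$ is differentiable and convex, $\nabla G_i(\x)$ is its unique subgradient, so $G_i(\y) \ge G_i(\x) + \nabla G_i(\x)^\top(\y - \x)$ for all $\y$. The sequence $\nabla G_i(\x)$ is bounded by the first stage, so any subsequence has a convergent sub-subsequence with some limit $\bs{g}$; passing to the limit in the inequality (using $G_i \to F$ pointwise) gives $F(\y) \ge F(\x) + \bs{g}^\top(\y - \x)$ for all $\y$, i.e.\ $\bs{g} \in \partial F(\x)$. Because $F \in C^1_\times(\mathcal{S})$, we have $\partial F(\x) = \{\nabla F(\x)\}$, forcing $\bs{g} = \nabla F(\x)$. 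Every convergent subsequence of the bounded sequence thus shares the single limit $\nabla F(\x)$, so $\nabla G_i(\x) \to \nabla F(\x)$.

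The third stage upgrades to uniform convergence on a compact $K$ by contradiction. If it failed, there would be $\epsilon > 0$, indices, and points $\x_i \in K$ with $\|\nabla G_i(\x_i) - \nabla F(\x_i)\| \ge \epsilon$; by compactness I pass to $\x_i \to \x^\ast \in K$, and by continuity of $\nabla F$ obtain $\nabla F(\x_i) \to \nabla F(\x^\ast)$. I then repeat the second-stage argument at the \emph{moving} points: the $\nabla G_i(\x_i)$ are uniformly bounded, and any sub-subsequential limit $\bs{g}$ satisfies, after taking limits in $G_i(\y) \ge G_i(\x_i) + \nabla G_i(\x_i)^\top(\y - \x_i)$ (legitimate because $G_i \to F$ uniformly on a compact neighborhood containing all $\x_i$ and $\x_i \to \x^\ast$), the inequality $F(\y) \ge F(\x^\ast) + \bs{g}^\top(\y - \x^\ast)$; hence $\bs{g} = \nabla F(\x^\ast)$, so $\nabla G_i(\x_i) \to \nabla F(\x^\ast) = \lim_i \nabla F(\x_i)$, contradicting the $\epsilon$-gap.

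The main obstacle, and the only place genuine convex-analysis machinery enters, is the first stage: extracting a family-uniform local Lipschitz bound from mere pointwise convergence, which is precisely what makes the gradients uniformly bounded and what justifies every subsequent limit taken inside a subgradient inequality. Everything downstream is then a soft compactness-and-limit argument, and the differentiability hypothesis $F \in C^1_\times(\mathcal{S})$ is exactly what collapses $\partial F$ to a singleton and thereby pins down each limit.
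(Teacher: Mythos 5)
Your proof is correct. There is, however, nothing internal to compare it against: the paper does not prove this lemma, but imports it verbatim as Theorem~25.7 of Rockafellar's \emph{Convex Analysis}~\cite{rockafellar1970convex}. What you have produced is a self-contained reconstruction along essentially the classical lines, and it decomposes exactly as Rockafellar's treatment does. Your first stage---upgrading pointwise convergence of finite convex functions on an open set to locally uniform convergence with a family-uniform local Lipschitz bound---is the content of Rockafellar's Theorem~10.8, and you correctly identify it as the only place genuine convex-analysis machinery enters; the terse claim that pointwise convergence gives local uniform boundedness is true but deserves a line in a full write-up (upper bounds propagate from the finitely many vertices of a simplex containing $K$ by convexity; lower bounds then follow by reflecting through an interior point, $G_i(\x) \geq 2G_i(\bar{\x}) - G_i(2\bar{\x} - \x)$), after which the standard estimate bounds the Lipschitz constant by the oscillation over a slightly larger ball, and equi-Lipschitz continuity plus pointwise convergence yields uniform convergence on compacta. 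Your second and third stages---extracting subsequential limits of the uniformly bounded gradients, passing to the limit inside the subgradient inequality, and collapsing the limit via $\partial F(\x) = \{\nabla F(\x)\}$---re-derive, in the differentiable case, the content of Rockafellar's Theorem~24.5 on subdifferential convergence, which the paper separately invokes as Lemma~\ref{lemma:rockafeller_subgradient} in Sec.~\ref{sec:subgradients}; Rockafellar's own proof of Theorem~25.7 deduces pointwise gradient convergence from that result in one line. Two details you handle correctly and should keep explicit: the moving-point limit in stage three requires $G_i(\x_i) \to F(\x^\ast)$, which needs the stage-one \emph{uniform} convergence together with continuity of $F$ (pointwise convergence alone would not suffice), and the continuity of $\nabla F$ used there is available because the lemma hypothesizes $F \in C^1_\times(\mathcal{S})$ (for convex $F$ it would in any case follow from mere everywhere-differentiability, Rockafellar Theorem~25.5).
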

Lemma~\ref{lemma:rockafeller} is specific to convex functions and generally does not extend to arbitrary functions~\cite{rudin1964principles}. We prove that \mgns\ can universally approximate monotone gradients by first constructing a sequence of convex functions that approximates any continuous convex function, and then applying Lemma~\ref{lemma:rockafeller}. To construct the sequence, we use the class of scaled LogSumExp (LSE) functions with scaling factor $t > 0$:
\begin{equation}
    \mathrm{LSE}_t(\mathcal{U}) = \frac{1}{t}\log\left(\sum_{u \in \mathcal{U}} \exp(tu)\right)
    \label{eq:lse}
\end{equation}
{We first present a result of independent interest in Lemma~\ref{lemma:lse_approx} below, where we derive an upper bound on the approximation error incurred when using the scaled LSE of affine functions to approximate continuous convex functions. 

\begin{lemma}[Bound on Convex Function Approximation by LSE]
    Let convex $F \in C^{0}_{\times}\left([0,1]^d\right)$ and $\epsilon > 0$. There exist scaling factor $t > 0$ and parameters $\{\w_{i}, b_i\}_{i=1}^n$, where $n = (2^m -1)^d$ and $m>0$ depend on $F$, such that the scaled LSE of affine functions $G(\x)=\mathrm{LSE}_t(\{\w_1^\top \x + b_1, \dots, \w_n^\top \x + b_n\})$ satisfies
    \begin{equation}
        \sup_{\x \in [0,1]^d} |F(\x) - G(\x)| < (d+1)\epsilon + \frac{\log n}{t} 
        \label{eqn:lse_error_bound}
    \end{equation}
    \label{lemma:lse_approx}
\end{lemma}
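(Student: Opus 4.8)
The plan is to reduce the claim to approximating $F$ from below by a maximum of affine minorants and then invoke the elementary sandwiching property of LSE: for any finite set $\{a_1,\dots,a_n\}$ one has $\max_i a_i \le \mathrm{LSE}_t(\{a_i\}) \le \max_i a_i + \frac{\log n}{t}$. Hence if I can produce affine functions $\ell_i(\x) = \w_i^\top \x + b_i$ with $P(\x) := \max_i \ell_i(\x) \le F(\x)$ everywhere and $\sup_{\x} (F(\x)-P(\x)) \le (d+1)\epsilon$, then $G = \mathrm{LSE}_t(\{\ell_i\})$ obeys $P \le G \le P + \frac{\log n}{t}$, giving $G - F \le \frac{\log n}{t}$ and $F - G \le F - P \le (d+1)\epsilon$, which is exactly \eqref{eqn:lse_error_bound}. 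Note that $t$ is irrelevant to constructing the $\ell_i$ and only controls the $\frac{\log n}{t}$ term.

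To build the minorant $P$, I would use supporting hyperplanes on a dyadic grid. Since $F$ is continuous on the compact set $[0,1]^d$ it is uniformly continuous, so I can pick $m$ with spacing $h = 2^{-m}$ such that $\|\x-\y\|_\infty \le h \Rightarrow |F(\x)-F(\y)| \le \epsilon$. I then take the $(2^m-1)^d$ \emph{interior} grid points $\x_{\bs j}$, $\bs j \in \{1,\dots,2^m-1\}^d$ (coordinates $j_k h$), which fixes $n = (2^m-1)^d$. At each interior point a subgradient $\bs{g}_{\bs j} \in \partial F(\x_{\bs j})$ exists, and $\ell_{\bs j}(\x) = F(\x_{\bs j}) + \bs{g}_{\bs j}^\top(\x - \x_{\bs j})$ is a global minorant of $F$ by convexity. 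Thus $P = \max_{\bs j} \ell_{\bs j} \le F$ for free, and the only remaining work is the upper bound on $F - P$.

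The heart of the argument, and the main obstacle, is establishing $\sup_{\x}(F-P) \le (d+1)\epsilon$: the difficulty is that subgradients of a merely continuous convex function can blow up near the boundary, so a naive ``subgradient at the nearest grid point'' estimate fails. I would resolve this by assigning to each $\x$ the clamped index $j_k = \min(\max(\lfloor x_k/h\rfloor, 1), 2^m-1)$, which guarantees $\x \in B_{\bs j} := [\x_{\bs j}-h\ones,\ \x_{\bs j}+h\ones] \subseteq [0,1]^d$ with every vertex of $B_{\bs j}$ a valid point of the cube. Since $F - \ell_{\bs j}$ is convex and nonnegative on $B_{\bs j}$, its maximum there is attained at a vertex $\v = \x_{\bs j} + h\bs{s}$ with $\bs{s}\in\{-1,+1\}^d$. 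At such a vertex $F(\v)-\ell_{\bs j}(\v) = [F(\v)-F(\x_{\bs j})] - h\,\bs{g}_{\bs j}^\top\bs{s}$; the bracket is $\le \epsilon$ by uniform continuity, while each coordinate term $-h s_k (\bs{g}_{\bs j})_k$ is bounded by $\epsilon$ by applying the subgradient inequality at the neighbor $\x_{\bs j} \mp h\bs{e}_k$ (which lies in $[0,1]^d$ precisely because $1 \le j_k \le 2^m-1$) combined with uniform continuity. Summing the $d$ coordinate terms yields $F(\v)-\ell_{\bs j}(\v) \le \epsilon + d\epsilon = (d+1)\epsilon$, hence $F(\x)-P(\x) \le (d+1)\epsilon$ for all $\x$.

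Finally I would assemble the pieces, combining $F - P \le (d+1)\epsilon$ with the LSE sandwich to obtain the stated bound. The subtle point to get right is the interplay between excluding the outer grid layer (both to keep subgradients controllable and to realize the exact count $(2^m-1)^d$) and still covering every $\x \in [0,1]^d$ by a box $B_{\bs j}$ on which the vertex-maximum estimate applies; clamping the multi-index is exactly what makes both hold at once.
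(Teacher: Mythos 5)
Your proposal is correct, and its skeleton matches the paper's proof exactly: supporting hyperplanes at the $(2^m-1)^d$ interior dyadic grid points, the resulting max-of-affine approximant, and the elementary gap $\max(\mathcal{U}) \le \lse_t(\mathcal{U}) \le \max(\mathcal{U}) + \frac{1}{t}\log|\mathcal{U}|$. The genuine difference lies at the heart of the argument: the paper does not prove the key estimate $\sup_{\x}\left|F(\x) - \max_{\y} L_{\y}(\x)\right| < (d+1)\epsilon$ internally, but cites it (Appendix C, Prop.~2 of \cite{ConvexPotentialFlows}), whereas you supply a complete, self-contained proof of it — via the clamped multi-index guaranteeing $\x \in B_{\bs{j}} \subseteq [0,1]^d$, the observation that the convex function $F - \ell_{\bs{j}}$ attains its box maximum at a vertex, and the subgradient inequality tested at the axis neighbors $\x_{\bs{j}} \mp h\bs{e}_k$ (all valid points precisely because the outer grid layer is excluded) to bound $-h\,\bs{g}_{\bs{j}}^\top\bs{s}$ by $d\epsilon$. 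This is exactly the mechanism that tames the possible blowup of subgradients near the boundary and explains the count $n=(2^m-1)^d$, a point the paper leaves buried in the citation. Two minor stylistic differences: you exploit the one-sided sandwich ($P \le F$ together with $P \le G \le P + \frac{\log n}{t}$), which actually yields the slightly sharper $|F-G| \le \max\{(d+1)\epsilon,\, \frac{\log n}{t}\}$ before relaxing to the stated sum, while the paper uses a two-term triangle inequality; and you use the $\ell_\infty$ modulus of uniform continuity rather than the Euclidean one, which is immaterial. What your route buys is a citation-free, fully elementary proof of the lemma (including the cited proposition itself); what the paper's buys is brevity.
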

\begin{proof}
    Let $\epsilon > 0$ and $F\in C_{\times}^0([0,1]^d)$.
    By the Heine-Cantor Theorem~\cite{Heine_Cantor}, $F$ is uniformly continuous on $[0,1]^d$, meaning $\exists \delta > 0$ such that $\forall \x,\y \in [0,1]^d$, $\|\x-\y\| < \delta \implies |F(\x)-F(\y)| < \epsilon$.
    We select $m \in \mathbb{N}$ such that $2^{-m} < \delta$ and define $\mathcal{X}$ as the set of points with coordinates lying in $\{i2^{-m} : 1 \leq i \leq 2^m -1\}$. This means $n = |\mathcal{X}| = (2^m -1)^d$.
    For each $\y \in \mathcal{X}$, let $L_{\y}(\x) = \v^\top (\x - \y) + F(\y)$ be a supporting hyperplane of $F$ at $\y$,
    where $\v$ is a subgradient of $F$ at $\y$, i.e., $\v^\top (\x - \y) \leq F(\x) - F(\y)$. 
With $\x\in[0,1]^d$, $t>0$, and  $\mathcal{U} = \{L_{\y}(\x): \y \in \mathcal{X}\}$, the  triangle inequality implies
\begin{align}
    \left| F(\x) - \lse_t\left(\mathcal{U}\right) \right|
    &\leq \left| F(\x) - \max_{y\in\mathcal{X}} L_{\y}(\x)\right| + \left|\max_{y\in\mathcal{X}} L_{\y}(\x) - \lse_t\left(\mathcal{U}\right) \right| \label{eqn:max_triangle} \\
    &< (d+1)\epsilon + \frac{d}{t}\log(2^m-1)\\
    &= (d+1)\epsilon + \frac{\log n}{t} \label{eqn:max_bound}
\end{align}
The first term of~\eqref{eqn:max_bound} uses Appendix C Prop.~2 of~\cite{ConvexPotentialFlows} and the second term uses
$\lse_t(\mathcal{U}) < \max(\mathcal{U}) + \frac{1}{t}\log(|\mathcal{U}|)$~\cite{calafiore_LSE}.
\end{proof} 
In Lemma~\ref{lemma:lse_approx}, choosing small $\epsilon$ and large scaling factor $t$ corresponds to closely approximating a convex function $F$ with a scaled LSE of affine functions. In the bound~\eqref{eqn:lse_error_bound}, $n$ is the number of affine functions and is equivalent to the number of hidden neurons in the neural networks we propose and analyze in Thm.~\ref{thm:universal_approximation}. The number of neurons $n$ depends on $\epsilon$ and uniform continuity properties of $F$.

To illustrate the utility of the error bound, we consider the case where the convex function $F$ is $L$-Lipschitz on $[0,1]^d$. This encompasses a wide range of functions, including mean squared error and polynomial functions. The variables $\delta$ and $\epsilon$ in the proof of Lemma~\ref{lemma:lse_approx} can then be related by taking $\delta < \epsilon / L$, leading to the following bound on the number of neurons $n$: $n > \left(\frac{L}{\epsilon} - 1\right)^d$. The bound on $n$ indicates the rate at which the number of neurons must increase when either the input dimension increases, the Lipschitz constant of $F$ increases, or we demand closer approximations ($\epsilon$ decreases). To achieve a desired approximation error, the scaling factor $t$ should be increased logarithmically with respect to $n$.
If we wish to approximate $F$ with the sum of $k$ identical $\mathrm{LSE}_t$ functions, then each $\mathrm{LSE}_t$ function must approximate $\frac{1}{k} F$, which is $\frac{L}{k}$-Lipschitz and $n > \left(\frac{L}{k\epsilon} - 1\right)^d$. Approximating $F$ as a sum of identical functions, each being a scaled LSE of affine functions, thus allows each individual $LSE_t$ to use fewer affine functions. This observation motivates our design of \mmgns\ in Sec.~\ref{sec:mmgn}.

}

{Before leveraging Lemma~\ref{lemma:rockafeller} to show universal approximation results for \mgns, we first include the following result from \cite{calafiore_LSE}, which states that scaled LSE functions universally approximate convex functions.
\begin{lemma}[Scaled LSE as Universal Approximator of Convex Functions: \cite{calafiore_LSE} Theorem 2]
Let $\x \in \R^d$ and $\mathcal{G}$ be scaled LSE of affine functions $\mathrm{LSE}_t(\{\w_1^\top \x + b_1, \dots, \w_n^\top \x + b_n\})$ with scaling factor $t > 0$. $\mathcal{G}$ universally approximates $C_{\times}^0([0,1]^d)$. 
\label{lemma:lse_dense}
\end{lemma}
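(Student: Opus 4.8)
The plan is to obtain Lemma~\ref{lemma:lse_dense} as an almost immediate corollary of the quantitative bound already established in Lemma~\ref{lemma:lse_approx}. By Def.~\ref{def:universal_approx}, it suffices to show that for an arbitrary $F \in C^0_\times([0,1]^d)$ there is a sequence $G_k$ of scaled LSE of affine functions converging to $F$ uniformly on $[0,1]^d$. I would fix such an $F$ and produce, for each target tolerance $\eta_k = 1/k$, a single member $G_k \in \mathcal{G}$ satisfying $\sup_{\x \in [0,1]^d}|F(\x) - G_k(\x)| < \eta_k$; letting $k \to \infty$ then yields the desired uniformly convergent sequence, and since $F$ is arbitrary this is exactly density of $\mathcal{G}$ in $C^0_\times([0,1]^d)$ under the supremum norm.

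The heart of the argument is the decoupling of the two error terms in the bound~\eqref{eqn:lse_error_bound}. Given $\eta_k$, I would first choose $\epsilon_k > 0$ small enough that $(d+1)\epsilon_k < \eta_k/2$. Applying Lemma~\ref{lemma:lse_approx} with this $\epsilon_k$ fixes an integer $n_k = (2^{m_k}-1)^d$ and affine functions $\{\w_i^\top \x + b_i\}_{i=1}^{n_k}$, where $m_k$ is governed purely by the modulus of uniform continuity of $F$ (via $2^{-m_k} < \delta_k$). Crucially, $n_k$ depends only on $\epsilon_k$ and not on the scaling factor $t$. I would then choose $t_k$ large enough that $\frac{\log n_k}{t_k} < \eta_k/2$, which is possible precisely because $n_k$ is already pinned down. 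Setting $G_k(\x) = \lse_{t_k}(\{\w_1^\top\x + b_1, \dots, \w_{n_k}^\top\x + b_{n_k}\})$, Lemma~\ref{lemma:lse_approx} then gives $\sup_{\x}|F(\x)-G_k(\x)| < (d+1)\epsilon_k + \frac{\log n_k}{t_k} < \eta_k$.

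There is no genuine analytical obstacle remaining, since the quantitative work was carried out in Lemma~\ref{lemma:lse_approx}; the only points requiring care are (i) confirming that each $G_k$ is a bona fide element of $\mathcal{G}$, i.e.\ a scaled LSE of finitely many affine functions with $t_k > 0$, which holds by construction, and (ii) the ordering of the parameter choices. The ordering is the one subtlety: $\epsilon$ must be selected before $t$, because the number of affine pieces $n$ is determined by $\epsilon$ through uniform continuity, whereas $t$ enters only the second, $n$-dependent term and can be driven up afterward without disturbing $n$. Were the two quantities coupled, one could not send both error terms to zero simultaneously; the fact that they are not is what makes the corollary clean. Since $\eta_k \to 0$, the sequence $G_k$ converges uniformly to $F$ on $[0,1]^d$, establishing universal approximation in the sense of Def.~\ref{def:universal_approx}.
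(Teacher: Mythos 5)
Your proposal is correct, but it is worth noting that the paper does not actually prove this lemma at all: it defers the proof entirely to \cite{calafiore_LSE} (Theorem 2), adding only a footnote remarking that ``a different approach to the proof in \cite{calafiore_LSE} easily follows from Lemma~\ref{lemma:lse_approx}.'' What you have written is precisely the execution of that footnote. The argument is sound, and its one genuinely delicate point --- that the number of affine pieces $n$ is pinned down by $\epsilon$ alone (through the modulus of uniform continuity of $F$, which determines $\delta$ and hence $m$), while the scaling factor $t$ enters only the additive $\frac{\log n}{t}$ term and can be chosen afterward --- is exactly the decoupling you identify. One small caveat: this decoupling is not literally visible from the \emph{statement} of Lemma~\ref{lemma:lse_approx}, which merely asserts the existence of some $t > 0$ and parameters achieving the bound; you need the \emph{proof} of that lemma, where the supporting hyperplanes at the grid points $\mathcal{X}$ are fixed before $t$ appears and the bound \eqref{eqn:lse_error_bound} holds for every $t > 0$ with those pieces fixed. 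You implicitly invoke this stronger fact, and correctly so, but a fully rigorous write-up should either cite the proof of Lemma~\ref{lemma:lse_approx} or restate the lemma with the quantifiers in the order ``for all $t>0$ there exist $\{\w_i, b_i\}$ depending only on $\epsilon$ and $F$.'' Compared with the external proof in \cite{calafiore_LSE}, your derivation buys self-containedness within the paper and makes the $\epsilon$--$n$--$t$ dependence explicit and quantitative, which is in the same spirit as the paper's use of Lemma~\ref{lemma:lse_approx} to size networks in Thm.~\ref{thm:universal_approximation}; the citation route buys brevity and avoids restating quantifiers.
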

We defer the proof of Lemma~\ref{lemma:lse_dense} to \cite{calafiore_LSE}.\footnote{A different approach to the proof in \cite{calafiore_LSE} easily follows from Lemma~\ref{lemma:lse_approx}.}
}
Combining Lemma~\ref{lemma:rockafeller} with Lemma~\ref{lemma:lse_dense} enables us to show in Thm.~\ref{thm:universal_approximation} below that the \mgn\ in Prop.~\ref{prop:single_layer_mgn} can universally approximate gradients of convex functions. The proof is constructive and demonstrates that the sequence of approximating functions corresponds to a sequence of \mgns\ with softmax activation and increasing hidden dimension.

\begin{theorem}[Universal Gradient Approximation for Convex Functions]
\label{thm:universal_approximation}
Let $\mathcal{F} = C^1_\times ([0-\delta,1+\delta]^d)$ with $\delta>0$. \mgns\ in Prop.~\ref{prop:single_layer_mgn} with scaled softmax activation universally approximate $\nabla \mathcal{F}$ on $[0,1]^d$.
\end{theorem}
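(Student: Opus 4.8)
The plan is to build the approximating sequence explicitly by composing the two lemmas just stated. Given a target $F \in C^1_\times([0-\delta,1+\delta]^d)$, I would first apply Lemma~\ref{lemma:lse_dense} (scaled LSE functions are dense in the convex continuous functions) on a slightly enlarged compact domain, say $[0-\delta/2,1+\delta/2]^d$, to obtain a sequence $G_i$ of scaled LSE of affine functions with $G_i \to F$ pointwise (indeed uniformly) there. The key structural observation is that each $G_i$ is \emph{already} an \mgn\ of the form in Prop.~\ref{prop:single_layer_mgn}: writing $G_i(\x) = \mathrm{LSE}_{t_i}(\{\w_1^\top\x+b_1,\dots,\w_{n_i}^\top\x+b_{n_i}\})$ and stacking the affine maps as $\W_i\x+\a_i$, a direct computation gives $\nabla G_i(\x) = \W_i^\top \softmax(t_i(\W_i\x+\a_i))$, which matches~\eqref{eq:single_layer_gn} with $\b=\0$ and scaled-softmax activation $\sigma(\cdot)=\softmax(t_i\,\cdot)$. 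This is the crucial identity: the gradient of a LogSumExp is exactly the softmax, so the sequence of convex LSE approximators differentiates into a sequence of \mgns\ of precisely the proposed architecture.

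Next I would invoke Lemma~\ref{lemma:rockafeller} to upgrade convergence of potentials to convergence of gradients. Since each $G_i$ and $F$ lie in $C^1_\times$ of the open convex set $(0-\delta,1+\delta)^d$ (which is where I need the enlarged domain: I want $[0,1]^d$ to sit in the \emph{interior} so that the compact-subset uniform convergence conclusion of Lemma~\ref{lemma:rockafeller} applies directly on $[0,1]^d$), the lemma yields $\nabla G_i \to \nabla F$ uniformly on $[0,1]^d$. Because each $\nabla G_i = \W_i^\top\softmax(t_i(\W_i\x+\a_i))$ is an \mgn\ in the sense of Prop.~\ref{prop:single_layer_mgn}, and softmax is the gradient of LSE which is convex (so its Jacobian is PSD), the approximating sequence lies in the claimed class. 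This establishes the universal approximation statement per Def.~\ref{def:universal_approx}.

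I would need to be careful about two bookkeeping points. First, Lemma~\ref{lemma:lse_dense} as stated gives density in $C^0_\times$, so I must ensure the sequence I feed into Lemma~\ref{lemma:rockafeller} consists of $C^1_\times$ functions; this is automatic because LSE of affine functions is smooth (real-analytic), so every $G_i \in C^\infty_\times$. Second, Lemma~\ref{lemma:rockafeller} requires the functions to be finite and defined on an open convex set with pointwise convergence; the enlargement from $[0,1]^d$ to the open cube $(0-\delta,1+\delta)^d$ secures both the open-set hypothesis and the fact that $[0,1]^d$ is a compact subset of the interior, which is exactly what the ``converges uniformly on every compact subset'' clause consumes.

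The main obstacle, and the reason the $\delta$-enlargement is not cosmetic, is the gap between the \emph{uniform} approximation demanded by Def.~\ref{def:universal_approx} and what Lemma~\ref{lemma:rockafeller} delivers only on compact subsets of the open domain of convexity. If I tried to work directly on $[0,1]^d$, the boundary would fail to be interior and the uniform-gradient-convergence conclusion would not be guaranteed there; gradients of convex functions can misbehave at the boundary even when the functions converge. By assuming $\mathcal{F} = C^1_\times([0-\delta,1+\delta]^d)$ and approximating on an intermediate enlarged cube, I push the trouble spot strictly outside $[0,1]^d$ and recover genuine uniform convergence of the \mgn\ gradients on the target set. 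The remaining steps — verifying the softmax/LSE gradient identity and confirming the architecture matches Prop.~\ref{prop:single_layer_mgn} — are routine once this domain setup is in place.
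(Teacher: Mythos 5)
Your proposal is correct and follows essentially the same route as the paper's proof: invoke Lemma~\ref{lemma:lse_dense} to get uniformly convergent smooth LSE potentials, pass to the open cube so that $[0,1]^d$ is a compact subset of its interior, apply Lemma~\ref{lemma:rockafeller} to upgrade to uniform gradient convergence there, and identify $\nabla G_i = \W_i^\top\softmax(t_i(\W_i\x+\a_i))$ as the \mgn\ of Prop.~\ref{prop:single_layer_mgn}. The only nit is a bookkeeping inconsistency: having approximated on $[0-\delta/2,1+\delta/2]^d$, you should apply Lemma~\ref{lemma:rockafeller} on the open set $(0-\delta/2,1+\delta/2)^d$ rather than $(0-\delta,1+\delta)^d$, where pointwise convergence is actually established — a trivial fix that does not affect the argument.
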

\begin{proof}
Let $F \in \mathcal{F}$ and let $G_n \in C^2_\times ([0-\delta,1+\delta]^d)$ be a sequence of the form $\mathrm{LSE}_t(\{\w_1^\top \x + b_1, \dots, \w_n^\top \x + b_n\})$ that converges uniformly to $F$ by Lemma~\ref{lemma:lse_dense}. By the extreme value theorem~\cite{rudin1964principles}, $F$ and all $G_n$ are finite.
We consider the open convex subset $\mathcal{A} = (0-\delta,1+\delta)^d \subset [0-\delta,1+\delta]^d$ and observe that the $G_n$ also converge uniformly to $F$ on $\mathcal{A}$. By Lemma~\ref{lemma:rockafeller}, $\nabla G_n \to \nabla F$ uniformly on compact subsets of $\mathcal{A}$, including $[0,1]^d$.
Note that $\forall n,\; \nabla G_n$ is an \mgn\ in Prop.~\ref{prop:single_layer_mgn} with scaled softmax activation and the $i$th row of $\W$ being $\w_i^\top$: 
$\nabla G_n = \W^\top \softmax(t(\W\x+\b))$
\end{proof}
Using the fact that \mgns\ can universally approximate all monotone gradient functions, we show that the difference of two \mgns\ with softmax activations can universally approximate the gradient of any $L$-smooth function. 
\begin{theorem}[Universal Gradient Approximation for L-smooth Functions]
   Let $\delta > 0$ and $\mathcal{F}$ be $L$-smooth functions in $C^1([0-\delta,1+\delta]^d)$. Let $g_1(\x)$ and $g_2(\x)$ be \mgns\ in Prop.~\ref{prop:single_layer_mgn} with scaled softmax activations. \gns\ $g(\x) = g_1(\x) - g_2(\x)$ universally approximate $\nabla \mathcal{F}$ on $[0,1]^d$.
   \label{thm:gn_universal_approx}
\end{theorem}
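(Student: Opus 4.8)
The plan is to exploit the standard difference-of-convex (DC) decomposition of an $L$-smooth function and then invoke the convex-case result already established in Thm.~\ref{thm:universal_approximation}. Concretely, given $F \in \mathcal{F}$, I would set $F_1(\x) = \frac{L}{2}\|\x\|^2 + F(\x)$ and $F_2(\x) = \frac{L}{2}\|\x\|^2$, so that $F = F_1 - F_2$ and hence $\nabla F = \nabla F_1 - \nabla F_2$. Both $F_1$ and $F_2$ lie in $C^1([0-\delta,1+\delta]^d)$, since $F$ does and $\frac{L}{2}\|\x\|^2$ is smooth. The goal is then to approximate each of $\nabla F_1, \nabla F_2$ by a single \mgn\ and take the difference.

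First I would verify that $F_1, F_2 \in C^1_\times$, i.e., both are convex. Convexity of $F_2$ is immediate. For $F_1$, rather than appeal to the Hessian (which would require $F \in C^2$), I would establish monotonicity of $\nabla F_1$ directly from Def.~\ref{def:monotone}: for any $\x,\y$, $(\nabla F_1(\x) - \nabla F_1(\y))^\top(\x - \y) = L\|\x-\y\|^2 + (\nabla F(\x) - \nabla F(\y))^\top(\x - \y)$. The $L$-smoothness of $F$ together with Cauchy--Schwarz gives $(\nabla F(\x) - \nabla F(\y))^\top(\x - \y) \geq -\|\nabla F(\x) - \nabla F(\y)\|\,\|\x-\y\| \geq -L\|\x-\y\|^2$, so the entire expression is nonnegative. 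Since a differentiable function is convex if and only if its gradient is monotone, $F_1$ is convex, as is $F_2$.

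Next I would apply Thm.~\ref{thm:universal_approximation} separately to the convex functions $F_1$ and $F_2$: there exist sequences of \mgns\ $g_1^{(n)}$ and $g_2^{(n)}$, each with scaled softmax activation, converging uniformly on $[0,1]^d$ to $\nabla F_1$ and $\nabla F_2$, respectively. Then $g^{(n)} = g_1^{(n)} - g_2^{(n)}$ converges uniformly to $\nabla F_1 - \nabla F_2 = \nabla F$ on $[0,1]^d$ by the triangle inequality, and by Rem.~\ref{rem:gn_linear_combinations} each $g^{(n)}$ is itself a \gn\ (it is a linear combination, with coefficients $1$ and $-1$, of \mgns, which are \gns). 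This exhibits the required approximating sequence in the stated class and establishes universal approximation of $\nabla \mathcal{F}$.

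The only delicate step---the ``main obstacle''---is justifying convexity of $F_1$ without assuming twice-differentiability, which the monotonicity computation above handles cleanly and is precisely why I would argue via Def.~\ref{def:monotone} rather than via the Hessian. Everything else reduces to the convex-case theorem plus uniform convergence of a difference, both of which are immediate.
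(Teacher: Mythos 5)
Your proof is correct and takes essentially the same approach as the paper's: the paper uses the mirror-image decomposition $\nabla F = L\x - (L\x - \nabla F(\x))$, obtaining monotonicity of $L\x - \nabla F(\x)$ by citing Rem.~\ref{rem:gn_as_mgn} instead of your direct Cauchy--Schwarz computation, and then, exactly as you do, applies Thm.~\ref{thm:universal_approximation} to each convex piece and concludes via Rem.~\ref{rem:gn_linear_combinations}. The choice of which summand carries the quadratic shift and the inlined monotonicity argument are immaterial differences.
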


\begin{proof}
    {Let $F \in \mathcal{F}$, implying $\nabla F$ is $L$-Lipschitz continuous. We can write $\nabla F = L\x - (L\x - \nabla F(\x))$, where $L\x$ is clearly monotone and $L\x - \nabla F(\x)$ is monotone by Rem.~\ref{rem:gn_as_mgn}}. By Thm.~\ref{thm:universal_approximation}, there exist sequences of \mgns\ uniformly converging to $L\x$ and $L\x - \nabla F(\x)$ on $[0,1]^d$.
    By Rem.~\ref{rem:gn_linear_combinations}, the difference of these sequences is a sequence of \gns\ that converges uniformly to $\nabla F$.
\end{proof}


\subsection{Gradients of Sums of (Convex) Ridge Functions}
\label{sec:ridges}


{Elementwise activations $\sigma(\x) = \begin{bmatrix}
\sigma_1(x_1)\ \dots\ \sigma_n(x_n)  
\end{bmatrix}^\top$ are highly parallellizable in practice and are more commonly used than group activation functions like softmax. Prior works~\cite{FieldsOfExperts,donoho1994ideal,patchbasedsparsemodelstohigherorderMRFs,hammernik2018learning, EPFL,goujon2024learning}, demonstrated the empirical success of elementwise activations used in \gns\ of the form in~\eqref{eq:single_layer_gn}. While we proved universal approximation for \textit{any} (convex) gradient in Sec.~\ref{sec:universal_approximation}, this section demonstrates that using an elementwise activation function in~\eqref{eq:single_layer_gn} compromises representation power. Specifically, it leads to learning gradients of sums of (convex) ridge functions.}

\begin{definition}[Sum of (Convex) Ridge Functions]
    $F \in C^k(\R^d)$ is expressible as the finite sum of ridge functions if 
    \begin{equation}
        F(\x) = \sum_{i=1}^N \psi_i(\bs{a}_i ^\top \x + b_i)
        \label{eq:ridges}
    \end{equation}
    where each profile function $\psi_i \in C^k(\R)$. $F$ is expressible as the sum of convex ridge functions if each $\psi_i \in C_{\times}^k(\R)$.
    \label{def:ridge_function}
\end{definition}
Thm.~\ref{thm:gn_ridges_universal} below shows that \gns\ with scaled elementwise activations (e.g., scaled sigmoid, tanh, ReLU) can learn the gradient of any function that is the sum of ridges. {It extends the result in \cite{EPFL}, which considers approximating gradients of sums of convex ridges using learnable linear spline activations.} 


{\begin{theorem}[Universal Gradient Approximation for Sums of Ridges]
    Let $\mathcal{F}$ be functions in $C^1([0,1]^d)$ expressible as a finite sum of ridge functions. \gns\ of the form in~\eqref{eq:single_layer_gn} with continuous, scaled, elementwise nonpolynomial activation $\sigma$ universally approximate $\nabla \mathcal{F}$.
    \label{thm:gn_ridges_universal}
\end{theorem}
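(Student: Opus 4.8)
The plan is to reduce the $d$-dimensional gradient approximation to $N$ independent univariate approximation problems and then reassemble the pieces into a single network of the form~\eqref{eq:single_layer_gn}. Writing $F(\x) = \sum_{i=1}^N \psi_i(\a_i^\top\x + b_i)$ as in Def.~\ref{def:ridge_function}, the chain rule gives $\nabla F(\x) = \sum_{i=1}^N \psi_i'(\a_i^\top\x + b_i)\,\a_i$. Since $[0,1]^d$ is compact and $\x\mapsto \a_i^\top\x + b_i$ is continuous, the argument of the $i$th profile ranges over a compact interval $I_i\subset\R$, and $\psi_i'\in C^0(I_i)$ because $\psi_i\in C^1(\R)$. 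Thus it suffices to approximate each continuous univariate function $\psi_i'$ on $I_i$ and multiply the result by the fixed vector $\a_i$.

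For the univariate step I would invoke the classical single-hidden-layer universal approximation theorem (Leshno et al.; Pinkus): since $\sigma$ is continuous and nonpolynomial, for every $\eta>0$ there exist $m_i$, coefficients $\gamma_{ik}$, weights $\alpha_{ik}$, and biases $\beta_{ik}$ with $\sup_{s\in I_i}\bigl|\psi_i'(s) - \sum_{k=1}^{m_i}\gamma_{ik}\,\sigma(\alpha_{ik}s + \beta_{ik})\bigr| < \eta$. Substituting $s = \a_i^\top\x + b_i$ and multiplying by $\a_i$ approximates the $i$th gradient term by $\sum_k \gamma_{ik}\,\sigma\bigl((\alpha_{ik}\a_i)^\top\x + (\alpha_{ik}b_i + \beta_{ik})\bigr)\,\a_i$. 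I would then assemble all hidden units $(i,k)$ into one matrix $\W$ whose corresponding row is $\w_{ik} = \alpha_{ik}\a_i$ with bias entry $a_{ik} = \alpha_{ik}b_i + \beta_{ik}$, so that the inner pre-activation $\w_{ik}^\top\x + a_{ik}$ reproduces exactly $\alpha_{ik}(\a_i^\top\x+b_i)+\beta_{ik}$.

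The step I expect to be the crux is reconciling this assembly with the \emph{transpose-tied} structure of~\eqref{eq:single_layer_gn}: the output weight multiplying unit $(i,k)$ is forced to be the same vector $\w_{ik} = \alpha_{ik}\a_i$ that appears at the input, so the naive contribution is $\alpha_{ik}\a_i\,\sigma(\cdots)$ rather than the desired $\gamma_{ik}\a_i\,\sigma(\cdots)$. This is precisely where the \textbf{scaled} activation is needed: endowing unit $(i,k)$ with its own scale $c_{ik}=\gamma_{ik}/\alpha_{ik}$ (still an elementwise, hence gradient-preserving, activation) turns the contribution into $c_{ik}\w_{ik}\,\sigma(\cdots)=\gamma_{ik}\a_i\,\sigma(\cdots)$, decoupling the outer coefficient from the inner weight and recovering exactly the univariate expansion. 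Units with $\alpha_{ik}=0$ contribute only constants; since these aggregate to a single constant vector $\sum_i(\cdot)\,\a_i$, I would absorb them into the bias $\b$ of~\eqref{eq:single_layer_gn} (alternatively, the univariate expansion can be taken with all $\alpha_{ik}\neq 0$ by a limiting perturbation, since $\sigma(\alpha s+\beta)\to\sigma(\beta)$ uniformly on $I_i$ as $\alpha\to0$). Summing the $N$ univariate bounds then gives the uniform estimate $\sup_{\x\in[0,1]^d}\|g(\x)-\nabla F(\x)\| \le \eta\sum_{i=1}^N\|\a_i\|$, and letting $\eta\to0$ yields a sequence of \gns\ of the form~\eqref{eq:single_layer_gn} converging uniformly to $\nabla F$, as required by Def.~\ref{def:universal_approx}.
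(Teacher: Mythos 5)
Your proposal is correct and follows essentially the same route as the paper's proof: reduce via the chain rule to univariate approximation of each $\psi_i'$ on a compact interval using the Leshno et al.\ nonpolynomial universal approximation theorem, then assemble all hidden units into the transpose-tied form of~\eqref{eq:single_layer_gn}, with per-unit scalings resolving the tied-weight coefficient mismatch exactly as the paper's diagonal matrices $\bs{U},\bs{V}$ do. The only cosmetic difference is that you push the inner scales $\alpha_{ik}$ into the weight rows $\w_{ik}=\alpha_{ik}\a_i$ and divide them back out in the activation (correctly handling the $\alpha_{ik}=0$ degenerate units via the bias $\b$ or a perturbation), whereas the paper keeps the rows equal to $\a_i^\top$ and absorbs both the inner and outer scales into the scaled elementwise activation.
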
}
\begin{proof}
    If $F \in \mathcal{F}$, then $\nabla_{\x} F(\x) = \sum_{i=1}^N \bs{a}_i \psi_i'(\bs{a}_i^\top \x + b_i)$. Let $\bs{A} = \begin{bmatrix}
        \bs{a}_1\ \dots\ \bs{a}_N
    \end{bmatrix}^\top$, $\bs{b} = \begin{bmatrix}
        b_1\ \dots\ b_N
    \end{bmatrix}^\top$, and $\Psi(\cdot) = \begin{bmatrix}
        \psi'_1(\cdot)\ \dots\ \psi'_N(\cdot) 
    \end{bmatrix}^\top$. Then $\nabla_{\x} F(\x) = \bs{A}^\top\Psi(\bs{A}\x + \bs{b})$, where each $\bs{a}_i^\top(\cdot) + b_i$ is a continuous affine transformation that maps $[0,1]^d$ to a compact subset $\mathcal{S}_i\subset \R$. We now consider the \gn\ in Prop.~\ref{prop:single_layer_gn} with $N \times d$ matrix $\W=\bs{A}$ and continuous elementwise activation $\sigma(\x) = \begin{bmatrix}
    \sigma_1(x_1)\ \dots\ \sigma_n(x_n)  
    \end{bmatrix}^\top$. For each continuous $\psi_i'$,
    let $\epsilon_i > 0$
    and consider a corresponding $\sigma_i$ satisfying
    $\forall x \in \mathcal{S}_i,~|\psi_i'(x) - \sigma_i(x)| < \epsilon_i$.
    This \gn's approximation error is bounded as follows: 
    \begin{align}
        \left\|\sum_{i=1}^N \bs{a}_i (\psi_i'(\bs{a}_i^\top \x + b_i) - \sigma_i(\bs{a}_i^\top \x + b_i))\right\|
        &\leq \sum_{i=1}^N |\psi_i'(\bs{a}_i^\top \x + b_i) - \sigma_i(\bs{a}_i^\top \x + b_i)|\left\|\bs{a}_i \right\|\\
        &\leq \sum_{i=1}^N \epsilon_i\|\bs{a}_i \|
    \end{align}
    Given any error threshold $\epsilon > 0$, there exist sufficiently small $\epsilon_i$, and corresponding $\sigma_i$, such that the RHS of the final inequality above is bounded by $\epsilon$.

    Next, we parameterize each $\sigma_i$ as a neural network of the form
    $\sigma_i(x) = \bs{u}_i^\top s(\bs{v}_i x + \bs{\beta}_i),\; \bs{u}_i, \bs{v}_i, \bs{\beta}_i\in\R^{m_i}$
    with continuous, elementwise nonpolynomial activation $s$ (e.g., tanh, sigmoid). By Thm. 1 in~\cite{Nonpolynomial_LESHNO1993861}, for each $\psi_i'$, there exists $\sigma_i$ satisfying $\forall x \in \mathcal{S}_i,~|\psi_i'(x) - \sigma_i(x)| < \epsilon_i$.
    Substituting the specified form of $\sigma_i$ into the aforementioned \gn\ yields
    $\sum_{i=1}^N \bs{a}_i \bs{u}_i^\top s(\bs{v}_i \cdot (\bs{a}_i^\top \x + b_i) + \bs{\beta}_i)$. This
    can be rewritten as
    $\sum_{i=1}^N \bs{A}^\top_i \mathrm{diag}(\bs{u}_i)s(\mathrm{diag}(\bs{v}_i) (\bs{A}_i \x + \bs{b}_i) + \bs{\beta}_i)$,
    where the $m_i \times d$ matrix $\bs{A}_i$ contains $\bs{a}_i^\top$ as its rows. Since $s$ operates elementwise, we can again rewrite the \gn\ as
    \begin{equation}
        \bs{A}^\top \bs{U} s(\bs{V} (\bs{A} \x + \bs{b}) + \bs{\beta})
        \label{eq:one_layer_gradnet_with_nonpolynomial_activation}
    \end{equation}
    where the $(N\cdot\prod_{i=1}^N m_i) \times d$ matrix $\bs{A}$ vertically stacks the $\bs{A}_i$ matrices and the vector $\bs{\beta}$ stacks the $\bs{\beta}_i$ vectors. The diagonal matrices $\bs{U}, \bs{V}$ respectively have $\mathrm{diag}(\bs{u}_i), \mathrm{diag}(\bs{v}_i)$ along their diagonals. Therefore, \eqref{eq:one_layer_gradnet_with_nonpolynomial_activation} is a \gn\ in~\eqref{eq:single_layer_gn} with intermediate diagonal matrices corresponding to scaled elementwise nonpolynomial activations.
\end{proof}
The proof of Thm.~\ref{thm:gn_ridges_universal} uses the fact that neural networks with elementwise nonpolynomial activations universally approximate continuous functions on compact subsets of $\R$~\cite{Nonpolynomial_LESHNO1993861}. In fact, the proof permits any elementwise activation that is a universal approximator.
{\begin{corollary}
    Let $\mathcal{F}$ be functions in $C^1([0,1]^d)$ expressible as a finite sum of ridge functions. Let $\sigma$ be an elementwise activation where each $\sigma_i \in C^0(\R)$ is a universal approximator of continuous functions on compact subsets of $\R$. \gns\ in~\eqref{eq:single_layer_gn} with activation $\sigma$ universally approximate $\nabla \mathcal{F}$.
    \label{cor:gn_sum_of_ridges}
\end{corollary}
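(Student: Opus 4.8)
The plan is to follow the proof of Thm.~\ref{thm:gn_ridges_universal} essentially verbatim, observing that the only place where the \emph{nonpolynomial} hypothesis on the activation was actually used was through the universal approximation guarantee of \cite{Nonpolynomial_LESHNO1993861}. Since the present hypothesis directly assumes each $\sigma_i$ is a universal approximator of continuous functions on compact subsets of $\R$, I would simply substitute this property wherever the proof of Thm.~\ref{thm:gn_ridges_universal} appealed to \cite{Nonpolynomial_LESHNO1993861}, and the same conclusion follows. In this sense the corollary is an abstraction of the theorem rather than a genuinely new argument.

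Concretely, first I would take any $F\in\mathcal{F}$ and write $\nabla F(\x) = \sum_{i=1}^N \a_i\,\psi_i'(\a_i^\top\x + b_i) = \bs{A}^\top\Psi(\bs{A}\x + \b)$, exactly as in the proof of Thm.~\ref{thm:gn_ridges_universal}, where $\Psi$ stacks the continuous profile derivatives $\psi_i'$ and the $i$th row of $\bs{A}$ is $\a_i^\top$. Each affine map $\a_i^\top(\cdot) + b_i$ sends the compact set $[0,1]^d$ to a compact interval $\mathcal{S}_i\subset\R$. Then, for each $i$, since $\psi_i'$ is continuous on the compact set $\mathcal{S}_i$ and $\sigma_i$ is by hypothesis a universal approximator of continuous functions on compact subsets of $\R$, for any prescribed $\epsilon_i>0$ there is a choice of parameters of $\sigma_i$ with $\sup_{x\in\mathcal{S}_i}|\psi_i'(x) - \sigma_i(x)| < \epsilon_i$. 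The error bound from the proof of Thm.~\ref{thm:gn_ridges_universal} then applies unchanged: the network $\bs{A}^\top\sigma(\bs{A}\x + \b)$ incurs error at most $\sum_{i=1}^N \epsilon_i\|\a_i\|$, and given any $\epsilon>0$ the $\epsilon_i$ can be taken small enough that this sum is below $\epsilon$, exhibiting for each $F$ and each tolerance a \gn\ with activation $\sigma$ uniformly within $\epsilon$ of $\nabla F$ on $[0,1]^d$. This is exactly universal approximation in the sense of Def.~\ref{def:universal_approx}.

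There is no substantive obstacle: the concrete construction of $\sigma_i$ from a nonpolynomial scalar activation (the parameterization $\bs{u}_i^\top s(\bs{v}_i x + \bs{\beta}_i)$ and its rewriting into the form~\eqref{eq:single_layer_gn}) is simply replaced by the black-box universal-approximation hypothesis. The only point requiring minor care is confirming that an elementwise activation assembled from continuous $\sigma_i$ still produces a network of the form~\eqref{eq:single_layer_gn} that qualifies as a \gn; this follows from the observation noted after Prop.~\ref{prop:single_layer_gn}, namely that on a compact domain continuity of each $\sigma_i$ suffices for integrability and hence for existence of the required antiderivative, so that $\bs{A}^\top\sigma(\bs{A}\x+\b)$ is indeed a \gn\ with $\W=\bs{A}$.
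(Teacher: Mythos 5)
Your proposal is correct and matches the paper's intended argument exactly: the paper presents this corollary as an immediate abstraction of Thm.~\ref{thm:gn_ridges_universal}, obtained by replacing the appeal to \cite{Nonpolynomial_LESHNO1993861} with the assumed universal-approximation property of each $\sigma_i$, which is precisely what you do. Your added remark that continuity of the $\sigma_i$ on the compact images $\mathcal{S}_i$ suffices for the network to qualify as a \gn\ is the right (and only) point of care, and it is handled as the paper does following Prop.~\ref{prop:single_layer_gn}.
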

Cor.~\ref{cor:gn_sum_of_ridges} also applies to other architectures for approximating gradients. For example, variational networks proposed in \cite{hammernik2018learning} use Gaussian radial basis functions (RBFs) for elementwise activations, making them universal approximators of sums of ridge functions \cite{RBF}. Similarly, the weakly convex ridge regularizers in \cite{goujon2024learning}, use elementwise learnable spline activations, which are also dense in the space of continuous functions.}

Now, we shift focus to convex ridges and similarly prove that \mgns\ with elementwise, nondecreasing activations can learn the gradient of any function expressible as the sum of \textit{convex} ridges. The proof proceeds as follows: we first show that \mgns\ in Prop.~\ref{prop:single_layer_mgn} with elementwise activations can universally approximate monotone functions on $\R$; similar to the proof of Thm.~\ref{thm:gn_ridges_universal}, we then employ this \mgn\ to learn $\psi_i'$, the monotone derivatives of the convex ridges. These results also motivate compositions of \mgns\ in Sec.~\ref{sec:transformations} and deeper networks in Sec.~\ref{sec:cmgn}.
{\begin{lemma}[Universal Approximation for Scalar Monotone Functions]
    Let $\mathcal{F}$ be nondecreasing functions in $C^0([0,1])$. Let $\sigma$ be an elementwise activation where each $\sigma_k \in C^0(\R)$ is bounded, nondecreasing, and has finite asymptotic end behavior.
    \mgns\ of the form \eqref{eq:single_layer_gn} with activation $\sigma$ universally approximate $\mathcal{F}$.
    \label{lemma:universalR_to_R}
\end{lemma}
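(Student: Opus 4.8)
The plan is to exploit that in dimension $d=1$ the only constraint on an \mgn\ of the form \eqref{eq:single_layer_gn} is that the scalar output be nondecreasing, so that I may assemble the approximant entirely from nonnegatively weighted, nondecreasing ``ramp'' pieces. First I would reduce to a density statement. Fix any nonconstant member of the activation class and normalize it (subtract its left limit, divide by its range, which is positive) to a template $\tau$ with $\lim_{y\to-\infty}\tau(y)=0$ and $\lim_{y\to+\infty}\tau(y)=1$; the finite end behavior guarantees these limits exist. I then claim the family $\mathcal{G}_0 = \{\, x \mapsto b + \sum_{k=1}^m h_k\,\tau(\lambda_k(x-c_k)) : m\in\mathbb{N},\, h_k\ge 0,\, \lambda_k>0,\, c_k,b\in\R \,\}$ is dense, in supremum norm, in the nondecreasing functions of $C^0([0,1])$.

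Second, I would realize every element of $\mathcal{G}_0$ as an \mgn. Taking $\W=\ones_m$ (the $m\times 1$ all-ones vector) and $\a=\0$, I set the $k$th activation component to $\sigma_k(y) = (h_k/R)\,\tau_0(\lambda_k(y-c_k))$, where $\tau_0$ is the unnormalized template with range $R>0$ and left limit $\ell$. Each $\sigma_k$ is a positive scaling and inner dilation/shift of a class member, hence bounded, nondecreasing, continuous, with finite end behavior, so it lies in the stated activation class; being continuous and nondecreasing on the compact set $[0,1]$, it is integrable there and is the derivative of a convex function, so Prop.~\ref{prop:single_layer_mgn} applies. Since $\tau_0 = \ell + R\tau$, one computes $\W^\top\sigma(\W x+\a)+b' = b' + (\ell/R)\sum_{k=1}^m h_k + \sum_{k=1}^m h_k\,\tau(\lambda_k(x-c_k))$, and choosing the scalar bias $b' = b - (\ell/R)\sum_{k=1}^m h_k$ recovers exactly the target element of $\mathcal{G}_0$; the network is an \mgn\ by Prop.~\ref{prop:single_layer_mgn} because each $\sigma_k$ is nondecreasing.

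Third, the density claim I would prove constructively. By the Heine--Cantor theorem $f$ is uniformly continuous, so for a fine enough uniform mesh $x_j=j/M$ the increments $c_j=f(x_j)-f(x_{j-1})\ge0$ satisfy $c_j<\epsilon/2$, and the step function $s(x)=f(0)+\sum_{j=1}^{M}c_j H(x-x_j)$ obeys $\|f-s\|_\infty\le\epsilon/2$. Replacing each Heaviside $H(x-x_j)$ by $\tau(\lambda(x-x_j))$ produces some $g\in\mathcal{G}_0$. To bound $\|s-g\|_\infty$ I would fix $x$ and split the sum into the at most one index $j$ with $|x-x_j|<1/(2M)$ (contributing error at most $c_j<\epsilon/2$) and the remaining indices, for which $|x-x_j|\ge1/(2M)$ forces $|\tau(\lambda(x-x_j))-H(x-x_j)|$ below a quantity that tends to $0$ as $\lambda\to\infty$ uniformly in $j$, so that their combined error is at most $(f(1)-f(0))$ times that quantity; taking $\lambda$ large makes it $<\epsilon/2$, whence $\|f-g\|_\infty<\tfrac32\epsilon$.

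The conceptual crux, and the reason the statement exceeds the trivial observation that $f$ extends to a single bounded nondecreasing activation, is the monotonicity (PSD-Jacobian) constraint: every building block must be nondecreasing and may be combined only with nonnegative coefficients, so I cannot cancel overshoot by subtraction and must build $f$ from monotone ramps alone. The one genuinely delicate step is the uniform-in-$j$ control of the Heaviside-to-sigmoid error near the knots, which the separation between mesh scale $1/(2M)$ and sharpness $\lambda$ resolves; absorbing the activation's possibly nonzero limits into the bias is routine bookkeeping.
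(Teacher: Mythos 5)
Your proposal is correct and takes essentially the same route as the paper's proof: a uniform partition of $[0,1]$ with the nonnegative increments of $f$ as coefficients on shifted, sharply dilated copies of a normalized sigmoidal template, a triangle inequality through an idealized intermediate approximant (your Heaviside step function plays the role of the paper's clipped-ramp interpolant $\widetilde{\sigma}$), a near-knot/far-knot split matching the paper's $i<k$, $i=k$, $i>k$ decomposition with the sharpness parameter $\lambda$ (the paper's $t$) controlling the tails, and the same final realization as a single-layer network with all-ones weight vector and the scalings absorbed into the elementwise activations. The only cosmetic differences are your use of a step function rather than a piecewise-linear interpolant and your building everything from one fixed nonconstant template instead of heterogeneous $\sigma_i$, neither of which changes the substance of the argument.
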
}
\begin{proof}
Let $f \in \mathcal{F}$ and, without loss of generality, let each $\sigma_k(x) : \R \to [0,1]$ with $\lim_{x \to -\infty} = 0$ and $\lim_{x \to \infty} = 1$. We uniformly partition $[0,1]$ into subintervals $\mathcal{I}_{n,k} =[\frac{k-1}{2^{n}}, \frac{k}{2^{n}}]$ for $k \in \{1,2,\dots,2^n\}$ and consider the approximators
\begin{align}
    g_{n,t}(x)&= f(0)  + \sum_{i = 1}^{2^{n}}\Delta_{n,i} \sigma_i(t (2^{n+1} x -  2i  +  1))
\end{align}
where $t\geq 1$, each $\sigma_i$ satisfies conditions given in the lemma, and $\Delta_{n,i} = f\left(\frac{i}{2^{n}}\right)-f\left(\frac{i-1}{2^{n}}\right) \geq 0$.
Next, we bound
\begin{align}
    \max_{x \in [0,1]} | f(x) - g_{n,t}(x) | =
    \max_k \max_{x \in \mathcal{I}_{n,k}} | f(x) - g_{n,t}(x) | \label{eq:max_over_subintervals}
\end{align}
To do so, we introduce $\widetilde{g}_{n,t}$, which has the same form as $g_{n,t}$ except each $\sigma_i$ is replaced by $\widetilde{\sigma}$:
\begin{align}
    \widetilde{\sigma}(x) = \begin{cases}
        0 \text{ if } x\leq-1; \;\;
        x \text{ if } -1< x < 1; \;\;
        1 \text{ if } x\geq 1
    \end{cases}
\end{align}
In each of the $2^n$ subintervals, $\widetilde{g}_{n,t}$ interpolates $f$.
For a specific $k$ on the RHS of~\eqref{eq:max_over_subintervals}, triangle inequality implies
\begin{align}
    \max_{x \in \mathcal{I}_{n,k}} | f(x) - g_{n,t}(x) | &\leq \max_{x \in \mathcal{I}_{n,k}} | f(x) - \Tilde{g}_{n,t}(x) | + | \Tilde{g}_{n,t}(x) - g_{n,t}(x) | \label{eq:triangle_mgn_R_to_R}
\end{align}
Let $\epsilon>0$. For any $n\in\mathbb{N}, t\geq 1$, monotonicity of $f$ implies that the first term on the RHS of~\eqref{eq:triangle_mgn_R_to_R} is bounded as follows:
\begin{align}
    \max_{x \in \mathcal{I}_{n,k}} | f(x) - \widetilde{g}_{n,t}(x) | \leq \Delta_{n,k}
\end{align}
Now, by the Heine-Cantor Theorem~\cite{Heine_Cantor}, $f$ is uniformly continuous on $[0,1]$.
Hence, let $n\geq n_0\in\mathbb{N}$ such that $\forall k,~\Delta_{{n_0},k} = f\left({k}/{2^{n_0}}\right)-f\left({(k-1)}/{2^{n_0}}\right) < \epsilon /3$.
For each $k$, the second term on the RHS of~\eqref{eq:triangle_mgn_R_to_R} can be bounded by separately considering the absolute difference between $\Tilde{g}(x)$ and $g(x)$ in three different intervals: $i < k$, $i=k$, and $i > k$.
\begin{align}
    \max_{x \in \mathcal{I}_{n,k}} | \Tilde{g}_{n,t}(x) - g_{n,t}(x)| &\leq \max_{x \in \mathcal{I}_{n,k}}
    \biggr\{\sum_{i < k} \Delta_{n,i}[1-\sigma_i(t (2^{n+1} x - 2i + 1))]\\
    &\hspace{20mm} +\Delta_{n,k}|\widetilde{\sigma}(t (2^{n+1} x - 2k + 1))-\sigma_k(t (2^{n+1} x - 2k + 1))|&&\\
    &\hspace{20mm}+ \sum_{i > k} \Delta_{n,i}\sigma_i(t (2^{n+1} x - 2i + 1))\biggr\}&&
    \label{eq:bound_on_g_minus_gtilde}
\end{align}
By boundedness of $\widetilde{\sigma}$ and $\sigma_k$, we have $\forall x,$ $|\widetilde{\sigma}(x)-\sigma_k(x)| \leq 1$ and the middle term in~\eqref{eq:bound_on_g_minus_gtilde} is upper-bounded by
$\Delta_{n,k}<\epsilon / 3$.
Using the aforementioned value of $n$, there exists corresponding $t>1$ such that the sum of the remaining terms of~\eqref{eq:bound_on_g_minus_gtilde} is bounded by $\epsilon/3$. Therefore, $\max_{x \in [0,1]} | f(x) - g_{n,t}(x) |<\epsilon$ and $g_{n,t}$ universally approximates $\mathcal{F}$. We note that $g_{n,t}$ can be written in the form of~\eqref{eq:single_layer_gn} with nondecreasing activations: 
\begin{equation}
    g_{n,t} = f(0)  + \bs{1}^\top \overline{\sigma}(\bs{1} x + \bs{b}) \label{eq:mgn_r_to_r_form}
\end{equation}
where $\bs{b}_k = (-2k  +  1)/2^{n+1}$ and $\overline{\sigma}$ is an elementwise activation composed of $\overline{\sigma}_k(x) = \Delta_{n,k}\sigma_k(2^{n+1}t x)$.
\end{proof}
Lemma~\ref{lemma:universalR_to_R} states that a sufficiently wide \mgn\ with scaled activations, like sigmoids, can closely approximate any continuous, monotone function on $\R$. {It shows that a broader set of functions than the linear splines considered in \cite{EPFL} is dense in the space of continuous nondecreasing functions on $\R$}. Next, we give universal approximation results for sums of \textit{convex} ridges {that generalize Prop. III.5 of \cite{EPFL}}. These results use Lemma~\ref{lemma:universalR_to_R} and are similar to the proofs in Thm.~\ref{thm:gn_ridges_universal} and Cor.~\ref{cor:gn_sum_of_ridges} that pertain to sums of general ridge functions.

{
\begin{theorem}[Universal Gradient Approximation for Sums of Convex Ridges]
Let $\mathcal{F}$ be convex functions in $C_\times^1([0,1]^d)$ expressible as a finite sum of convex ridge functions. Let $\sigma$ be an elementwise activation where each $\sigma_k \in C^0(\R)$ is bounded, nondecreasing, and has finite asymptotic end behavior.
\mgns\ in Prop.~\ref{prop:single_layer_mgn} with scaled activation $\sigma$ universally approximate $\nabla \mathcal{F}$.
    \label{thm:mgn_ridges_universal}
\end{theorem}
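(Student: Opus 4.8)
The plan is to mirror the proof of Thm.~\ref{thm:gn_ridges_universal}, but to replace the general scalar approximators used there with the \emph{monotone} scalar approximators supplied by Lemma~\ref{lemma:universalR_to_R}, so that the assembled network is not merely a \gn\ but an \mgn. First I would take $F \in \mathcal{F}$ and write, as in Def.~\ref{def:ridge_function}, $\nabla F(\x) = \sum_{i=1}^N \a_i\,\psi_i'(\a_i^\top \x + b_i)$. The key structural observation is that convexity of each profile $\psi_i \in C_\times^1(\R)$ forces its derivative $\psi_i'$ to be continuous \emph{and nondecreasing}. Each affine map $\x \mapsto \a_i^\top \x + b_i$ carries $[0,1]^d$ into a compact interval $\mathcal{S}_i \subset \R$, on which $\psi_i'$ is thus a continuous monotone function of one variable.

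Next I would invoke Lemma~\ref{lemma:universalR_to_R} to approximate each $\psi_i'$ on $\mathcal{S}_i$. Since that lemma is stated on $[0,1]$, I would first reparameterize $\mathcal{S}_i$ to $[0,1]$ by an increasing affine change of variable; this preserves monotonicity and the scalar \mgn\ form~\eqref{eq:mgn_r_to_r_form} (the reparameterization is absorbed into the input weights and biases), so for any $\epsilon_i > 0$ there is a scalar \mgn\ $\sigma_i$ with $\sup_{x \in \mathcal{S}_i}|\psi_i'(x) - \sigma_i(x)| < \epsilon_i$. The same triangle-inequality bound used in Thm.~\ref{thm:gn_ridges_universal} then yields $\bigl\|\nabla F(\x) - \sum_{i=1}^N \a_i \sigma_i(\a_i^\top \x + b_i)\bigr\| \leq \sum_{i=1}^N \epsilon_i \|\a_i\|$ uniformly over $[0,1]^d$, which is driven below any prescribed $\epsilon > 0$ by taking the $\epsilon_i$ small.

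The main work is the assembly step: I must exhibit $\sum_{i=1}^N \a_i \sigma_i(\a_i^\top \x + b_i)$ as a single network of the form~\eqref{eq:single_layer_gn} whose activation is elementwise and nondecreasing, so that Prop.~\ref{prop:single_layer_mgn} certifies it as an \mgn. Substituting the form~\eqref{eq:mgn_r_to_r_form} of each $\sigma_i$, the constant terms collect into the bias $\b$, while each remaining block $\a_i \ones^\top \overline{\sigma}^{(i)}\bigl(\ones(\a_i^\top\x+b_i)+\b^{(i)}\bigr)$ rewrites as $\bs{A}_i^\top \overline{\sigma}^{(i)}(\bs{A}_i \x + \bs{c}_i)$, where $\bs{A}_i$ stacks copies of $\a_i^\top$ and $\bs{c}_i = b_i\ones + \b^{(i)}$; the identity $\a_i \ones^\top \v = \bs{A}_i^\top \v$ holds because both equal $\a_i \sum_k v_k$. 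Vertically stacking the $\bs{A}_i$ into one matrix $\W$, concatenating the $\bs{c}_i$ into $\a$, and concatenating the component activations yields exactly $\W^\top \sigma(\W\x + \a) + \b$. The same $\W$ then appears transposed on the output and directly on the input, matching~\eqref{eq:single_layer_gn}, and every component $\overline{\sigma}^{(i)}_k(x) = \Delta_{n,k}\,\sigma_k(2^{n+1}t\,x)$ is nondecreasing since both the output scaling $\Delta_{n,k} \geq 0$ and the input scaling $2^{n+1}t > 0$ preserve monotonicity. Hence $\sigma$ is an elementwise nondecreasing (scaled) activation, and the network is an \mgn.

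The step I expect to be the main obstacle is precisely this assembly: one must glue the monotone scalar approximators together while keeping the \emph{shared}-weight structure $\W^\top \sigma(\W\,\cdot)$ of~\eqref{eq:single_layer_gn} intact, and must verify that the output scalings $\Delta_{n,k}$ stay nonnegative so that monotonicity of the activation — and therefore the everywhere-PSD Jacobian underlying Prop.~\ref{prop:single_layer_mgn} — is never lost. The rescaling from $[0,1]$ to $\mathcal{S}_i$ needed to apply Lemma~\ref{lemma:universalR_to_R} is a secondary technical point that must be addressed but is routine.
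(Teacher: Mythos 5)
Your proposal is correct and follows essentially the same route as the paper's proof: invoke Lemma~\ref{lemma:universalR_to_R} to approximate each nondecreasing derivative $\psi_i'$ by a scalar \mgn\ of the form~\eqref{eq:mgn_r_to_r_form}, then reuse the assembly and error bound from the proof of Thm.~\ref{thm:gn_ridges_universal}. The paper leaves the assembly and the affine rescaling of $\mathcal{S}_i$ to $[0,1]$ implicit ("the remainder follows from the proof of Thm.~\ref{thm:gn_ridges_universal}"), whereas you spell them out — including the identity $\a_i \ones^\top \v = \bs{A}_i^\top \v$ and the nonnegativity of the $\Delta_{n,k}$ that preserves monotonicity — but this is elaboration, not a different argument.
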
}
\begin{proof}
$F$ is of the form~\eqref{eq:ridges}, where each $\psi_i$ is convex. By Lemma~\ref{lemma:universalR_to_R}, an \mgn\ of the form~\eqref{eq:mgn_r_to_r_form} can approximate each nondecreasing $\psi_i'$ on a compact subset of $\R$ within arbitrary error. The remainder of the proof follows from the proof of Thm.~\ref{thm:gn_ridges_universal}.
\end{proof}
Similar to Cor.~\ref{cor:gn_sum_of_ridges}, the proof of Thm.~\ref{thm:mgn_ridges_universal} permits the activation to be any elementwise function that is a universal approximator of nondecreasing functions.

{\begin{corollary}
    Let $\mathcal{F}$ be convex functions in $C_{\times}^1([0,1]^d)$ expressible as the finite sum of convex ridge functions. Let $\sigma$ be an elementwise activation where each $\sigma_i \in C^0(\R)$ is a universal approximator of continuous, nondecreasing functions on compact subsets of $\R$. \mgns\ in~\eqref{eq:single_layer_gn} with activation $\sigma$ universally approximate $\nabla \mathcal{F}$.
    \label{cor:mgn_sum_of_ridges}
\end{corollary}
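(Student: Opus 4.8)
The plan is to mirror the proof of Thm.~\ref{thm:mgn_ridges_universal} while replacing the explicit sigmoidal construction of Lemma~\ref{lemma:universalR_to_R} with the abstract density hypothesis on $\sigma$, exactly as Cor.~\ref{cor:gn_sum_of_ridges} relaxes the nonpolynomial hypothesis of Thm.~\ref{thm:gn_ridges_universal}. First I would take any $F\in\mathcal{F}$ and write $\nabla F(\x) = \sum_{i=1}^N \a_i\,\psi_i'(\a_i^\top\x + b_i)$, noting that convexity of each ridge profile $\psi_i$ makes every derivative $\psi_i'\in C^0(\R)$ nondecreasing. Each affine map $\a_i^\top(\cdot)+b_i$ carries the compact domain $[0,1]^d$ into a compact interval $\mathcal{S}_i\subset\R$, so $\psi_i'$ restricted to $\mathcal{S}_i$ is precisely a continuous nondecreasing function on a compact set, which is the object the hypothesis on $\sigma_i$ is designed to approximate.

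Next, given a target accuracy $\epsilon>0$, I would invoke the universal-approximation hypothesis to select, for each $i$, a realization of $\sigma_i$ that is nondecreasing and satisfies $\sup_{x\in\mathcal{S}_i}|\psi_i'(x)-\sigma_i(x)|<\epsilon_i$ for a suitably small $\epsilon_i$. The uniform bound on $\bigl\|\nabla F(\x) - \sum_i \a_i\,\sigma_i(\a_i^\top\x+b_i)\bigr\|$ then follows verbatim from the triangle-inequality estimate in the proof of Thm.~\ref{thm:gn_ridges_universal}: the error is at most $\sum_i \epsilon_i\|\a_i\|$, which is driven below $\epsilon$ by choosing the $\epsilon_i$ small. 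Collecting the rows $\a_i^\top$ into $\W$ and the scalars $b_i$ into the inner bias vector recovers the architecture in~\eqref{eq:single_layer_gn} with an elementwise activation $\sigma$.

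The single point requiring care, and the main obstacle, is verifying that the approximating network is an \mgn\ rather than merely a \gn. This is where the convex-ridge setting departs from Cor.~\ref{cor:gn_sum_of_ridges}: we must read the hypothesis ``universal approximator of continuous, nondecreasing functions'' as guaranteeing that the chosen approximants $\sigma_i$ are \emph{themselves} nondecreasing, so that the elementwise activation $\sigma$ has an everywhere-PSD (in fact diagonal, nonnegative) Jacobian. With each $\sigma_i$ nondecreasing, Prop.~\ref{prop:single_layer_mgn} certifies that the resulting network is an \mgn, and the uniform convergence established above yields universal approximation of $\nabla\mathcal{F}$. The remaining steps are routine and identical to those already spelled out for sums of general ridge functions.
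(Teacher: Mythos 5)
Your proposal is correct and follows essentially the same route as the paper, which proves this corollary by simply rerunning the argument of Thm.~\ref{thm:mgn_ridges_universal} (itself resting on the triangle-inequality estimate from Thm.~\ref{thm:gn_ridges_universal}) with the explicit construction of Lemma~\ref{lemma:universalR_to_R} replaced by the abstract density hypothesis on $\sigma$. Your added care in reading the hypothesis so that the chosen approximants $\sigma_i$ are themselves nondecreasing---hence the network is a genuine \mgn\ via Prop.~\ref{prop:single_layer_mgn}---matches the paper's intent (cf.\ Rem.~\ref{rem:mgn_mgn_activations}) and correctly identifies the one point where the convex case differs from Cor.~\ref{cor:gn_sum_of_ridges}.
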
}

\subsection{Gradients of (convex monotone) transformations of sums of (convex) ridges}
\label{sec:transformations}
This section presents architectures that parameterize a broader subset of (monotone) gradient functions than those discussed in Sec.~\ref{sec:ridges}. We specifically examine methods for composing vector-valued gradient networks with scalar-valued networks to learn gradients of transformations of sums of ridges. We then adapt these networks to learn gradients of convex, monotone transformations of sums of convex ridges. 

\begin{definition}[(Convex monotone) transformation of a sum of (convex) ridge functions]
    $F \in C^k(\R^d)$ is a transformation of a finite sum of ridge functions if it can be written as 
    \begin{equation}
        F(\x) = T\left(\sum_{i=1}^N \psi_i(\bs{a}_i ^\top \x + b_i)\right)
        \label{eq:transformed_ridges}
    \end{equation}
    where $\psi_i, T \in C^k(\R)$. $F$ is a convex, monotone transformation of a finite sum of convex ridge functions if $\psi_i, T \in C_{\times}^k (\R)$ and $T$ is additionally nondecreasing\cite{boyd2004convex}.
    \label{def:transformed_ridge_function}
\end{definition}
Def.~\ref{def:transformed_ridge_function} closely resembles Def.~\ref{def:ridge_function} with the addition of a transformation $T$ applied to the sum of ridges. Before describing methods for learning gradients of such functions, we prove a useful lemma which shows that approximating $\nabla F$ with $\nabla G$ implies that $G$ can approximate $F$.

\begin{lemma}
    Let $\epsilon > 0$, and $f,\hat{f}: [0,1]^d \to \R$ be differentiable.
    If 
    $\sup_{\x\in[0,1]^d}\|\nabla \hat{f}(\x) - \nabla f(\x)\| \leq \epsilon$
    and there exists $\x_0\in[0,1]^d$ with known $f(\x_0)$, then $\hat{f}$ can approximate $f$: 
    $\sup_{\x\in[0,1]^d}|\hat{f}(\x) - f(\x)| \leq \epsilon \sqrt{d}$
    \label{lemma:mean_value_theorem}
\end{lemma}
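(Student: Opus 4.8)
The plan is to reduce the claim to a statement about the difference function and then invoke the mean value theorem along straight-line paths. Set $h = \hat{f} - f$, which is differentiable on $[0,1]^d$, and note that its gradient is exactly the error of the gradient approximation, so the hypothesis gives $\sup_{\x\in[0,1]^d}\|\nabla h(\x)\| \leq \epsilon$. The phrase ``known $f(\x_0)$'' is read as the anchoring condition $\hat{f}(\x_0) = f(\x_0)$: since $\hat{f}$ is recovered from $\nabla \hat{f}$ only up to an additive constant, we fix that constant so the approximator matches $f$ at the reference point $\x_0$. Thus $h(\x_0) = 0$, and the goal becomes showing $\sup_{\x}|h(\x)| \leq \epsilon\sqrt{d}$.

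First I would fix an arbitrary $\x \in [0,1]^d$ and consider the segment $\x_0 + s(\x - \x_0)$ for $s \in [0,1]$. Because $[0,1]^d$ is convex, this segment stays in the domain, so $h$ is differentiable all along it and the scalar-valued mean value theorem applies: there is a point $\bs{\xi}$ on the segment with $h(\x) - h(\x_0) = \nabla h(\bs{\xi})^\top (\x - \x_0)$. Using $h(\x_0) = 0$, Cauchy--Schwarz, and the gradient bound gives $|h(\x)| = |\nabla h(\bs{\xi})^\top (\x - \x_0)| \leq \|\nabla h(\bs{\xi})\|\,\|\x - \x_0\| \leq \epsilon\,\|\x - \x_0\|$.

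It then remains to bound the diameter of the domain. Since every coordinate of $\x$ and of $\x_0$ lies in $[0,1]$, each coordinate difference has magnitude at most $1$, so $\|\x - \x_0\|^2 \leq d$ and hence $\|\x - \x_0\| \leq \sqrt{d}$. Combining this with the previous display yields $|h(\x)| \leq \epsilon\sqrt{d}$, and taking the supremum over $\x \in [0,1]^d$ gives $\sup_{\x}|\hat{f}(\x) - f(\x)| \leq \epsilon\sqrt{d}$, as claimed.

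The argument is short and the only genuinely substantive points are the two structural facts about the domain: its convexity (needed so the mean value theorem can be run on an in-domain segment) and its finite diameter $\sqrt{d}$ (which produces the dimensional factor in the bound). The main subtlety, rather than any hard estimate, is pinning down the anchoring hypothesis; if $\hat{f}$ and $f$ were not required to agree (or be normalized to agree) at some point, they could differ by an arbitrary constant and no uniform bound would be possible. One could equivalently replace the mean value theorem step with the fundamental theorem of calculus, writing $h(\x) = \int_0^1 \nabla h(\x_0 + s(\x - \x_0))^\top (\x - \x_0)\, ds$ and bounding the integrand pointwise by $\epsilon\sqrt{d}$; both routes give the same constant.
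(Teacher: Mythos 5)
Your proof is correct and is essentially the same as the paper's: both define the difference function ($p$ in the paper, your $h$), anchor it to zero at $\x_0$ via the constant of integration, apply the multivariate mean value theorem along the segment to $\x_0$, and finish with Cauchy--Schwarz plus the diameter bound $\|\x-\x_0\|\leq\sqrt{d}$. Your explicit remark on why the anchoring hypothesis is indispensable makes the paper's slightly terse phrasing (``constant of integration determined by $\x_0$'') precise, but it is the same argument.
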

\begin{proof}
    Let $p(\x) = \hat{f}(\x) - f(\x)$, where $\hat{f}(\x)$ is the antiderivative of the known $\nabla \hat{f}$ with a constant of integration determined by $\x_0$. Note that $\nabla p(\x) = \nabla \hat{f}(\x) - \nabla f(\x)$. By the multivariate mean value theorem~\cite{rudin1964principles}, $\forall \x,\y\in[0,1]^d,~\exists c\in(0,1)$ such that $\z = (1-c)\x + c\y$ and 
    \begin{align}
        p(\x) - p(\y) &= \nabla p(\z)^\top (\x - \y)\\
        \hat{f}(\x) - f(\x) - \hat{f}(\y) + f(\y) &= (\nabla \hat{f}(\z) - \nabla f(\z))^\top (\x - \y)
    \end{align}
    Let $\y = \x_0$ so that $-\hat{f}(\y) + f(\y) = 0$. By the Cauchy-Schwarz inequality and an upper bound on $\|\x - \x_0\|$,
    \begin{align}
        |\hat{f}(\x) - f(\x)| &\leq \|\nabla \hat{f}(\z) - \nabla f(\z)\|\|\x - \x_0\| \leq \epsilon\sqrt{d}
    \end{align}
    We note that if $\nabla \hat{f}_n$ is a sequence of functions that converges to $\nabla f$, with a corresponding sequence of $\epsilon_n > 0$ converging to $0$, then the sequence of antiderivatives $\hat{f}_n$ converges to $f$ regardless of the dimension $d$.
\end{proof}

Using the above lemma, we describe a method for learning gradients of functions specified in Def.~\ref{def:transformed_ridge_function}.

\begin{theorem}[Universal Gradient Approximation for Transformed Sums of Ridges]
    Let $\mathcal{F}$ be functions in $C^1([0,1]^d)$ expressible as  transformations of finite sums of ridge functions. Let $G \in C^1([0,1]^d)$, $g(\x) = \nabla_{\x} G(\x)$, and $\gamma \in C^0(\R)$.
    Let $h$ be of the form
    \begin{align}
        h(\x) &= \gamma(G(\x) + \beta)\cdot g(\x)
        \label{eq:composition_for_transformations}
    \end{align}
    where $\beta\in\R$ is a learnable bias. If $\gamma$ universally approximates continuous functions on any compact subset of $\R$ and $g$ universally approximates gradients of sums of ridge functions on $[0,1]^d$,
    then functions of the form $h$ are \gns\ that universally approximate $\nabla\mathcal{F}$.
    \label{thm:universal_transf_ridges}
\end{theorem}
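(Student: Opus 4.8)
The plan is to exploit the chain-rule structure of the target gradient and match it factor-by-factor with the two factors of $h$. Writing $S(\x) = \sum_{i=1}^N \psi_i(\a_i^\top\x + b_i)$, every $F \in \mathcal{F}$ has the form $F = T\circ S$ with $T,\psi_i \in C^1(\R)$, so by the chain rule
\begin{equation}
    \nabla F(\x) = T'(S(\x))\,\nabla S(\x).
\end{equation}
Here $\nabla S$ is a gradient of a sum of ridges and $T'(S(\x))$ is a continuous scalar function of $\x$; the architecture $h(\x) = \gamma(G(\x)+\beta)\,g(\x)$ is designed so that $g$ captures $\nabla S$ and $\gamma(G(\x)+\beta)$ captures $T'(S(\x))$. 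First I would verify that every such $h$ is a \gn: since $\gamma \in C^0(\R)$, it admits an antiderivative $\Gamma$ with $\Gamma' = \gamma$, and then $h = \nabla\big[\Gamma(G(\x)+\beta)\big]$ because $\nabla[\Gamma(G+\beta)] = \Gamma'(G+\beta)\nabla G = \gamma(G+\beta)\,g$. Thus $h$ is the gradient of a $C^1$ potential and Def.~\ref{def:gn} is satisfied.

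For universal approximation, fix $F = T\circ S \in \mathcal{F}$. Since $g$ universally approximates gradients of sums of ridges on $[0,1]^d$, there is a sequence $g_n = \nabla G_n$ with $\sup_{\x\in[0,1]^d}\|g_n(\x) - \nabla S(\x)\| = \epsilon_n \to 0$. Choosing the bias $\beta_n$ so that $G_n(\x_0)+\beta_n = S(\x_0)$ at some fixed $\x_0$, Lemma~\ref{lemma:mean_value_theorem} applied to $S$ and the antiderivative $G_n + \beta_n$ of $g_n$ gives $\sup_{\x}|G_n(\x)+\beta_n - S(\x)| \le \epsilon_n\sqrt{d} \to 0$; thus the argument fed into $\gamma$ already converges to $S$ as a by-product of the gradient approximation. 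Because $S([0,1]^d)$ is compact, the ranges of $G_n+\beta_n$ eventually lie in a fixed compact interval $\mathcal{K}$, and the hypothesis that $\gamma$ universally approximates continuous functions on compacts supplies $\gamma_n \to T'$ uniformly on $\mathcal{K}$ (with $T'\in C^0(\R)$ since $T\in C^1$).

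Finally I would combine the pieces, with $h_n(\x) = \gamma_n(G_n(\x)+\beta_n)g_n(\x)$, via
\begin{equation}
    h_n - \nabla F = \gamma_n(G_n+\beta_n)\,[g_n - \nabla S] + [\gamma_n(G_n+\beta_n) - T'(S)]\,\nabla S.
\end{equation}
The first term vanishes uniformly because $\gamma_n(G_n+\beta_n)$ is uniformly bounded on $\mathcal{K}$ (as $\gamma_n\to T'$ and $T'$ is bounded there) while $\|g_n - \nabla S\|\to 0$; the second vanishes because $\|\nabla S\|$ is bounded on $[0,1]^d$ and $|\gamma_n(G_n+\beta_n) - T'(S)|\to 0$. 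The main obstacle is this last claim, which couples two approximations: I would split it as $|\gamma_n(G_n+\beta_n) - T'(G_n+\beta_n)| + |T'(G_n+\beta_n) - T'(S)|$, bounding the first by uniform convergence $\gamma_n\to T'$ on $\mathcal{K}$ and the second by uniform continuity of $T'$ on $\mathcal{K}$ together with $G_n+\beta_n \to S$. Guaranteeing that the bias-adjusted scalar network $G_n+\beta_n$ faithfully tracks $S$, so that $T'$ may be evaluated at a reliable proxy for $S$, is precisely what Lemma~\ref{lemma:mean_value_theorem} delivers and is the crux of the argument.
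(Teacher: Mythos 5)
Your proof is correct and follows essentially the same route as the paper's: the chain-rule factorization $\nabla F = T'(S)\,\nabla S$, Lemma~\ref{lemma:mean_value_theorem} with a bias adjustment to make $G_n+\beta_n$ track $S$, and the antiderivative $\Gamma$ of $\gamma$ to certify that $h$ is a \gn. The only difference is one of completeness: where the paper asserts that each factor ``can learn the corresponding function within arbitrary error,'' you explicitly carry out the error combination (the telescoping split, uniform boundedness of $\gamma_n$ on a fixed compact interval, and uniform continuity of $T'$ to handle evaluating it at the proxy $G_n+\beta_n$ rather than at $S$), which makes rigorous the coupling the paper's proof leaves implicit.
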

\begin{proof}

If $F \in \mathcal{F}$, then $\nabla F$ is of the same form as $h$:
\begin{align}
    \nabla F &= T'\left(\sum_{i=1}^N \psi_i(\bs{a}_i^\top \x + b_i)\right)\cdot\left(\sum_{i=1}^N \bs{a}_i\psi'_i(\bs{a}_i^\top \x + b_i)\right)
    \label{eq:transf_grad}
\end{align}
By definition, $\gamma$ can approximate $T'$ and $g(\x)$ can approximate $\sum_{i=1}^N \bs{a}_i\psi_i'(\bs{a}_i^\top \x + b_i)$ within arbitrary error. By Lemma~\ref{lemma:mean_value_theorem}, we get that $G(\x) + \beta$ can simultaneously approximate $\sum_{i=1}^N \psi_i(\bs{a}_i^\top \x + b_i)$.
Thus, each function in~\eqref{eq:composition_for_transformations} can learn the corresponding function in~\eqref{eq:transf_grad} within arbitrary error. Furthermore,~\eqref{eq:transf_grad} is a \gn\ as it corresponds to $\nabla\Gamma(G(\x) + \beta)$, where, by continuity of $\gamma$ on $\R$, $\Gamma$ is the antiderivative of $\gamma$ on the compact codomain of $G(\x) + \beta$.
\end{proof}
Thm.~\ref{thm:universal_transf_ridges} demonstrates that gradients of functions specified by Def.~\ref{def:transformed_ridge_function} can be learned by parameterizing $\gamma$ as a scalar-valued neural network (since neural networks are universal function approximators~\cite{cybenko1989approximation}) and taking $g(\x)$ to be a network of the form given in Thm.~\ref{thm:gn_ridges_universal}, which can approximate gradients of sums of ridges to arbitrary error. Furthermore, Cor.~\ref{cor:gn_sum_of_ridges} allows us to parameterize $g$ with a single hidden layer neural network whose antiderivative is simple to compute, hence we can easily obtain the form for $G$, the antiderivative of $g$, in Thm.~\ref{thm:universal_transf_ridges}.


We now extend Thm.~\ref{thm:universal_transf_ridges} to specifically use \mgns\ to learn gradients of \textit{monotone convex} transformations of sums of \textit{convex} ridges. The parameterization requires a function $\gamma$ that universally approximates monotone nonnegative functions. The following construction of $\gamma$ leverages the universality of the \mgn\ on $\R$, as proved in Lemma~\ref{lemma:universalR_to_R}. 

\begin{lemma}
    Let $\mathcal{F}$ be the set of nondecreasing functions in $C^0([0,1])$. Let $\tau$ universally approximate nondecreasing functions in $C^0([0,1])$ and $\rho$ be uniformly continuous and strictly increasing. Then functions of the form $\rho (\tau(x))$ universally approximate $\mathcal{F}$.
    \label{lemma:universal_nonnegative}
\end{lemma}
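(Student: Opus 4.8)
The plan is to reduce this composite approximation to the already-established universality of $\tau$ by ``peeling off'' the outer map $\rho$. Since $\rho$ is strictly increasing and continuous (uniform continuity implies continuity), it is a homeomorphism onto its range and admits a continuous, strictly increasing inverse $\rho^{-1}$ defined there. Given a target nondecreasing $f \in \mathcal{F}$, I would designate $g = \rho^{-1} \circ f$ as the function that $\tau$ should approximate, the idea being that $\rho(\tau(x)) \approx \rho(g(x)) = f(x)$ once $\tau$ tracks $g$.

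First I would verify that $g$ is an admissible target for $\tau$, i.e. that $g \in C^0([0,1])$ and $g$ is nondecreasing. Continuity is immediate from composing the continuous $f$ with the continuous $\rho^{-1}$, and monotonicity follows because composing a nondecreasing function with a strictly increasing one preserves order: for $x \leq y$, $f(x) \leq f(y)$ yields $\rho^{-1}(f(x)) \leq \rho^{-1}(f(y))$. By hypothesis, $\tau$ universally approximates nondecreasing functions in $C^0([0,1])$, so there is a sequence of approximators $\tau_n$ with $\sup_{x \in [0,1]} |\tau_n(x) - g(x)| \to 0$.

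Next I would transfer this uniform convergence through the outer map to obtain $\rho(\tau_n(x)) \to \rho(g(x)) = f(x)$ uniformly. This is exactly where uniform continuity of $\rho$ enters: given $\epsilon > 0$, choose $\delta > 0$ so that $|a-b| < \delta \implies |\rho(a) - \rho(b)| < \epsilon$; then for $n$ large enough that $\sup_x |\tau_n(x) - g(x)| < \delta$, we get $\sup_x |\rho(\tau_n(x)) - f(x)| = \sup_x |\rho(\tau_n(x)) - \rho(g(x))| < \epsilon$. Since $\rho(\rho^{-1}(f)) = f$, the composites $\rho \circ \tau_n$ converge uniformly to $f$ on $[0,1]$, which establishes that functions of the form $\rho(\tau(x))$ universally approximate $\mathcal{F}$.

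The main obstacle I anticipate is the well-definedness of $g = \rho^{-1} \circ f$: since $\rho^{-1}$ is only defined on the range of $\rho$, this step requires the range of each target $f$ to lie inside the range of $\rho$. This holds automatically when $\rho$ is surjective onto $\R$ (e.g. $\rho(x) = x$), and in the intended nonnegative application -- where $\mathcal{F}$ consists of nonnegative nondecreasing functions and $\rho$ (such as softplus) maps $\R$ onto the positive reals -- it holds for strictly positive targets, with boundary values recovered in the limit. A secondary point worth flagging is that global uniform continuity of $\rho$ is actually stronger than strictly needed: because $g$ is continuous on the compact interval $[0,1]$ its range is compact, and for large $n$ the ranges of $\tau_n$ lie in a fixed compact set on which $\rho$ is uniformly continuous by Heine-Cantor; the stated hypothesis merely lets me invoke uniform continuity directly without this compactness bookkeeping.
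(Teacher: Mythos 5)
Your proof follows essentially the same route as the paper's: define $g = \rho^{-1}\circ f$ (invertibility from strict monotonicity), observe that $g$ is continuous and nondecreasing so the hypothesis on $\tau$ applies, and then transfer the $\delta$-approximation of $g$ through $\rho$ to an $\epsilon$-approximation of $f$ using uniform continuity of $\rho$. The one addition is that you explicitly flag the well-definedness of $\rho^{-1}\circ f$, which requires the range of $f$ to lie in the range of $\rho$ --- a caveat the paper's proof silently assumes (and which matters in its intended application with $\rho$ as softplus, whose range excludes $0$), so your version is if anything slightly more careful.
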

\begin{proof}
Let $f \in \mathcal{F}$ and $\epsilon > 0$. Since $\rho$ is strictly increasing, it is invertible. Since $\rho^{-1}(f(x))$ is continuous and nondecreasing, there exists $\tau$ that approximates $\rho^{-1}(f(x))$ with arbitrary error $\delta>0$. By uniform continuity of $\rho$ there exists $\delta > 0$ such that $ \forall x,~|\tau(x) - \rho^{-1}(f(x))| < \delta$ implies $|\rho(\tau(x)) - f(x)|<\epsilon$. 
\end{proof}

Lemma~\ref{lemma:universal_nonnegative} demonstrates how composing a universal approximator of differentiable monotone functions with another function can yield a universal approximator of differentiable monotone functions that map to a subset of $\R$. Lipschitz functions are uniformly continuous, thus taking $\rho$ to be softplus and $\tau$ to be an \mgn\ in Lemma~\ref{lemma:universalR_to_R} yields a universal approximator of differentiable, nonnegative, monotone functions $\gamma$. We extend Thm.~\ref{thm:universal_transf_ridges} with this construction for $\gamma$.

\begin{theorem} [Universal Gradient Approximation for Transformed Sums of Convex Ridges]
Let $\mathcal{F}$ be functions in $C_{\times}^1([0,1]^d)$ expressible as convex monotone transformations of finite sums of convex ridge functions. Let $h$ be as in~\eqref{eq:composition_for_transformations}, where $G\in C^1_\times([0,1]^d)$, $\beta\in\R$ is a learnable bias, and $\gamma \in C^0(\R)$ is nondecreasing and nonnegative. 
Let $\gamma$ universally approximate nondecreasing and nonnegative functions in $C^0(\R)$ on compact subset of $\R$. Let $g(\x)=\nabla_{\x} G(\x)$ universally approximate gradients of sums of \textit{convex} ridge functions on $[0,1]^d$. Functions of the form $h$ are \mgns\ that universally approximate $\nabla\mathcal{F}$.
\label{thm:universal_mgn_transf_ridges}
\end{theorem}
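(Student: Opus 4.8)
The plan is to mirror the proof of Thm.~\ref{thm:universal_transf_ridges} almost verbatim, but with every ``general'' ingredient replaced by its convexity-preserving counterpart, and then to verify that the resulting $h$ is not merely a \gn\ but in fact an \mgn. First I would fix $F\in\mathcal{F}$ and write out its gradient. Since $F(\x)=T\!\left(\sum_{i=1}^N\psi_i(\a_i^\top\x+b_i)\right)$ with each $\psi_i$ convex and $T$ convex and nondecreasing, the chain rule gives
\begin{align}
    \nabla F = T'\!\left(\sum_{i=1}^N \psi_i(\a_i^\top \x + b_i)\right)\cdot\left(\sum_{i=1}^N \a_i\psi_i'(\a_i^\top \x + b_i)\right),
\end{align}
which is exactly the product form of $h$ in~\eqref{eq:composition_for_transformations} with $\gamma\leftrightarrow T'$ and $g\leftrightarrow\sum_i\a_i\psi_i'$. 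The key observation specific to the convex case is that $T'$ is nonnegative (since $T$ is nondecreasing) and nondecreasing (since $T$ is convex), so $T'$ lies in precisely the class that, by Lemma~\ref{lemma:universal_nonnegative} with $\rho=\mathrm{softplus}$ and $\tau$ an \mgn\ from Lemma~\ref{lemma:universalR_to_R}, the chosen $\gamma$ can universally approximate.

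Next I would assemble the approximation. By Thm.~\ref{thm:mgn_ridges_universal} (or Cor.~\ref{cor:mgn_sum_of_ridges}), $g$ universally approximates $\sum_i\a_i\psi_i'(\a_i^\top\x+b_i)$ on $[0,1]^d$; by Lemma~\ref{lemma:mean_value_theorem}, the antiderivative $G$ (plus the learnable bias $\beta$) simultaneously approximates the inner sum $\sum_i\psi_i(\a_i^\top\x+b_i)$; and by construction $\gamma$ approximates $T'$ on the compact codomain of $G+\beta$. A product-of-approximations bound, identical in spirit to the one in Thm.~\ref{thm:universal_transf_ridges}, then shows $h\to\nabla F$ uniformly. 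This part is entirely routine given the earlier results, so I would keep it terse and simply invoke ``the remainder follows as in the proof of Thm.~\ref{thm:universal_transf_ridges}.''

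The genuinely new content—and the step I expect to be the main obstacle—is showing that $h$ is an \mgn\ rather than merely a \gn. I would argue structurally: because $\gamma$ is nonnegative and nondecreasing, it is the derivative of some convex nondecreasing antiderivative $\Gamma$ on the compact codomain of $G+\beta$; because $g=\nabla G$ with $G\in C^1_\times([0,1]^d)$, the function $G$ is convex; hence $h=\nabla\big(\Gamma(G(\x)+\beta)\big)$, and $\Gamma(G(\x)+\beta)$ is the composition of a convex nondecreasing scalar function with a convex function, which is convex~\cite{boyd2004convex}. Therefore $h$ is the gradient of a convex function, i.e.\ an \mgn\ by Def.~\ref{def:mgn}. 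The delicate point to state carefully is that this convexity-of-composition argument requires both $\Gamma$ nondecreasing \emph{and} $G$ convex simultaneously—neither alone suffices—which is exactly why Def.~\ref{def:transformed_ridge_function} imposes that $T$ be nondecreasing and each $\psi_i$ be convex, and why we route $\gamma$ through Lemma~\ref{lemma:universal_nonnegative} to guarantee nonnegativity rather than using an arbitrary scalar network as in Thm.~\ref{thm:universal_transf_ridges}.
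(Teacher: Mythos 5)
Your proposal is correct and follows essentially the same route as the paper: decompose $\nabla F$ via the chain rule, observe that convexity and monotonicity of $T$ make $T'$ nonnegative and nondecreasing so that Lemma~\ref{lemma:universal_nonnegative} (with $\tau$ from Lemma~\ref{lemma:universalR_to_R} and $\rho$ softplus) supplies the required $\gamma$, and then reuse the approximation argument of Thm.~\ref{thm:universal_transf_ridges} with the convex-ridge ingredients. Your explicit verification that $h=\nabla\bigl(\Gamma(G(\x)+\beta)\bigr)$ is an \mgn\ via convexity of the composition of a convex nondecreasing $\Gamma$ with convex $G$ is sound and merely spells out what the paper folds into its ``remainder is nearly identical'' remark.
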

\begin{proof}
Let $F\in\mathcal{F}$. Then $\nabla F(\x)$ is as in~\eqref{eq:transf_grad}, where convexity of $\psi_i$ and convex monotonicity of $T$ respectively imply that the $\psi_i'$ are monotone and $T'$ is monotone nonnegative. 
Lemma~\ref{lemma:universal_nonnegative} provides a construction for continuous $\gamma$ that can approximate $T'$ within arbitrary error. For instance, $\tau$ can be as described in Lemma~\ref{lemma:universalR_to_R} and $\rho$ can be any uniformly continuous, nondecreasing function such as softplus or ReLU. The remainder of the proof is nearly identical to that of Thm.~\ref{thm:universal_transf_ridges} with $G, g$ appropriately corresponding to convex ridge functions. 
\end{proof}

Thm.~\ref{thm:universal_mgn_transf_ridges} demonstrates that $h$ can be parameterized using an \mgn\ to yield the gradient of any function expressible as the convex monotone transformation of a sum of convex ridges. As in Thm.~\ref{thm:universal_transf_ridges}, knowledge of the antiderivative of the \mgn\ $g(\x)$ is required. However, as shown in Cor.~\ref{cor:mgn_sum_of_ridges}, there exist \mgns\ with known antiderivatives that can learn the gradient of any sum of convex ridges.

{\subsection{Learning Subgradients of Convex Functions}
\label{sec:subgradients}
The previous subsections discussed parameterizing and learning gradients of {differentiable} convex functions. In this section, we relax the differentiability assumption and show that \mgns\ can effectively characterize subgradients of \textit{subdifferentiable} convex functions.
\begin{definition}[Subgradient, Subdifferential]
    A subgradient of $F \in C^{0}_{\times}(\R^d)$ at a point $\x$ is any $\v$ that satisfies 
    \begin{equation}
         \v^\top (\y - \x) \leq F(\y) - F(\x),\;\forall \y \in \R^d
    \end{equation}
    The subdifferential of $F$ at $\x$, denoted by $\partial F(\x)$, is the set of all subgradients of $F$ at $\x$.
\end{definition}
If $F$ is convex and differentiable at a point $\x$, then the subgradient at $\x$ is uniquely the gradient of $F$ at $\x$. We also note that any convex function is subdifferentiable on the interior of its domain~\cite{rockafellar1970convex}. To learn subgradients of convex functions, we use the following lemma.  
\begin{lemma}(\!\cite{rockafellar1970convex} Theorem 24.5)
Let $\mathcal{S} \subseteq \R^d$ be open, convex and let $F\in C^0_\times(\mathcal{S})$ be finite. Let $G_i \in C^0_\times(\mathcal{S})$ be a sequence of finite functions converging pointwise to $F$ on $\mathcal{S}$. For any $\epsilon > 0$, there exists index $i_0$ such that $\forall i \geq i_0,\;\partial G_{i}(\x) \subset \partial F(\x) + \epsilon \mathcal{B}$ where $\mathcal{B}$ is the Euclidean unit ball of $\R^d$.
\label{lemma:rockafeller_subgradient}
\end{lemma}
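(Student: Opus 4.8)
The plan is to prove this by a compactness-and-contradiction argument, relying only on the pointwise convergence hypothesis together with the local regularity that finite convex functions automatically enjoy. Throughout, I fix the point $\x \in \mathcal{S}$ and $\epsilon > 0$, and I let $\mathcal{B}$ denote the closed Euclidean unit ball as in the statement.

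First, I would upgrade pointwise convergence to \emph{locally uniform} convergence. Since $\mathcal{S}$ is open, choose $r > 0$ with $\x + r\mathcal{B} \subset \mathcal{S}$. Finite convex functions on an open convex set that converge pointwise converge uniformly on every compact subset (this is the same mechanism already invoked for the $C^1$ case underlying Lemma~\ref{lemma:rockafeller}, and it holds for merely continuous convex functions). Consequently the $G_i$ are eventually uniformly bounded on $\x + r\mathcal{B}$, and since each is convex, they share a common Lipschitz constant $M$ on the inner ball $\x + \tfrac{r}{2}\mathcal{B}$ — the Lipschitz constant of a convex function on an inner ball is controlled by its oscillation on the outer ball. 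This quantitative regularity is what the rest of the argument exploits.

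Second, I would argue by contradiction. Suppose the conclusion fails for this $\epsilon$: then for infinitely many indices $i$ there is a subgradient $\v_i \in \partial G_i(\x)$ with $\mathrm{dist}(\v_i, \partial F(\x)) \geq \epsilon$. For each such $i$ large enough, $\|\v_i\| \leq M$, because a subgradient of a convex function at an interior point is bounded in norm by the local Lipschitz constant. Hence the offending $\v_i$ lie in the compact set $M\mathcal{B}$, so I may pass to a subsequence $\v_{i_k} \to \v$ with $\mathrm{dist}(\v, \partial F(\x)) \geq \epsilon$, and in particular $\v \notin \partial F(\x)$. The contradiction then comes from passing to the limit in the defining inequality: for every $\y \in \mathcal{S}$ and every $k$ we have $\v_{i_k}^\top(\y - \x) \leq G_{i_k}(\y) - G_{i_k}(\x)$, and sending $k \to \infty$ with $\v_{i_k} \to \v$ and the pointwise convergences $G_{i_k}(\y) \to F(\y)$, $G_{i_k}(\x) \to F(\x)$ yields $\v^\top(\y - \x) \leq F(\y) - F(\x)$ for all $\y$, i.e.\ $\v \in \partial F(\x)$, contradicting $\v \notin \partial F(\x)$.

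The main obstacle is the uniform boundedness of the offending subgradients in the second step. This hinges on two ingredients: promoting pointwise to locally uniform convergence for finite convex functions, and the bound $\|\v\| \leq M$ for $\v \in \partial G_i(\x)$ in terms of the shared local Lipschitz constant. Once uniform boundedness is secured, the compactness extraction and the stability of the subgradient inequality under limits of both the functions and the subgradients are routine. I note that this is precisely Theorem~24.5 of Rockafellar, so in the paper itself one may simply defer to that reference, exactly as Lemma~\ref{lemma:lse_dense} defers to its source.
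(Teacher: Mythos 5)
The paper offers no proof of this lemma at all: it is stated verbatim as Theorem~24.5 of \cite{rockafellar1970convex} and deferred to that reference, exactly as you anticipate in your closing remark. Your proposal therefore supplies a genuinely self-contained argument where the paper has none, and the argument is correct; it is in fact close to the standard textbook proof of the cited theorem. Two points are worth flagging. First, your opening step --- upgrading pointwise to locally uniform convergence for finite convex functions on an open convex set --- is itself a nontrivial convexity theorem (Theorem~10.8 in \cite{rockafellar1970convex}), so your ``self-contained'' proof still imports a result of comparable depth to the one being proved; you should cite it rather than assert it, though the downstream uses (eventual uniform boundedness on $\x + r\mathcal{B}$, hence a shared Lipschitz constant $M$ on $\x + \tfrac{r}{2}\mathcal{B}$, hence $\|\v_i\| \leq M$ for $\v_i \in \partial G_i(\x)$) are all correctly deployed, and the bound on subgradient norms by the local Lipschitz constant is the right mechanism. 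Second, the compactness-and-contradiction step is sound: the negation of the inclusion gives $\mathrm{dist}(\v_i, \partial F(\x)) > \epsilon$ since $\partial F(\x) + \epsilon\mathcal{B}$ is closed, the extracted limit $\v$ inherits $\mathrm{dist}(\v, \partial F(\x)) \geq \epsilon$ by continuity of the distance function, and passing to the limit in $\v_{i_k}^\top(\y - \x) \leq G_{i_k}(\y) - G_{i_k}(\x)$ for each fixed $\y \in \mathcal{S}$ uses only pointwise convergence, yielding $\v \in \partial F(\x)$ and the contradiction. One caveat on scope: your argument fixes $\x$, so the index $i_0$ you produce depends on $\x$ (and $\epsilon$); this matches the sensible pointwise reading of the lemma as used in Sec.~\ref{sec:subgradients}, but Rockafellar's actual Theorem~24.5 is stronger, allowing sequences $\x_i \to \x$ and thereby uniformity over compact subsets --- a strengthening your same compactness extraction delivers with minor modification (carry $\x_{i_k} \to \x$ through the subgradient inequality, using the shared Lipschitz bound near $\x$).
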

The lemma states that if a sequence of convex approximators converges pointwise to a target convex function, then the subdifferentials of the approximators become increasingly similar to those of the target function. If the approximators are additionally differentiable, the subdifferential $\partial G_i(\x)$ is the gradient $\nabla G_i(\x)$ and can be made arbitrarily close to an element in $\partial F(\x)$. Therefore, the existing results in Sec.~\ref{sec:universal_approximation} based on Lemma~\ref{lemma:rockafeller} can be readily adapted to handle subdifferentiable convex functions. 




}
\section{{\gn\ Architecture Variants}}
\label{sec:enhancements}
In this section, we propose specific neural network architectures for parameterizing \gns\ and \mgns. The architectures we propose exhibit universal approximation properties while empirically being more amenable for optimization (as seen in Sec.~\ref{sec:experiments}). In the following subsections, we initially describe conditions under which our proposed architectures are \gns\ and then discuss stricter conditions under which the networks become \mgns.

\subsection{Modular (Monotone) Gradient Networks (\gnm, \mmgn)}
\label{sec:mmgn}

\begin{figure}[htb]
    \centering
    \includegraphics[width=0.6 \linewidth,  trim={2em 27em 37em 5em}, clip]{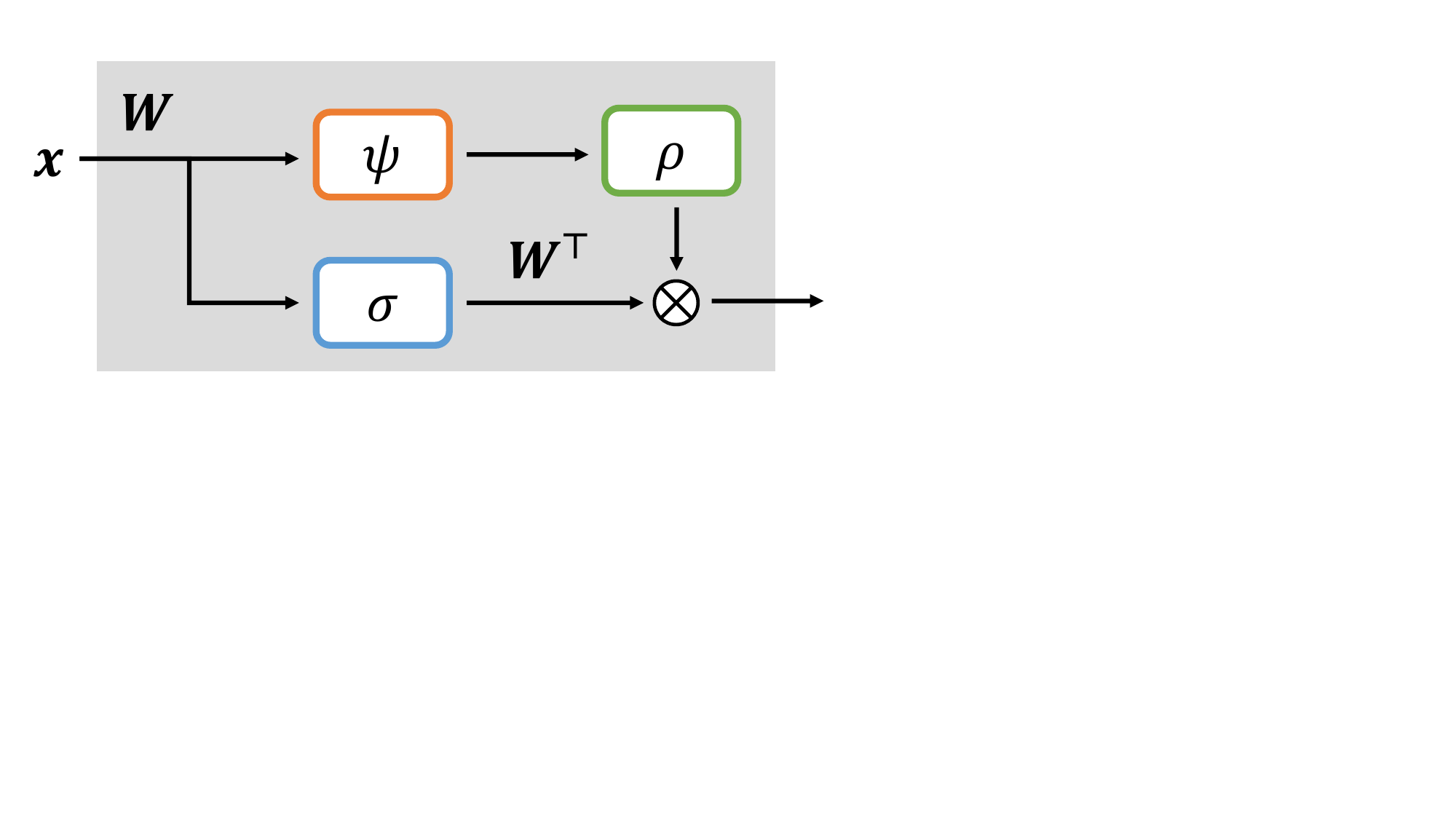}
    \caption{Single module of the modular gradient network (\gnm) defined in~\eqref{eqn:m-mgn_a}-\eqref{eqn:m-mgn_b}.}
    \label{fig:mmgn_module_arch}
\end{figure}

Here we describe modular gradient networks (\gnm) and their monotone counterparts (\mmgn), which are motivated by the discussion of Lemma~\ref{lemma:lse_approx} in Sec.\ref{sec:universal_approximation}. These networks achieve wide, modular architectures by respectively using \gns\ and \mgns, with different weight matrices, as building blocks. 
\gnms\ (\mmgns) can universally approximate a broader range of functions than just those expressible as the sum of (convex) ridges, and thus are more expressive than existing methods \cite{FieldsOfExperts,donoho1994ideal,patchbasedsparsemodelstohigherorderMRFs,hammernik2018learning, EPFL,goujon2024learning}. \gnms\ universally approximate gradients of $L$-smooth functions and transformations of sums of ridges, as shown in Thm.~\ref{thm:gn_universal_approx} and Thm.~\ref{thm:universal_transf_ridges} respectively. Similarly, \mgns\ universally approximate gradients of convex functions and gradients of transformations of sums of ridges, as demonstrated in Thm.~\ref{thm:universal_approximation} and Thm.~\ref{thm:universal_mgn_transf_ridges}. 

The design of \gnms\ leverages the facts that a linear combination of \gns\ and a composition of a \gn\ with a scalar-valued, differentiable function both yield \gns, as described respectively in Rem.~\ref{rem:gn_linear_combinations} and Thm.~\ref{thm:universal_transf_ridges}. 
The following equations define the \gnm\ and generalize the architecture we previously proposed in \cite{Our_ICASSP_Paper}:
\begin{align}
\label{eqn:m-mgn_a}
\z_m &= \W_m\x + \b_m \\
\gnm(\x)&=\bs{a}+\sum_{m=1}^M\rho_m(\phi_m\left(\z_m \right))\W_m^\top\sigma_m\left(\z_m\right)
    \label{eqn:m-mgn_b}
\end{align}
where $\bs{a}$ denotes a learnable bias and $M$ denotes the number of modules -- a hyperparameter tunable based on the application. $\sigma_m, \W_m$ and $\b_m$ respectively denote the activation function, weight matrix, and bias corresponding to the $m^{th}$ module. The $\phi_m$ are vector-to-scalar functions and $\rho_m$ are scalar-to-scalar functions. The block diagram corresponding to a single module output is shown in Fig.~\ref{fig:mmgn_module_arch}. In the following theorem, we provide conditions for $\phi_m, \rho_m,$ and $\sigma_m$ under which the \gnm\ is a \gn.

\begin{theorem}[\gnm\ conditions]
    If $\rho_m, \phi_m, \sigma_m$ are all differentiable and $\sigma_m = \nabla \phi_m$, the \gnm\ in~\eqref{eqn:m-mgn_a}-\eqref{eqn:m-mgn_b} is a \gn.
\end{theorem}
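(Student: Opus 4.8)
The plan is to exhibit $\gnm$ as the gradient of an explicit scalar potential, exploiting Rem.~\ref{rem:gn_linear_combinations} to treat the sum modularly. Since linear combinations of $\gn$s are again $\gn$s, it suffices to show that the bias term $\bs{a}$ and each summand $\rho_m(\phi_m(\z_m))\W_m^\top\sigma_m(\z_m)$ is individually a $\gn$. The bias term is trivially $\nabla(\bs{a}^\top\x)$, so the real work collapses to a single module.

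First I would construct a scalar antiderivative for each module, in the spirit of the composition used in Thm.~\ref{thm:universal_transf_ridges}. Because $\rho_m$ is differentiable, hence continuous, it admits an antiderivative $\Gamma_m$ (for instance $\Gamma_m(s)=\int_0^s\rho_m(u)\,du$) with $\Gamma_m'=\rho_m$ and $\Gamma_m\in C^1(\R)$. I would then propose the candidate potential $F_m(\x)=\Gamma_m(\phi_m(\W_m\x+\b_m))=\Gamma_m(\phi_m(\z_m))$ and verify by the chain rule that its gradient recovers the $m$th module.

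The key computation is the chain rule through the three composed maps. Differentiating the affine inner map $\x\mapsto\W_m\x+\b_m$ contributes the factor $\W_m^\top$; differentiating $\phi_m$ and invoking the hypothesis $\sigma_m=\nabla\phi_m$ turns $\nabla_\x\phi_m(\z_m)$ into $\W_m^\top\sigma_m(\z_m)$; and differentiating the outer $\Gamma_m$ yields the scalar factor $\Gamma_m'(\phi_m(\z_m))=\rho_m(\phi_m(\z_m))$. Multiplying these gives exactly $\rho_m(\phi_m(\z_m))\W_m^\top\sigma_m(\z_m)$, so $\nabla F_m$ equals the $m$th module. Summing over $m$ and adding $\bs{a}^\top\x$ then exhibits $\gnm$ as $\nabla F$ with $F(\x)=\bs{a}^\top\x+\sum_{m=1}^M\Gamma_m(\phi_m(\z_m))$, exactly paralleling the single-layer construction of Prop.~\ref{prop:single_layer_gn}.

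The last step I would address is the regularity demanded by Def.~\ref{def:gn}, namely $F\in C^1(\R^d)$, and this is the only place I expect care is needed. I must confirm $F$ is genuinely continuously differentiable rather than merely differentiable: since $\rho_m,\phi_m,\sigma_m$ are all continuous (being differentiable) and $\W_m$ is linear, the module output $\rho_m(\phi_m(\z_m))\W_m^\top\sigma_m(\z_m)$ is continuous in $\x$, whence $\nabla F$ is continuous and $F\in C^1$. This is a minor obstacle; the conceptual crux — recognizing each module as a gradient — is resolved entirely by the antiderivative construction together with the assumption $\sigma_m=\nabla\phi_m$.
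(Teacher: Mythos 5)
Your proof is correct, but it takes a genuinely different route from the paper. The paper never constructs a potential: it computes the Jacobian of the $m$th module explicitly,
\begin{equation}
    \J_m(\x) = \rho_m'(\phi_m(\z_m))\left(\W_m^\top \sigma_m(\z_m)\right)\left(\W_m^\top \sigma_m(\z_m)\right)^\top + \rho_m(\phi_m(\z_m))\,\W_m^\top \J_{\sigma_m}(\z_m)\W_m,
\end{equation}
observes that the first term is a scaled Gram matrix and the second is symmetric because $\J_{\sigma_m}^\top = \H_{\phi_m}$, and then invokes Lemma~\ref{lemma:gn_jac_sym} plus Rem.~\ref{rem:gn_linear_combinations}. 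You instead exhibit the antiderivative directly, $F(\x) = \a^\top\x + \sum_m \Gamma_m(\phi_m(\z_m))$ with $\Gamma_m' = \rho_m$, and verify it by the chain rule. Your construction is more elementary and in fact needs strictly weaker hypotheses: you use only continuity of $\rho_m$ (for the antiderivative $\Gamma_m$) and $\sigma_m = \nabla\phi_m$, never differentiating $\rho_m$ or $\sigma_m$, whereas the paper's Jacobian computation consumes both $\rho_m'$ and $\J_{\sigma_m}$. Your explicit potential is also exactly the kind of known antiderivative that Thm.~\ref{thm:universal_transf_ridges} and Thm.~\ref{thm:universal_mgn_transf_ridges} require of the inner network $G$, which is a bonus the paper's existence argument does not supply. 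What the paper's route buys in exchange is the Jacobian formula~\eqref{eq:gnm_jac} itself: Cor.~\ref{cor:mmgn_conditions} establishes the monotone (\mmgn) variant by checking that the same two terms are PSD, so the symmetry computation is reused there, while your potential-based argument would need a separate convexity verification of $\Gamma_m \circ \phi_m \circ (\W_m \x + \b_m)$ to cover that corollary.
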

\begin{proof}
    Consider the $m^{th}$ module $\rho_m(\phi_m(\z_m))\W_m^\top \sigma_m(\z_m)$. The Jacobian of the $m^{th}$ module with respect to $\x$ is 
    \begin{align}
        \label{eq:gnm_jac}
        \J_m(\x)&= \rho_m'(\phi_m(\z_m))(\W_m^\top \sigma_{m}(\z_m))(\W_m^\top \sigma_{m}(\z_m))^\top + \rho_m(\phi_m(\z_m))\W^\top_m\J_{\sigma_m}(\z_m)\W_m 
    \end{align}
    The first term on the RHS is a Gram matrix scaled by $\rho'_m(\phi_m(\z_m))$, and hence symmetric. The second term on the RHS is symmetric by Lemma~\ref{lemma:gn_jac_sym} since $\J^\top_{\sigma_m} = \bs{H}_{\phi_m}$. Thus, each module is a \gn\ and the result holds by Rem.~\ref{rem:gn_linear_combinations}.
\end{proof}

Next, we introduce the monotone \gnm:
\begin{corollary}[\mmgn\ conditions]
    The \gnm\ defined by~\eqref{eqn:m-mgn_a}-\eqref{eqn:m-mgn_b} is an \mgn, and referred to as \mmgn\ if, for each module,
    \begin{enumerate}[a)]
        \item $\phi_m : \R^d \to I_m \subseteq \R$ is convex, twice differentiable
        \item $\rho_m: I_m \to\R_{\geq 0}$ is differentiable and monotone
        \item $\sigma_m = \nabla \phi_m$
    \end{enumerate}
    \label{cor:mmgn_conditions}
\end{corollary}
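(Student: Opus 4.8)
The plan is to build directly on the Jacobian computation \eqref{eq:gnm_jac} established in the preceding \gnm\ theorem, which already shows that each module's Jacobian is symmetric (so the \gnm\ is a \gn). To upgrade this to an \mgn, I would show that, under conditions a)--c), the Jacobian $\J_m(\x)$ of each module is additionally positive semidefinite (PSD) for every $\x$, and then invoke the same reasoning as in Prop.~\ref{prop:single_layer_mgn}: a map whose Jacobian is everywhere symmetric and PSD is, via Lemma~\ref{lemma:gn_jac_sym} together with the fact that a PSD Hessian characterizes convexity, the gradient of a convex function, i.e.\ an \mgn.

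The core of the argument is a term-by-term PSD check of the two summands in \eqref{eq:gnm_jac}. The first summand is $\rho_m'(\phi_m(\z_m))\,\v\v^\top$ with $\v = \W_m^\top \sigma_m(\z_m)$; the rank-one outer product $\v\v^\top$ is PSD, and condition b) read as nondecreasing $\rho_m$ gives $\rho_m' \geq 0$, so the whole first term is PSD. For the second summand $\rho_m(\phi_m(\z_m))\,\W_m^\top \J_{\sigma_m}(\z_m)\W_m$, condition c) makes $\J_{\sigma_m}$ equal to the symmetric Hessian $\H_{\phi_m}$, which is PSD since $\phi_m$ is convex and twice differentiable (condition a)); the congruence transform $\W_m^\top \H_{\phi_m}\W_m$ preserves PSD-ness, and the nonnegative scalar $\rho_m \geq 0$ (the stated codomain $\R_{\geq 0}$ in b)) keeps it PSD. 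Since a sum of PSD matrices is PSD, we conclude $\J_m(\x)\succeq 0$ for all $\x$, so each module is an \mgn.

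I would finish by assembling the full network. The \gnm\ in \eqref{eqn:m-mgn_b} is the sum of these $M$ modules plus the constant bias $\bs{a}$. The sum is a conical combination with unit coefficients of \mgns, hence an \mgn\ by Rem.~\ref{rem:mgn_conical_combinations}; adding the constant $\bs{a}$ merely appends the affine term $\bs{a}^\top\x$ to the underlying potential, which preserves convexity. Therefore the \gnm\ satisfying a)--c) is an \mgn.

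The individual steps here are routine, so the main point of care is bookkeeping rather than a deep obstacle: each hypothesis must be used exactly where the corresponding matrix property is needed -- convexity and twice-differentiability of $\phi_m$ for the PSD Hessian, nonnegativity of $\rho_m$ for scaling the second term, and monotonicity of $\rho_m$ (i.e.\ $\rho_m'\geq 0$) for scaling the rank-one first term. The one subtlety worth flagging is the reading of \emph{monotone} in b) as \emph{nondecreasing}: a nonincreasing $\rho_m$ would make $\rho_m' \leq 0$, flipping the sign of the first term and breaking the PSD conclusion.
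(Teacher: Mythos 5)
Your proposal is correct and follows essentially the same route as the paper's proof: a term-by-term PSD check of the two summands in \eqref{eq:gnm_jac} (using $\rho_m' \geq 0$ for the scaled Gram/outer-product term and $\rho_m \geq 0$ together with the PSD Hessian $\H_{\phi_m}$ for the congruence term), concluding via Rem.~\ref{rem:mgn_conical_combinations}. Your explicit handling of the bias $\bs{a}$ as an added affine term in the potential is a small bookkeeping detail the paper leaves implicit, but it does not constitute a different approach.
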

\begin{proof}
    The Jacobian of the $m^{th}$ module is given by~\eqref{eq:gnm_jac}. Convexity of $\phi_m$ implies $J_{\sigma_m}$ is everywhere PSD. Since $\rho_m$ is nonnegative-valued, the second term on the RHS of~\eqref{eq:gnm_jac} is PSD. Monotonicity of $\rho_m$ implies that $\rho_m'$ is everywhere nonnegative, making the first term of~\eqref{eq:gnm_jac} also PSD. Hence, each module is an \mgn\ and the result holds by Rem.~\ref{rem:mgn_conical_combinations}.
\end{proof}

There are several suitable choices for $\rho_m, \phi_m,$ and $\sigma_m$ in Cor.~\ref{cor:mmgn_conditions}. For example: $\phi_m(\x) = \lse(\x)$, $\sigma_m(\x) = \mathrm{softmax}(\x)$ and $\rho_m$ any differentiable, monotone nonnegative function on $\R$ such as softplus. As another example, one can take $\sigma_m$ to be the elementwise $\mathrm{sigmoid}$ activation function with $\phi_m(\x) = \sum_{i} \mathrm{softplus}(x_i)$ and $\rho(x) = \alpha x + \beta, \text{ where } \alpha > 0$. 

\subsection{Cascaded (Monotone) Gradient Networks (\gnc, \cmgn)}
\label{sec:cmgn}
\begin{figure}[htpb]
    \centering
    \includegraphics[width=0.8\linewidth,  trim={1em 32em 15em 1em}, clip]{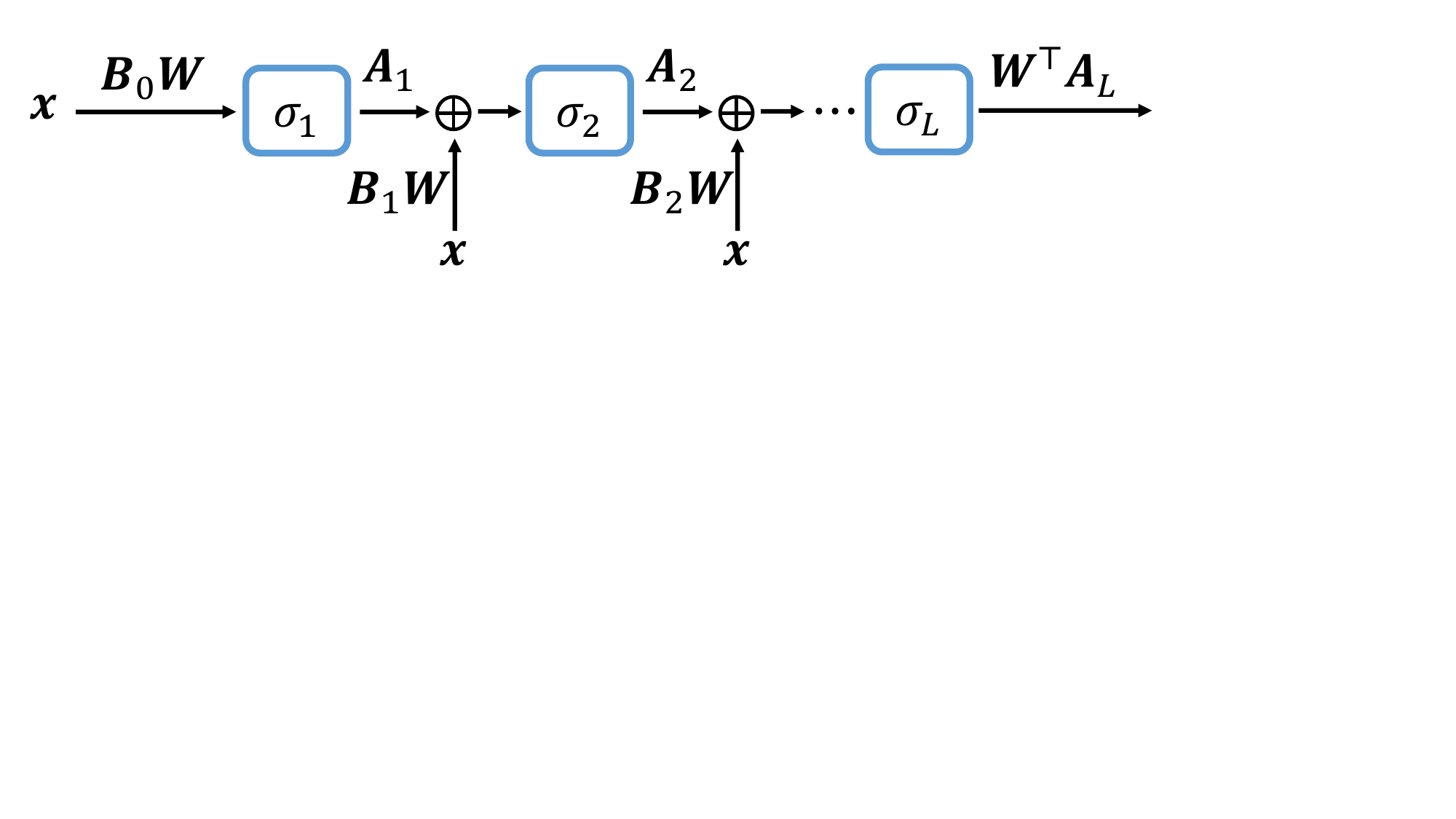}
    \caption{Cascaded gradient network (\gnc), defined in~\eqref{eqn:cmgn_start}-\eqref{eqn:cmgn_end}, with $\bs{A}_\ell = \diag{\balpha_\ell},\bs{B}_\ell = \diag{\bbeta_\ell}$.}
    \label{fig:cmgn_arch}
\end{figure}
In this section we generalize the \gn\ and \mgn, described by~\eqref{eq:single_layer_gn}, to achieve \textit{deeper} networks. The architecture is inspired by Thm.~\ref{thm:gn_ridges_universal}, which states that the \gn\ in~\eqref{eq:single_layer_gn} with elementwise neural network activations can universally approximate the gradient of any function expressible as the sum of ridges. {The networks we propose in this section, namely \gnc\ and \cmgn, are consequently as expressive as the methods in \cite{FieldsOfExperts,donoho1994ideal,patchbasedsparsemodelstohigherorderMRFs,hammernik2018learning, EPFL,goujon2024learning} while featuring a novel and more efficient parameterization of the elementwise activation functions.} We propose a cascaded gradient network (\gnc), illustrated in Fig.~\ref{fig:cmgn_arch} and defined by the layerwise equations:
\begin{align}
    \z_0 &= \bbeta_0 \odot \W\x+\b_0 \label{eqn:cmgn_start}\\
    \z_\ell &= \bbeta_\ell \odot \W\x + \balpha_{\ell} \odot \sigma_\ell(\z_{\ell-1}) + \b_\ell\\
    \gnc(\x) &= \W^\top\left[\balpha_{L} \odot \sigma_L\left(\z_{L-1}\right)\right] + \b_{L}
    \label{eqn:cmgn_end}
\end{align}
where $\z_{\ell},\b_\ell$, and $\sigma_\ell$ respectively denote the output, bias, and activation function at layer $\ell$ of the network and $\odot$ is the Hadamard (entrywise) vector product. The weight matrix $\W$ is shared across all layers whereas the intermediate scaling weight vectors $\balpha_\ell, \bbeta_\ell$ are unique to each layer.
\begin{theorem}[\gnc\ conditions]
    The \gnc\ in~\eqref{eqn:cmgn_start}-\eqref{eqn:cmgn_end} is a \gn\ if the $\sigma_\ell$ are differentiable elementwise activations.
    \label{thm:gnc_conditions}
\end{theorem}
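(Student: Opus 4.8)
The plan is to invoke Lemma~\ref{lemma:gn_jac_sym}: since each $\sigma_\ell$ is a differentiable elementwise activation, the map $\gnc:\R^d\to\R^d$ in~\eqref{eqn:cmgn_start}--\eqref{eqn:cmgn_end} is differentiable, so it suffices to verify that its Jacobian $\J_{\gnc}(\x)$ is symmetric at every $\x$. The essential structural fact I would exploit is that the weight matrix $\W$ is \emph{shared} across all layers, entering every input path as $\W$ and appearing as $\W^\top$ only in the final readout~\eqref{eqn:cmgn_end}, while all other per-layer operations (the scalings $\balpha_\ell,\bbeta_\ell$ and the elementwise activations) contribute only \emph{diagonal} factors. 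This is precisely what forces the output Jacobian into a congruence form $\W^\top(\cdot)\W$ with a diagonal, hence symmetric, core.

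Concretely, writing $\bs{A}_\ell = \diag{\balpha_\ell}$, $\bs{B}_\ell = \diag{\bbeta_\ell}$, and using that the Jacobian of an elementwise $\sigma_\ell$ at $\z_{\ell-1}$ is the diagonal matrix $\bs{D}_\ell := \diag{\sigma_\ell'(\z_{\ell-1})}$, I would compute the layerwise Jacobians $\J_\ell := \partial\z_\ell/\partial\x$ by the chain rule, obtaining the recursion
\begin{align}
    \J_0 = \bs{B}_0\W, \qquad \J_\ell = \bs{B}_\ell\W + \bs{A}_\ell\bs{D}_\ell\,\J_{\ell-1},
\end{align}
together with the output Jacobian $\J_{\gnc} = \W^\top\bs{A}_L\bs{D}_L\,\J_{L-1}$. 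Setting $\bs{P}_\ell := \bs{A}_\ell\bs{D}_\ell$ (a product of diagonal matrices, hence diagonal), I would unroll the recursion by induction on $\ell$ to reach the closed form
\begin{align}
    \J_{L-1} = \sum_{k=0}^{L-1}\Bigl(\prod_{j=k+1}^{L-1}\bs{P}_j\Bigr)\bs{B}_k\W,
\end{align}
where the empty product at $k=L-1$ is read as the identity.

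Substituting into the output Jacobian gives $\J_{\gnc} = \sum_{k=0}^{L-1}\W^\top \bs{M}_k\W$, where each $\bs{M}_k := \bs{P}_L\bigl(\prod_{j=k+1}^{L-1}\bs{P}_j\bigr)\bs{B}_k$ is a product of diagonal matrices and is therefore itself diagonal. Since a diagonal matrix is symmetric, each summand satisfies $(\W^\top\bs{M}_k\W)^\top = \W^\top\bs{M}_k\W$, so $\J_{\gnc}(\x)$ is symmetric for every $\x$. By Lemma~\ref{lemma:gn_jac_sym}, $\gnc$ admits a scalar-valued antiderivative, i.e.\ $\gnc = \nabla F$ for some $F\in C^1(\R^d)$, so $\gnc$ is a \gn\ by Def.~\ref{def:gn}.

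I expect the only genuine work to be the bookkeeping in unrolling the recursion and confirming, via the induction, that every factor sandwiched between $\W^\top$ and $\W$ remains diagonal; the crux is that weight sharing produces the $\W^\top(\cdot)\W$ congruence while elementwise activations and Hadamard scalings contribute solely diagonal factors. If distinct weight matrices were used per layer the core $\bs{M}_k$ would no longer be flanked by $\W^\top$ and the same $\W$, and symmetry would generally fail, so I would take care to flag exactly where weight sharing is invoked.
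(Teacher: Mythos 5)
Your proposal is correct and matches the paper's proof in essence: the paper likewise writes $\J_{\gnc}(\x) = \W^\top\bs{D}\W$ with $\bs{D}$ a sum of products of the diagonal matrices $\bs{A}_\ell$, $\J_{\sigma_\ell}(\z_{\ell-1})$, and $\bs{B}_{\ell-1}$, and concludes symmetry from the congruence with the shared $\W$. Your unrolled recursion reproduces exactly the paper's closed form in~\eqref{eq:cmgn_jac_a}--\eqref{eq:cmgn_jac_b} (with your index $k$ equal to the paper's $\ell-1$), merely deriving by induction what the paper states directly.
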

\begin{proof}
Let $\bs{A}_\ell = \diag{\balpha_\ell},\bs{B}_\ell = \diag{\bbeta_\ell}$. The Jacobian of $\gnc(\x)$ with respect to $\x$ is:
\begin{align}
 \J_{\gnc}(\x) &= \W^\top \bs{D}\W
\label{eq:cmgn_jac_a}\\
 \bs{D} &= \sum_{\ell=1}^{L}
 \left(\prod_{i=0}^{L-\ell}\bs{A}_{L-i}\J_{\sigma_{L-i}}\left(\z_{L-i-1}\right)\right)\bs{B}_{\ell-1}
 \label{eq:cmgn_jac_b}
\end{align}
The activation Jacobians $\J_{\sigma_\ell}$  are diagonal matrices since each $\sigma_\ell$ operates elementwise on the input. Therefore $\bs{D}$ is a diagonal matrix and $\J_{\gnc}$ is symmetric.
\end{proof}
The activation $\sigma_\ell$ in Thm.~\ref{thm:gnc_conditions} can be a fixed function applied elementwise, e.g., tanh, softplus, and sigmoid, or a \textit{learnable} function that operates on individual elements of the input vector. Thus, the proposed approach also permits elementwise activations parameterized by scalar-valued neural networks previously highlighted in Cor.~\ref{cor:gn_sum_of_ridges}. The proposed cascaded networks also remain \gns\ if all $\bbeta_\ell$ are 0. However, as shown in \cite{he2016deep, li2018visualizing}, skip connections accelerate training as they address the vanishing gradient problem in deep networks and smoothen the loss landscape. Next, we introduce the monotone variant of the \gnc.
\begin{corollary}[\cmgn\ conditions]
    The \gnc\ in~\eqref{eqn:cmgn_start}-\eqref{eqn:cmgn_end} is an \mgn, and referred to as \cmgn, if, at each layer, the scaling weights $\balpha_\ell, \bbeta_\ell$ are nonnegative and the $\sigma_\ell$ are differentiable, monotonically-increasing elementwise activation functions.
    \label{cor:cmgn_conditions}
\end{corollary}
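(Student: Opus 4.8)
The plan is to build directly on Thm.~\ref{thm:gnc_conditions}, which already establishes that the \gnc\ is a \gn\ with symmetric Jacobian $\J_{\gnc}(\x) = \W^\top \bs{D}\W$, where $\bs{D}$ in~\eqref{eq:cmgn_jac_b} is diagonal. By Prop.~\ref{prop:single_layer_mgn} and the characterization it invokes (a \gn\ whose Jacobian is everywhere PSD is the gradient of a convex function, hence an \mgn), it suffices to upgrade ``symmetric'' to ``symmetric and PSD''. So I would show that, under the stated hypotheses, $\J_{\gnc}(\x)\succeq 0$ for every $\x$, and then combine this with the symmetry already in hand.

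First I would reduce the PSD claim on $\J_{\gnc}$ to a PSD claim on $\bs{D}$: since $\J_{\gnc}(\x) = \W^\top \bs{D}\W$, for any $\v$ we have $\v^\top \W^\top \bs{D}\W\v = (\W\v)^\top \bs{D}(\W\v)$, so $\bs{D}\succeq 0$ implies $\W^\top \bs{D}\W\succeq 0$ by congruence. The whole argument then collapses to verifying that the diagonal matrix $\bs{D}$ has nonnegative diagonal entries. I would inspect each factor in the nested product-sum~\eqref{eq:cmgn_jac_b}: the matrices $\bs{A}_{L-i} = \diag{\balpha_{L-i}}$ and $\bs{B}_{\ell-1} = \diag{\bbeta_{\ell-1}}$ are nonnegative diagonal because $\balpha_\ell,\bbeta_\ell\geq 0$ by hypothesis, while each $\J_{\sigma_{L-i}}(\z_{L-i-1})$ is diagonal (the $\sigma_\ell$ act elementwise) with entries $\sigma_{L-i}'(\cdot)\geq 0$ since each $\sigma_\ell$ is monotonically increasing. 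A product of nonnegative diagonal matrices is again nonnegative diagonal, so each summand $\left(\prod_{i=0}^{L-\ell}\bs{A}_{L-i}\J_{\sigma_{L-i}}(\z_{L-i-1})\right)\bs{B}_{\ell-1}$ is nonnegative diagonal, and summing over $\ell$ preserves this, yielding $\bs{D}\succeq 0$.

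I would close by chaining these facts: $\bs{D}\succeq 0$ gives $\J_{\gnc}(\x)=\W^\top\bs{D}\W\succeq 0$ for all $\x$, and together with the symmetry from Thm.~\ref{thm:gnc_conditions}, Lemma~\ref{lemma:gn_jac_sym} and the convexity argument in Prop.~\ref{prop:single_layer_mgn} identify the \gnc\ as $\nabla F$ for a convex $F$, i.e., a \cmgn. There is no genuine obstacle here beyond bookkeeping; the only point that requires care is confirming that being \emph{nonnegative diagonal} is exactly the property preserved under both the nested products and the outer sum, and that this property holds for every factor precisely because of the two hypotheses (nonnegativity of $\balpha_\ell,\bbeta_\ell$ and monotonicity of the $\sigma_\ell$). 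No ordering or commutativity subtleties arise, since diagonal matrices commute and their entrywise products stay nonnegative.
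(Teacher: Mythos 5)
Your proposal is correct and follows essentially the same route as the paper's proof: both reduce the claim to showing that the diagonal matrix $\bs{D}$ in~\eqref{eq:cmgn_jac_b} has nonnegative entries (since $\bs{A}_\ell$, $\bs{B}_\ell$, and the activation Jacobians $\J_{\sigma_\ell}$ are all nonnegative diagonal under the stated hypotheses), from which $\J_{\gnc}(\x) = \W^\top \bs{D}\W \succeq 0$ everywhere. Your write-up merely makes explicit two steps the paper leaves implicit — the congruence argument for PSD-ness and the final appeal to the PSD-Jacobian characterization of \mgns\ from Prop.~\ref{prop:single_layer_mgn} — so there is no substantive difference.
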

\begin{proof}
Consider $\J_{\gnc}$ from~\eqref{eq:cmgn_jac_a}-\eqref{eq:cmgn_jac_b}. By the assumptions in the theorem, for all $\ell$, $\bs{A}_\ell, \bs{B}_\ell$ are nonnegative diagonal matrices and the activation Jacobians $\J_{\sigma_\ell}$ are also nonnegative diagonal matrices. Thus, $\bs{D}$ is everywhere PSD and it follows that $\bs{J}_{\gnc}$ is everywhere PSD.
\end{proof}

The condition in Cor.~\ref{cor:cmgn_conditions} states that the activations $\sigma_\ell$ of an \cmgn\ should be monotonically increasing. Most popular elementwise activation functions, e.g., tanh, softplus, and sigmoid, satisfy this requirement. Cor.~\ref{cor:cmgn_conditions} also permits the use of learned, elementwise \mgn\ activations as discussed in Thm.~\ref{thm:mgn_ridges_universal}. When considering \gnc\ and \cmgn\ on compact domains in practice, the activations $\sigma$ need only be continuous rather than differentiable. The requirement that the intermediate elementwise scaling weights of \cmgn\ at each layer are nonnegative vectors is easy to parameterize in practice. Moreover, we empirically observe in Sec.~\ref{sec:experiments} that imposing nonnegativity constraints on the intermediate weight vectors does not impair optimization or final performance. 

\section{Experiments}
\label{sec:experiments}

{\subsection{Gradient Field}
\label{sec:gradient_field_experiments}
{We demonstrate the proficiency of our networks in learning gradients of scalar functions over the unit cube $[0, 1]^d$. To visualize the error incurred by each network, we consider a low-dimensional setting with $d=2$ in Sec.~\ref{sec:2d_gradfield}. We find that our methods result in error plots with fewer irregularities than baseline methods. In Sec.~\ref{sec:d_gradfield}, we consider high-dimensional settings with $d\in \{32, 256, 1024\}$ and  demonstrate that our methods achieve an improvement of up to 10 dB when learning the gradient of a convex potential. For nonconvex potentials, the improvement reaches as high as 15 dB.}

\subsubsection{2D Gradient Field}
\label{sec:2d_gradfield}

\begin{figure}[t]
\centering 
\begin{subfigure}{0.4\columnwidth}
  \centering
\includegraphics[width=\linewidth]{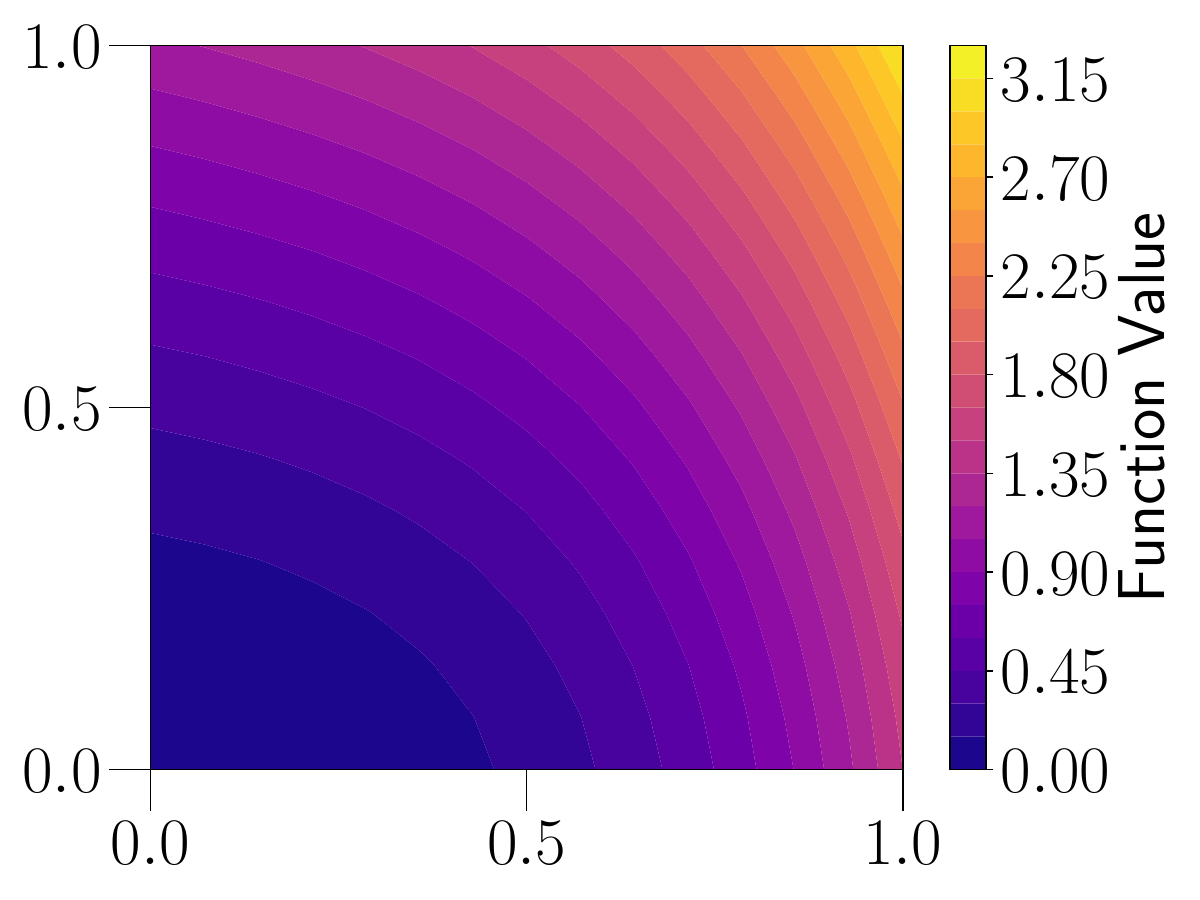}  
  \caption{Contour plot of $F(\x)$}
\end{subfigure}
\begin{subfigure}{0.4\columnwidth}
  \centering
  \includegraphics[width=\linewidth]{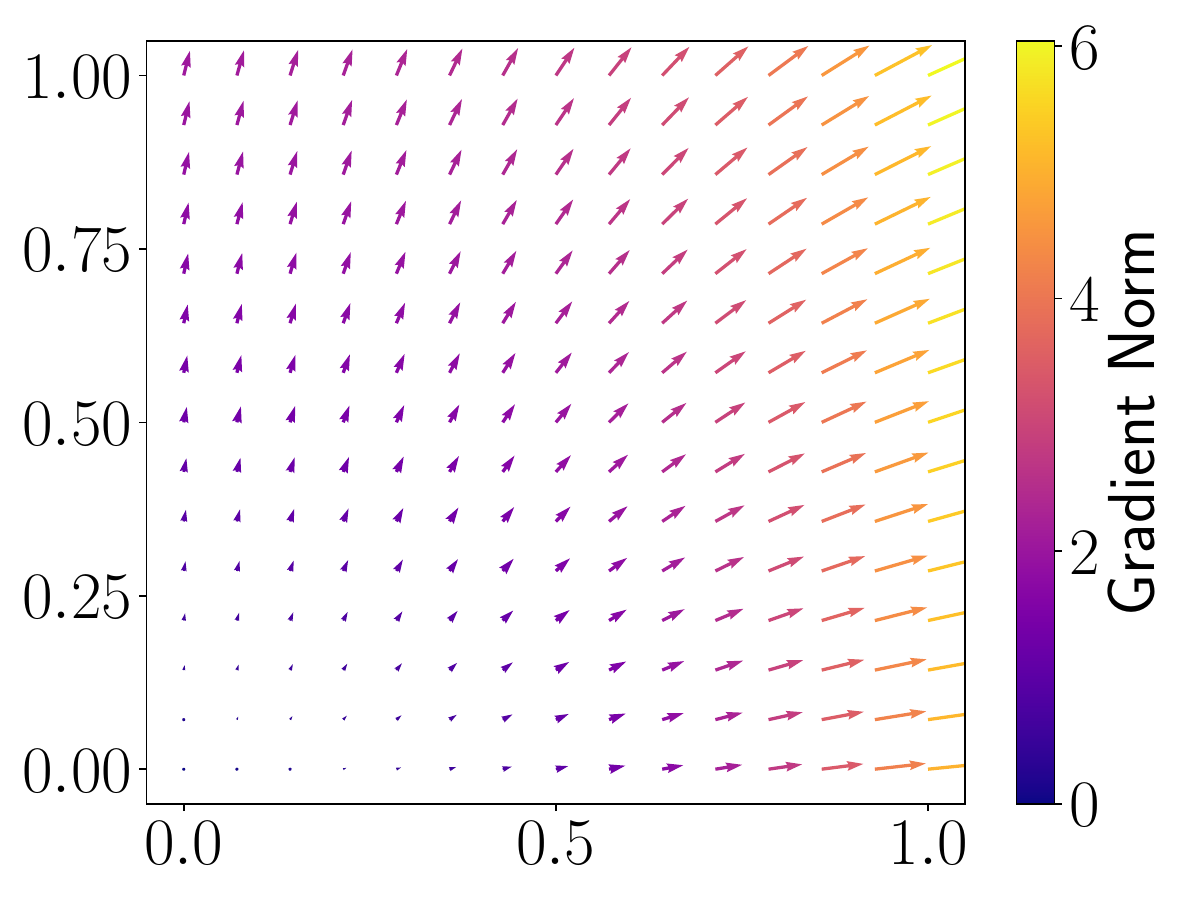} 
  \caption{Quiver plot of $\nabla F(\x)$}
\end{subfigure}

\begin{subfigure}{0.4\columnwidth}
  \centering
  \includegraphics[width=\linewidth]{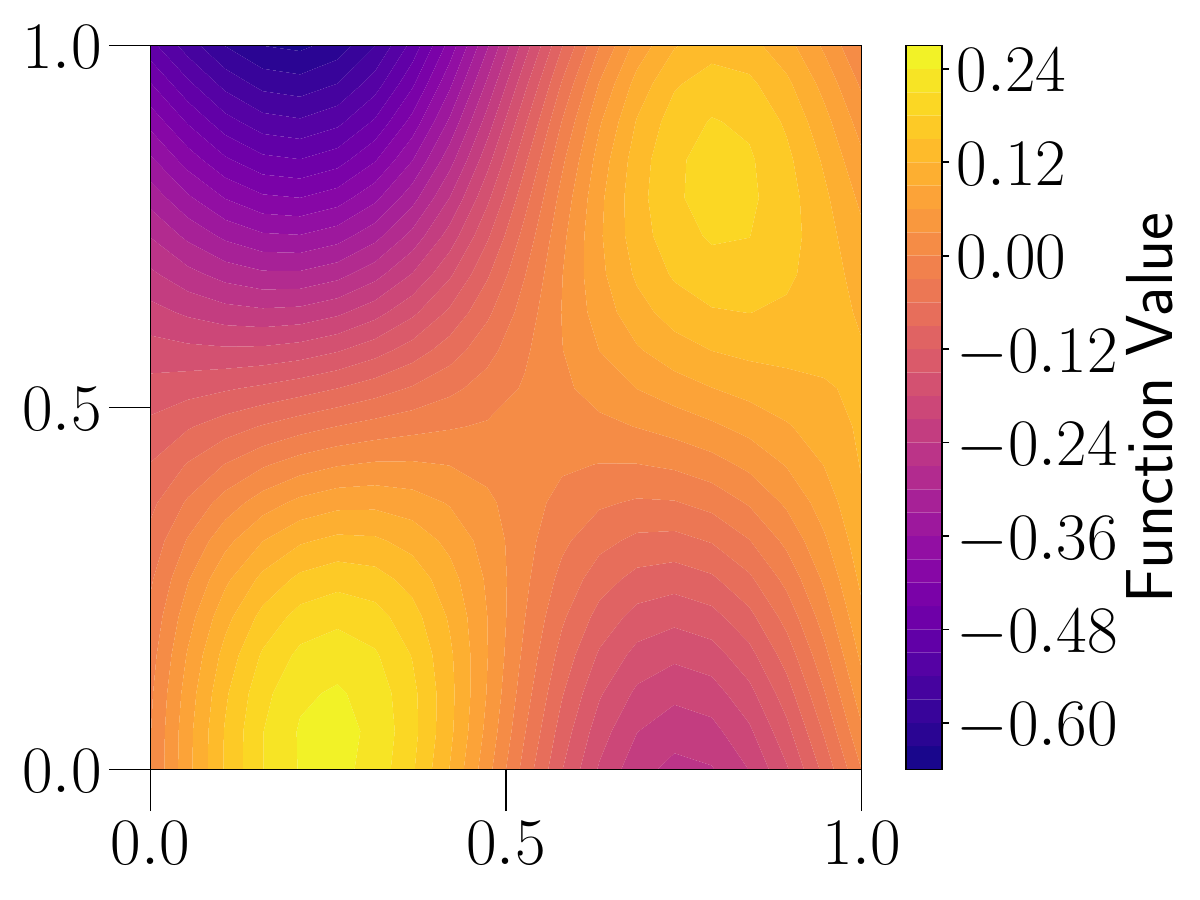}  
  \caption{Contour plot of $G(\x)$}
\end{subfigure}
\begin{subfigure}{0.4\columnwidth}
  \centering
  \includegraphics[width=\linewidth]{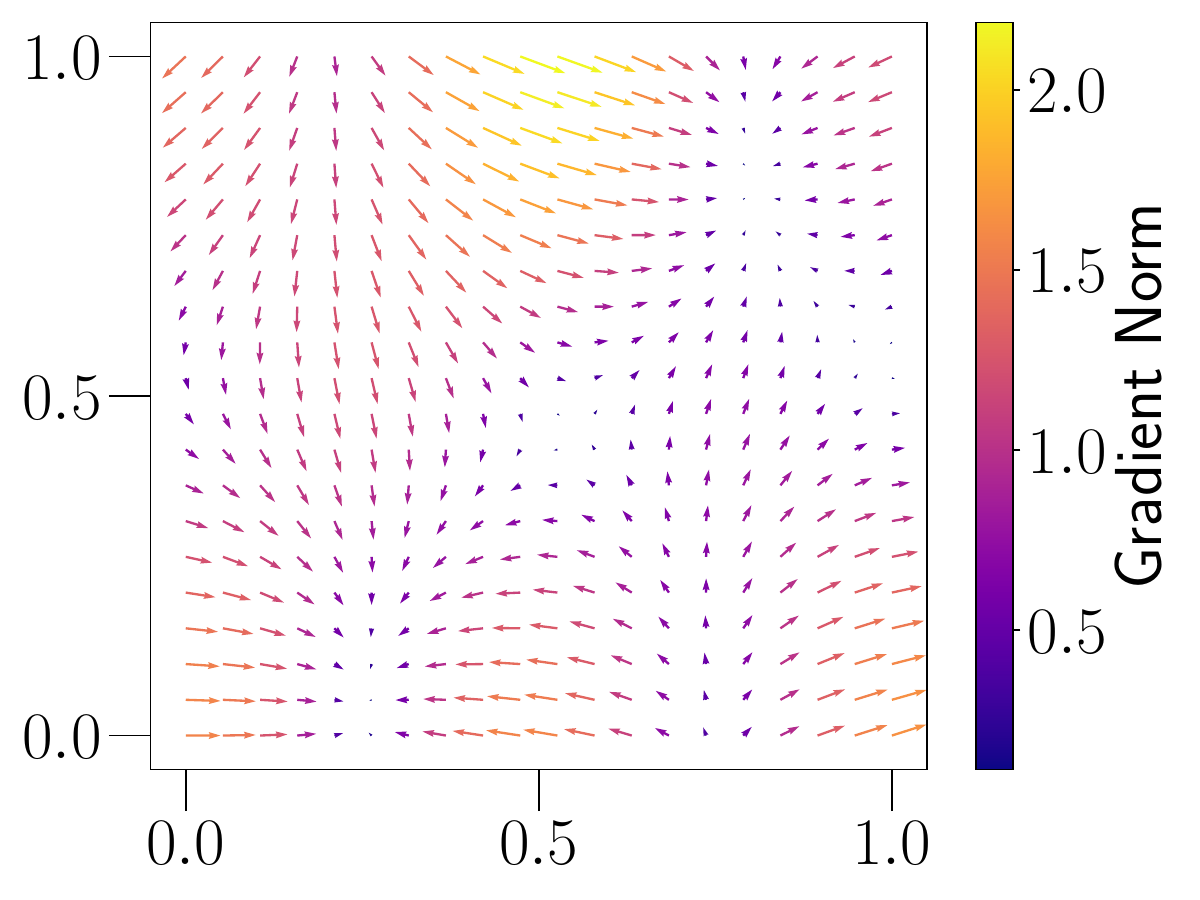} 
  \caption{Quiver plot of $\nabla G(\x)$}
\end{subfigure}
\caption{Test functions $F$ in \eqref{eq:2d_convex_test_fn} and $G$ in \eqref{eq:2d_nonconvex_test_fn}, along with their gradients, for $d=2$ over the unit square.}
\label{fig:2d_test_functions}
\end{figure}

 We start with $d=2$, where the error incurred by each method can be visualized on the unit square.
For the convex test function, we consider learning the gradient of the following benchmark potential function from~\cite{ICGN}, which is convex on the unit square $x_1,x_2 \in [0,1]$:
\begin{align}
    F(x_1, x_2) &= x_1^4 + \frac{x_1^2}{2} + \frac{x_1x_2}{2} + \frac{3x_2^2}{2} - \frac{x_2^3}{3}
    \label{eq:2d_convex_test_fn}
\end{align}
For the nonconvex test function, we consider learning the gradient of the following potential over the unit square:
\begin{align}
    G(x_1,x_2) &= \frac{1}{4}\sin(2\pi x_1)\cos(\pi x_2) + \frac{x_1x_2}{2} - \frac{x_2^2}{2}
    \label{eq:2d_nonconvex_test_fn}
\end{align}
Contour plots of $F$ and $G$, along with their corresponding gradients, are shown in Fig.~\ref{fig:2d_test_functions}.
For the convex test function $F(\x)$ in~\eqref{eq:2d_convex_test_fn}, we compare the proposed \cmgn\ and \mmgn, respectively defined in~\eqref{eqn:cmgn_start}-\eqref{eqn:cmgn_end} and~\eqref{eqn:m-mgn_a}-\eqref{eqn:m-mgn_b}, with the Input Convex Neural Network (ICNN) \cite{ICNN}, Input Convex Gradient Network (ICGN) \cite{ICGN}, and Convex Ridge Regularizer (CRR) \cite{EPFL}. Similarly for the nonconvex test function $G(\x)$ in~\eqref{eq:2d_nonconvex_test_fn}, we compare the proposed modular gradient network (\gnm) and cascaded gradient network (\gnc) to a multilayer perceptron (MLP) and the ridge regularizer (RR) network proposed in \cite{goujon2024learning}. The input to all models is a point $\x \in [0, 1]^2$. All networks, excluding the ICNN and MLP, are trained to directly output $\nabla F(\x)$ or $\nabla G(\x)$. The input to the ICNN and MLP is also $\x$ and the gradient of the networks, with respect to 
$\x$, are trained to approximate $\nabla F(\x)$ or $\nabla G(\x)$, respectively. The specific architecture configurations for each model are provided in Appendix~\ref{sec:appendix_gradfield2d}. All models are constrained to have roughly 100 parameters. We sample 100,000 training points uniformly at random on the unit square and separately sample 10,000 validation points. 
All models are trained using mean squared error (MSE) loss and their performance is evaluated on a uniform grid of points in the unit square. The $\ell_2$ norm of the gradient prediction error is shown for each model in Fig.~\ref{fig:grad_field_results2d}.

\begin{figure*}[htpb]
\centering 
\begin{subfigure}{\linewidth}
  \centering
  \includegraphics[width=\linewidth, trim=0 0 9cm 0, clip]{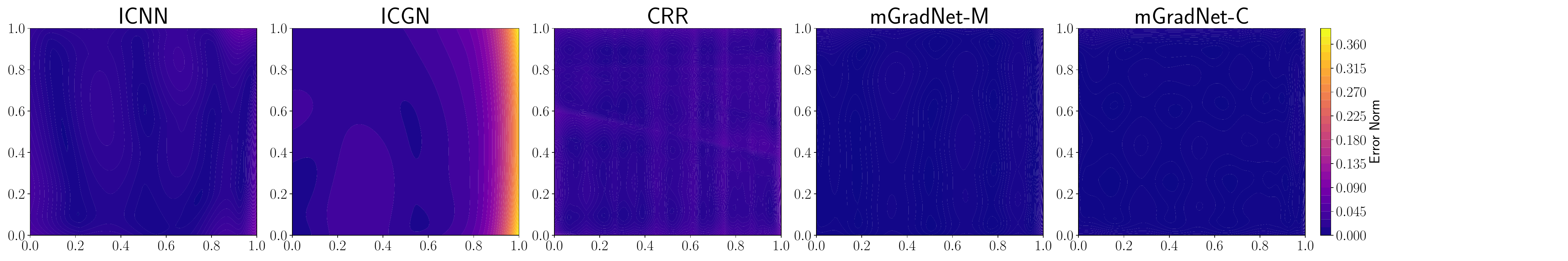}  
  \caption{$\ell_2$ norm between the predicted gradient and $\nabla F(\x)$, the true gradient of $F$ in \eqref{eq:2d_convex_test_fn}}
\end{subfigure}

\begin{subfigure}{\linewidth}
  \centering
  \includegraphics[width=\linewidth, trim=0 0 7.5cm 0, clip]{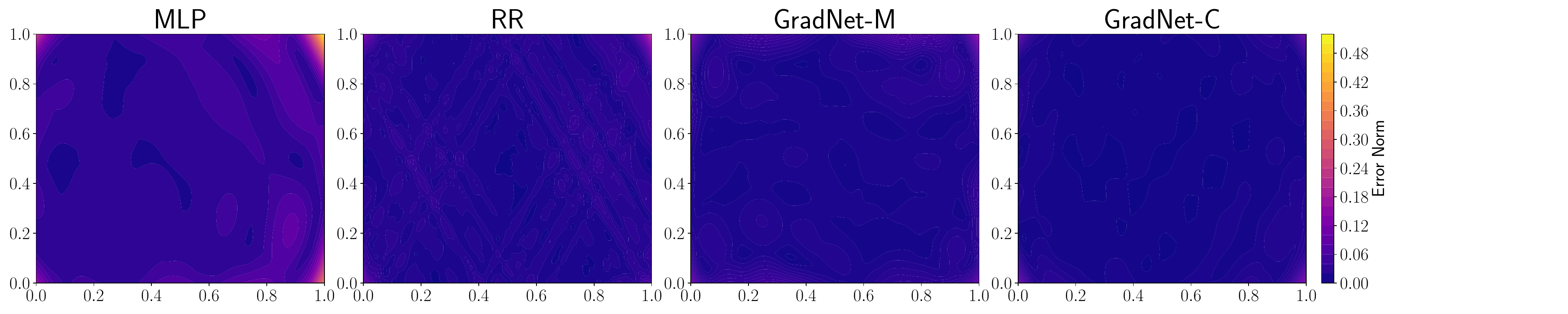}  
  \caption{$\ell_2$ norm between the predicted gradient and $\nabla G(\x)$, the true gradient in \eqref{eq:2d_nonconvex_test_fn}}
\end{subfigure}
\caption{Gradient field learning results for $d=2$}
\label{fig:grad_field_results2d}
\end{figure*}

We observe from Fig.~\ref{fig:grad_field_results2d} that, for learning $\nabla F(\x)$, the proposed \mmgn\ and \cmgn\ exhibit significantly lower error rates than that of the ICNN, ICGN, and CRR. Fig.~\ref{fig:grad_field_results2d} also shows that the \mmgn\ and \cmgn\ are able to effectively learn the gradient at all areas of the unit square, whereas the ICNN, ICGN, and CRR tend to underperform around the edges of square. Similar trends are observed in the nonconvex case where the \gnc, \gnm, and RR outperform the MLP.

\subsubsection{$d$-Dimensional Gradient Field, $d \geq 32$} 
\label{sec:d_gradfield}

We proceed to consider gradient field learning in higher dimensions. {Our main findings are as follows. We observe that our \cmgn\ and \gnc\ architectures consistently outperform the CRR \cite{EPFL} and RR \cite{goujon2024learning}, both of which also use elementwise activations. Our \mmgn\ consistently outperforms all other methods for learning the gradient of a convex potential across all settings considered. Meanwhile, for the nonconvex potential, the best-performing method is consistently either our proposed \gnc\ or the \gnm.}

For the convex test function we use the following positive definite matrices defined in~\cite{svanberg2002class}:
\begin{align}
   \bs{S}_{ij} &= \frac{2 + \sin(4\pi\alpha_{ij})}{(1 + |i-j|\ln d)}\qquad   \bs{P}_{ij} = \frac{1 + 2\alpha_{ij}}{(1 + |i-j|\ln d)}\\
   \bs{Q}_{ij} &= \frac{3 - 2\alpha_{ij}}{(1 + |i-j|\ln d)}\qquad
   \alpha_{ij} = \frac{i + j - 2}{2d - 2} \in [0, 1]\;\forall i,j
\end{align}
where $1 \leq {i,j} \leq d$ and $\bs{S}_{ij}$ denotes entry ${ij}$ of the matrix $\bs{S}$. We consider learning the gradient (where it exists) of the convex, piecewise quadratic function:
\begin{align}
    \bs{z}_i &= \bs{x}_i - 0.5\\
    \widetilde{F}(\x) &= \max \{\bs{z}^\top \bs{S} \bs{z}, \bs{z}^\top \bs{P} \bs{z}, \bs{z}^\top \bs{Q} \bs{z}\} 
    \label{eq:convex_fn}
\end{align}
over the unit hypercube $\x \in [0, 1]^d$ for $d\in \{32, 256, 1024\}$. We also consider a nonconvex setting in which we train the appropriate models to learn the score function of a Gaussian mixture model (GMM), $\nabla_{\x} \ln \widetilde{G}(\x)$, where $\ln \widetilde{G}(\x)$ is the log probability density function:
\begin{align}
    \ln \widetilde{G}(\x) &= \ln \left(\sum_{i=1}^N \alpha_i N(\x; \bs{\mu}_i, \bs{\Sigma}_i)\right)
    \label{eq:nonconvex_fn}
\end{align}
In~\eqref{eq:nonconvex_fn} above, $N(\x; \bs{\mu}_i, \bs{\Sigma}_i)$ denotes the probability density function of a multivariate normal distribution with mean $\bs{\mu}_i \in \R^d$ and $d \times d$ covariance matrix $\bs{\Sigma}_i$. Since most of the probability mass of a $d$-dimensional standard normal distribution lies in a thin annulus that is centered at the origin and has inner and outer radii close to $\sqrt{d}$ \cite{blum2020foundations}, we ensure interesting score functions in the unit hypercube by drawing each component of $\bs{\mu}_i$ uniformly at random from $[0.3, 0.7]$ and setting $\bs{\Sigma} = 2\sqrt{d}\bs{I}$. We select equal weights for the Gaussians. We train all models in these experiments for 10,000 iterations, where at each iteration we randomly sample 100 points from $[0,1]^d$ to update the model parameters. The specific model configurations and additional training details are provided in Appendix \ref{sec:appendix_gradfield}.

The results for learning the gradients of~\eqref{eq:convex_fn} and~\eqref{eq:nonconvex_fn} are shown in Table~\ref{table:convex_gradfield} and Table~\ref{table:nonconvex_gradfield} respectively.

We observe in  Table~\ref{table:convex_gradfield} that the \mmgn\ outperforms the baseline methods by up to 10 dB in the high-dimensional setting of $d=1024$. {Furthermore, although they parameterize the same space of functions, we observe that the \cmgn\ outperforms the CRR from \cite{EPFL} in the settings $d=256$ and $d=1024$ by margins of roughly 4 dB and 2 dB respectively}. 

\begin{table}[ht]
\centering
\caption{Learning the Gradient of Convex Potential \eqref{eq:convex_fn} (15 trials). *Our Methods}
\begin{tabular}{lr@{ $\pm$ }lr@{ $\pm$ }lr@{ $\pm$ }l}
\toprule
& \multicolumn{2}{c}{$d=32$} & \multicolumn{2}{c}{$d=256$} & \multicolumn{2}{c}{$d=1024$} \\ 
Model & \multicolumn{2}{c}{MSE (dB)} & \multicolumn{2}{c}{MSE (dB)} & \multicolumn{2}{c}{MSE (dB)}\\ \midrule
ICNN \cite{ICNN} & -12.76 & 0.03 &  -9.83 & 0.01 & -0.36 & 0.01\\
CRR \cite{EPFL} & -13.86 & 0.03   &  -6.24 & 0.04  & -6.53 & 0.01        \\
{\textbf{\cmgn*}} & -12.80 & 0.03  &  -10.64 & 0.01  & -8.75 & 0.02     \\
{\textbf{\mmgn*}} & \textbf{-15.34} & 0.03  &  \textbf{-11.30} & 0.02 & \textbf{-10.42} & 0.01  \\
\bottomrule
\end{tabular}
\label{table:convex_gradfield}
\end{table}

Similarly for the the score learning task, we observe in Table~\ref{table:nonconvex_gradfield} that the \gnm\ and \gnc\ significantly outperform all other methods. The \gnc\ outperforms the RR by nearly 8 dB for $d=32$ and by over 4 dB for $d=256$. The \gnm\ outperforms the RR by a margin of over 5 dB for $d = 1024$, thus demonstrating its effectiveness in high dimensions. The MLP baseline underperformed all other considered methods, especially for $d=1024$, where its performance lagged by nearly 9 dB behind the second worst model, namely the RR. In both the convex and nonconvex experiments, the modular architectures, \mmgn\ and \gnm, perform better than the other methods in the high-dimensional setting $d=1024$. 

\begin{table}[ht]
\centering
\caption{Learning the Nonconvex Score Function Corresponding to \eqref{eq:nonconvex_fn} (15 trials). *Our Methods}
\begin{tabular}{lr@{ $\pm$ }lr@{ $\pm$ }lr@{ $\pm$ }l}
\toprule
      & \multicolumn{2}{c}{$d=32$} & \multicolumn{2}{c}{$d=256$} & \multicolumn{2}{c}{$d=1024$} \\ 
Model & \multicolumn{2}{c}{MSE (dB)} & \multicolumn{2}{c}{MSE (dB)} & \multicolumn{2}{c}{MSE (dB)}\\ \midrule
MLP   & -33.20 & 0.04 & -18.21 & 0.14  & -2.78 & 0.01 \\
RR \cite{goujon2024learning}   & -32.50  & 0.30 & -20.30 & 2.76 & -11.26 & 0.89    \\
{\textbf{\gnc*}}  & \textbf{-44.43}  & 0.07 & \textbf{-24.90} & 0.07  & -12.84 & 0.04   \\
{\textbf{\gnm*}}  & -37.66  & 0.02 & -21.72 & 0.14  & \textbf{-17.02} & 0.03           \\
\bottomrule
\end{tabular}
\label{table:nonconvex_gradfield}
\end{table}


\subsection{Hamiltonian Gradients for Two-Body Dynamics}

{In this section, we train \gns\ to learn gradients of the Hamiltonian function for the two-body problem and subsequently use the learned models to predict the trajectories of the two objects. We observe that our \gnm\ outperforms the baseline Hamiltonian NN~\cite{HamiltonianNN} by over 3 dB and the RR \cite{goujon2024learning} by over 11 dB. As in the gradient field experiments, our \gnc\ architecture again outperforms the RR \cite{goujon2024learning} -- this time by over 3 dB. The performance gaps that arise in this experiment directly validate the theoretical differences in approximation capabilities between the baseline methods and \gns\ (see Sec.~\ref{sec:universal_approximation}).}

Hamiltonian mechanics has many fundamental applications ranging from classical mechanics and thermodynamics to quantum physics. In fact, predicting the dynamics of objects in a system can be achieved by learning the gradient of the corresponding scalar-valued Hamiltonian function.
We define $\q$ and $\p$ respectively as the vectorized position and momentum information of all objects in the system. The Hamiltonian $\mathcal{H}(\q,\p)$ characterizes the total energy of the system and offers a complete description of the system. The time derivatives of the position $\q$ and momentum $\p$ (which are respectively the velocities and forces) of all objects can be obtained by differentiating the Hamiltonian of the system: \begin{equation}
    \frac{d\q}{dt} = \frac{\partial\mathcal{H}}{\partial\p} \qquad -\frac{d\p}{dt} = \frac{\partial\mathcal{H}}{\partial\q}
    \label{eqn:general_hamiltonian}
\end{equation}
Hence, the Hamiltonian formulation of the system dynamics provides a simple method for obtaining position, momentum, velocity, and force information. Following the procedure in~\cite{HamiltonianNN}, we train models to take $\q,\p$ as input and output $d\q/dt$ and $-d\p/dt$. By \eqref{eqn:general_hamiltonian}, training these models corresponds to learning gradients of an unknown Hamiltonian function -- a direct and quintessential application for \gns.

The system for the two-body problem is composed of two point particles interacting through an attractive force like gravity. It has the following nontrivial, nonconvex Hamiltonian:
\begin{equation}
    \mathcal{H}(\q,\p) = \frac{\|\p_{CM}\|_2^2}{m_1+m_2} + \frac{\|\p_1\|_2^2+\|\p_2\|_2^2}{2\mu}+\frac{gm_1m_2}{\|\q_1-\q_2\|_2^2}
    \label{eqn:Hamiltonian_2body}
\end{equation}
where $\mu$ is the reduced mass and $\p_{CM}$ is the
momentum of the center of mass. Using the experimental setup and training procedure\footnote{Our results are obtained by evaluating all models trained using the hyperparameter search detailed in Appendix~\ref{sec:appendix_hamiltonian}.} in \cite{HamiltonianNN}, we consider a two-dimensional physical ambient space that induces eight degrees of freedom for the system: two-dimensional position and momentum for each object. This means $\q,\p\in\R^8$. We compare our networks with the Hamiltonian neural network (NN) and multi-layer perceptron baseline from \cite{HamiltonianNN}. Unlike the \gns\ introduced in this paper, the Hamiltonian NN does not offer theoretical guarantees concerning universal approximation of gradient functions.

\begin{figure*}[h]
\centering
\begin{subfigure}[t]{.3\linewidth}  \includegraphics[width=\linewidth]{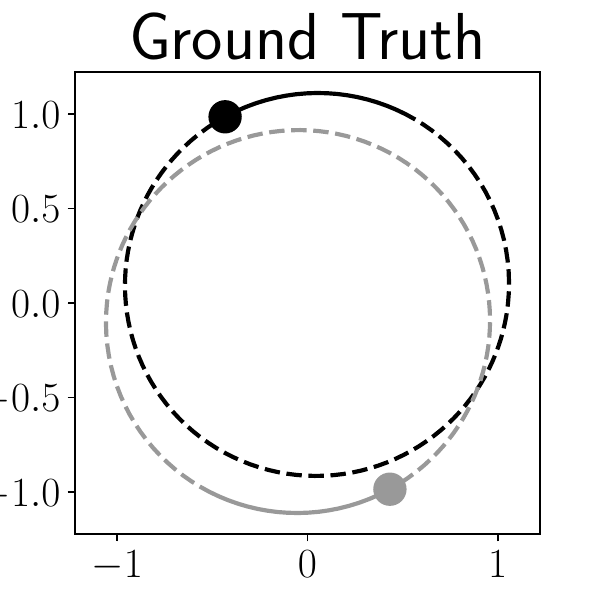}
\end{subfigure}
\begin{subfigure}[t]{.3\linewidth}
\includegraphics[width=\linewidth]{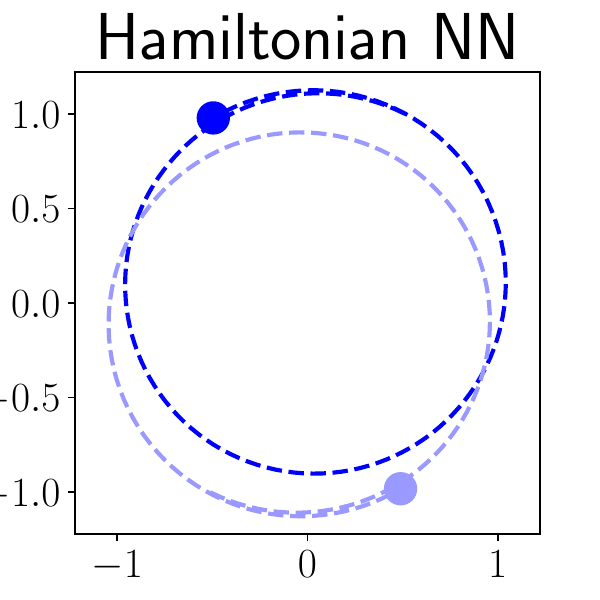}
\end{subfigure}
\begin{subfigure}[t]{.3\linewidth}
\includegraphics[width=\linewidth]{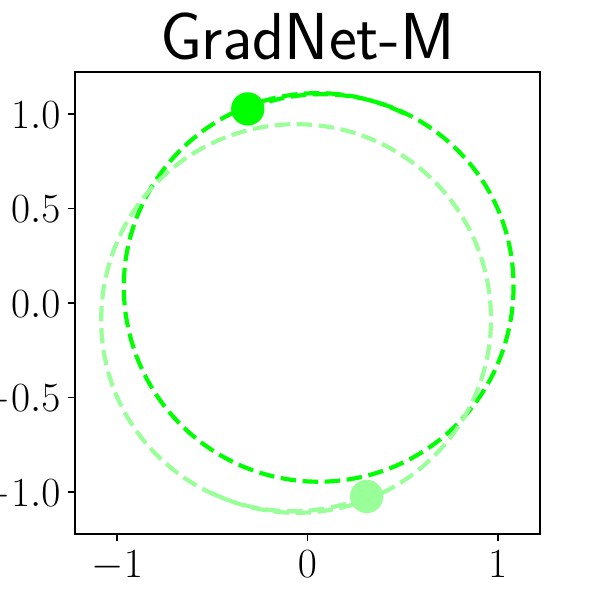}
\end{subfigure}

\begin{subfigure}[t]{.3\linewidth}
\includegraphics[width=\linewidth]{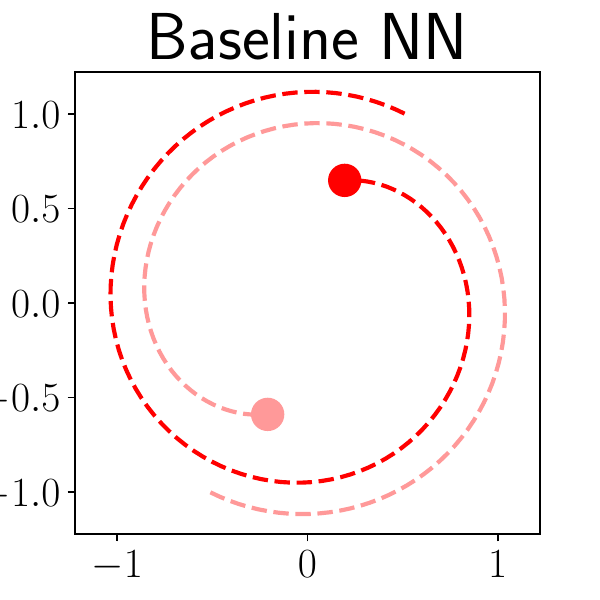}
\end{subfigure}
\begin{subfigure}[t]{.3\linewidth}
\includegraphics[width=\linewidth]{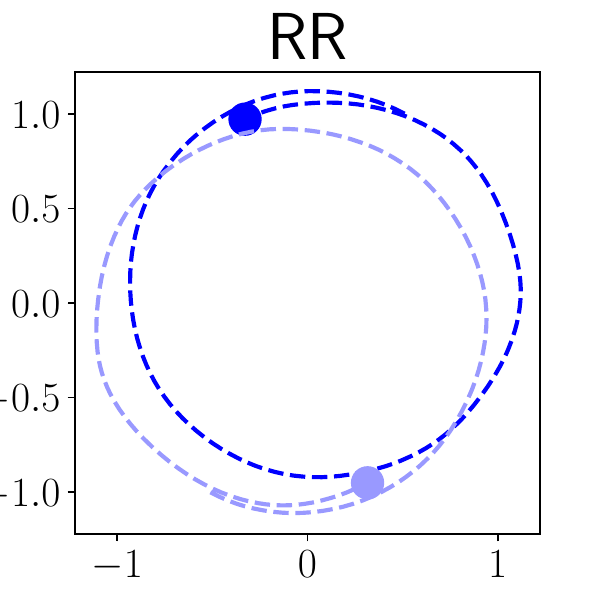}
\end{subfigure}
\begin{subfigure}[t]{.3\linewidth}
\includegraphics[width=\linewidth]{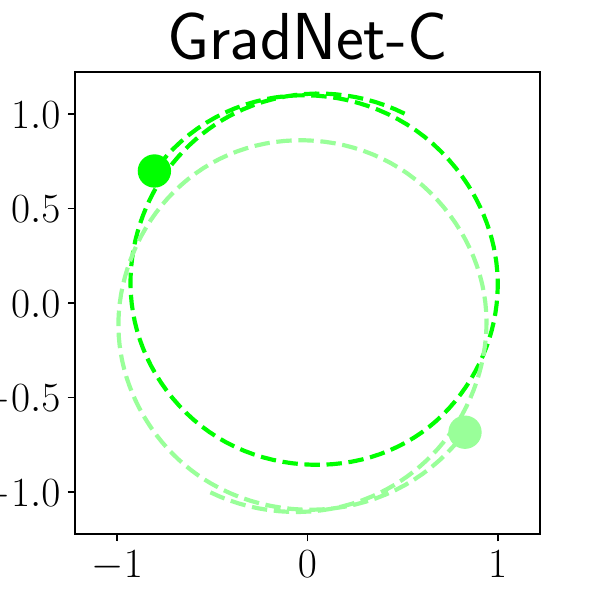}
\end{subfigure}
\caption{2-body problem object trajectories from unrolling dynamics using models that learn gradients of the Hamiltonian. Our proposed methods are in green.}
\label{fig:hamiltonian_2body}
\end{figure*}

\begin{table}[h]
\centering
\caption{Object Coordinate MSE and Total System Energy MSE of 2-Body Problem Trajectories Unrolled using \eqref{eqn:general_hamiltonian} and Hamiltonian Gradients Estimated by each Model (15 trials). *Our Methods
}

\begin{tabular}{lr@{ $\pm$ }lr@{ $\pm$ }l}
\toprule 
Model & \multicolumn{2}{l}{Coordinate MSE (dB)} & \multicolumn{2}{l}{Energy MSE (dB)} \\ \midrule
Baseline NN & -4.90	& 2.49	& -9.06 & 1.79\\
Hamiltonian NN \cite{HamiltonianNN} & -20.01	& 1.28 & -51.43 & 0.68 \\
RR \cite{EPFL} & -12.77 &	1.74 &	-19.53	& 2.82 \\
\textbf{\gnc*} & -16.32	& 1.70 &-44.29 &	0.55\\
\textbf{\gnm*} & \textbf{-23.84} & 1.61 & \textbf{-52.31} & 0.59 \\
\bottomrule
\end{tabular}

\label{tab:hamiltonian_2body_results}
\end{table}}

In Fig.~\ref{fig:hamiltonian_2body}, we plot the ground truth trajectories of the two bodies along with the trajectories unrolled using the estimated gradients of the Hamiltonian in \eqref{eqn:Hamiltonian_2body}. Table~\ref{tab:hamiltonian_2body_results} shows the corresponding coordinate MSEs for the unrolled trajectories and reflects the results in Fig.~\ref{fig:hamiltonian_2body}. Since the energy in the system should remain constant, the energy MSEs in Table~\ref{tab:hamiltonian_2body_results} reflect how effectively the Hamiltonian gradient predictions of each method conserve the total energy of the system. We see that the \gnm\ outperforms the Hamiltonian NN baseline by over 3 dB in terms of coordinate MSE and by approximately 1 dB in terms of energy MSE. Similarly, the \gnm\ outperforms the RR by 11 dB for coordinate MSE and 32 dB for energy MSE. We attribute this gain to the universal approximation capabilities of of the \gnm\ discussed in Sec.~\ref{sec:universal_approximation}. The \gnc\ outperforms the RR and demonstrates better approximation capabilities for the gradient of~\eqref{eqn:Hamiltonian_2body}, which is not a sum of ridge functions. 


\section{Conclusion}
In this work, we presented gradient networks (\gns): novel classes of neural networks for directly parameterizing and learning gradients of functions. We first introduced gradient networks that are guaranteed to be gradients of scalar-valued functions. We subsequently demonstrated methods to convert gradient networks into monotone gradient networks (\mgns) that are guaranteed to correspond to the gradients of convex functions. Notably, we proposed a general framework for designing (monotone) gradient networks and rigorously established their universal approximation capabilities for several key function classes. Our experimental results on gradient field and dynamics learning problems showcase the practical utility of \gns\ and reinforce our theoretical findings. We observed improvements of up to 15 dB over existing methods for approximating gradient fields and achieved improvements of up to 11 dB for predicting the dynamics of the two-body problem.

\newpage
\section*{Appendix}
\appendix
All models are trained using the Adam optimizer with standard momentum parameters 0.9 and 0.999.

\section{Gradient Field Experiment Details: $d=2$}
\label{sec:appendix_gradfield2d}
The \mmgn\ has 4 modules, each with hidden dimension 7, $\rho(x) = 1$, and $\sigma(\x)$ as the softmax function. The module outputs are conically combined via learnable, nonnegative parameters.  The \gnm\ is identical to the described \mmgn, however the module outputs are combined linearly rather than conically. The \cmgn\ has $L=3$ hidden layers, each with a hidden dimension of 7, elementwise tanh activation, and intermediate nonnegative weight vectors $\balpha_\ell, \bbeta_\ell$. The \gnc\ is identical to the mentioned \cmgn\ except $\balpha, \bbeta$ are unconstrained. We compare to a 2 hidden layer ICNN with hidden dimension 7 and softplus activations. The ICGN has the form $\sigma(\W\x + \bs{b})$ with hidden dimension 32 and sigmoid activation. The CRR has hidden dimension 10 and each learnable spline activation has 7 knots. The RR has the same architecture as the CRR except the spline activation functions are not constrained to be nondecreasing. The MLP has 3 hidden layers and  hidden dimension 6. All models are trained for 200 epochs with batch size 1000 and learning rate 0.005.

\section{Gradient Field Experiment Details: $d\geq 32$}
\label{sec:appendix_gradfield}
All models have the same architecture as described in Appendix~\ref{sec:appendix_gradfield2d} with some minor modifications. The CRR and RR each have 41 knots. The \cmgn\ has intermediate, learnable activations of the form $\alpha \tanh(x) + \beta(x - \tanh(x))$ where $\alpha, \beta$ are constrained to be nonnegative. The \gnc\ uses the same activation, but with unconstrained $\alpha, \beta$. The \mmgn\ has intermediate, learnable activation $\alpha \softmax(x) - \beta \mathrm{softmin}(\x)$ where $\alpha, \beta$ are constrained to be nonnegative. The \gnm\ uses the same activation with unconstrained $\alpha, \beta$. The hidden dimension of each model is scaled such that each model has $1024\cdot d$ parameters. All models are trained for 10,000 iterations with randomly sampled batches of size 1,000 from $[0, 1]^d$. We test learning rates of 0.01, 0.001, and 0.0001 for each model, and report results corresponding to the learning rate that achieves the best performance. For each test trial, we evaluate the trained models on 10,000 points randomly sampled from $[0, 1]^d$.

\section{Hamiltonian Experiment Details}
\label{sec:appendix_hamiltonian}
We use the baseline MLP and Hamiltonian NN architectures described in \cite{HamiltonianNN} and tune over the hyperparameters listed in the appendix of \cite{HamiltonianNN}. To closely match the number of parameters used in the best-performing Hamiltonian NN with hidden dimension $100$, we use a \gnm\ with 4 modules and hidden dimension 256, with $\rho(x) = 1$, and $\sigma(\x)$ as the softmax function. The outputs of the modules are linearly combined via arbitrary learnable parameters. The \gnc\ has $L=4$ hidden layers, each with a hidden dimension of 256, elementwise tanh activation, and intermediate weight vectors $\balpha_\ell, \bbeta_\ell$. The RR has a hidden dimension of 220 and each learnable spline activation has 41 knots. We tuned the RR's range parameter over 0.01, 0.1, 1, and 10, and sparsity parameter over 0, 1e-1, 1e-2, 1e-4, 1e-6, and 1e-8.
Like \cite{HamiltonianNN}, we use a batch size of 200 and train for 10,000 steps. We tuned over learning rates 1e-1, 1e-2, 1e-3, 1e-4 and 1e-5 for each model, and report results corresponding to the best-performing model. For each trial at test time, we unroll from $t=0$ to $t=60$, where $\Delta t = 0.03$ (2000 steps).

\newpage
\bibliographystyle{IEEEtran}
\bibliography{refs}

\newpage
\begin{center}
    \includegraphics[width=1in,height=1.25in,clip,keepaspectratio]{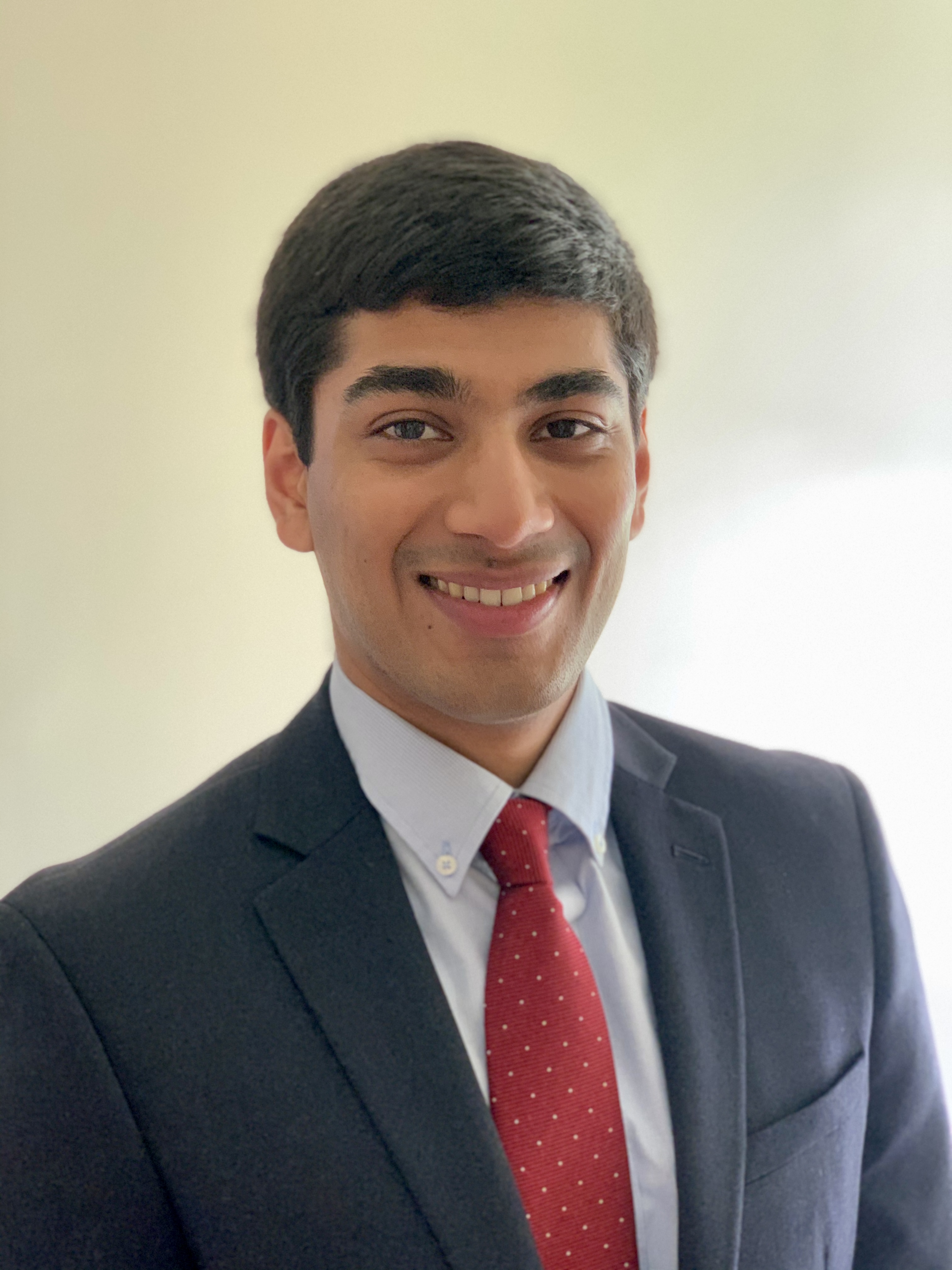}
\end{center} 
\textbf{Shreyas Chaudhari}
(Student Member, IEEE) received his B.S. in Electrical and Computer Engineering in 2018 from The Ohio State University, Columbus, OH, USA. He earned his M.S. in Electrical and Computer Engineering from Carnegie Mellon University, Pittsburgh, PA, USA in 2021, where he is currently pursuing a Ph.D. in the same discipline. In 2021, he was awarded the National Science Foundation Graduate Research Fellowship. His research interests include the foundational aspects of deep learning as well as applications of machine learning in signal processing.

\begin{center}
    \includegraphics[width=1in,height=1.25in,clip,keepaspectratio]{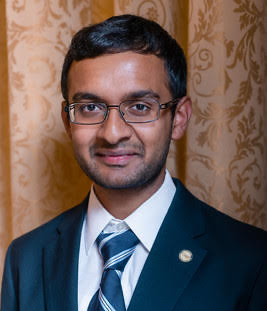}
\end{center}

\textbf{Srinivasa Pranav}
(Student Member, IEEE) received a B.S. degree in Electrical Engineering and Computer Sciences, a B.A. degree in Applied Mathematics, and a B.S. degree in Business Administration in 2019 from the University of California at Berkeley, Berkeley, CA, USA. He received an M.S. in Electrical and Computer Engineering in 2020 from Carnegie Mellon University, Pittsburgh, PA, USA. He is currently a Ph.D. student in Electrical and Computer Engineering at Carnegie Mellon University and is a National Science Foundation Graduate Research Fellow and ARCS Scholar. His research interests include machine learning, distributed optimization, and peer-to-peer deep learning.

\begin{center}
    \includegraphics[width=1in,height=1.25in,clip,keepaspectratio]{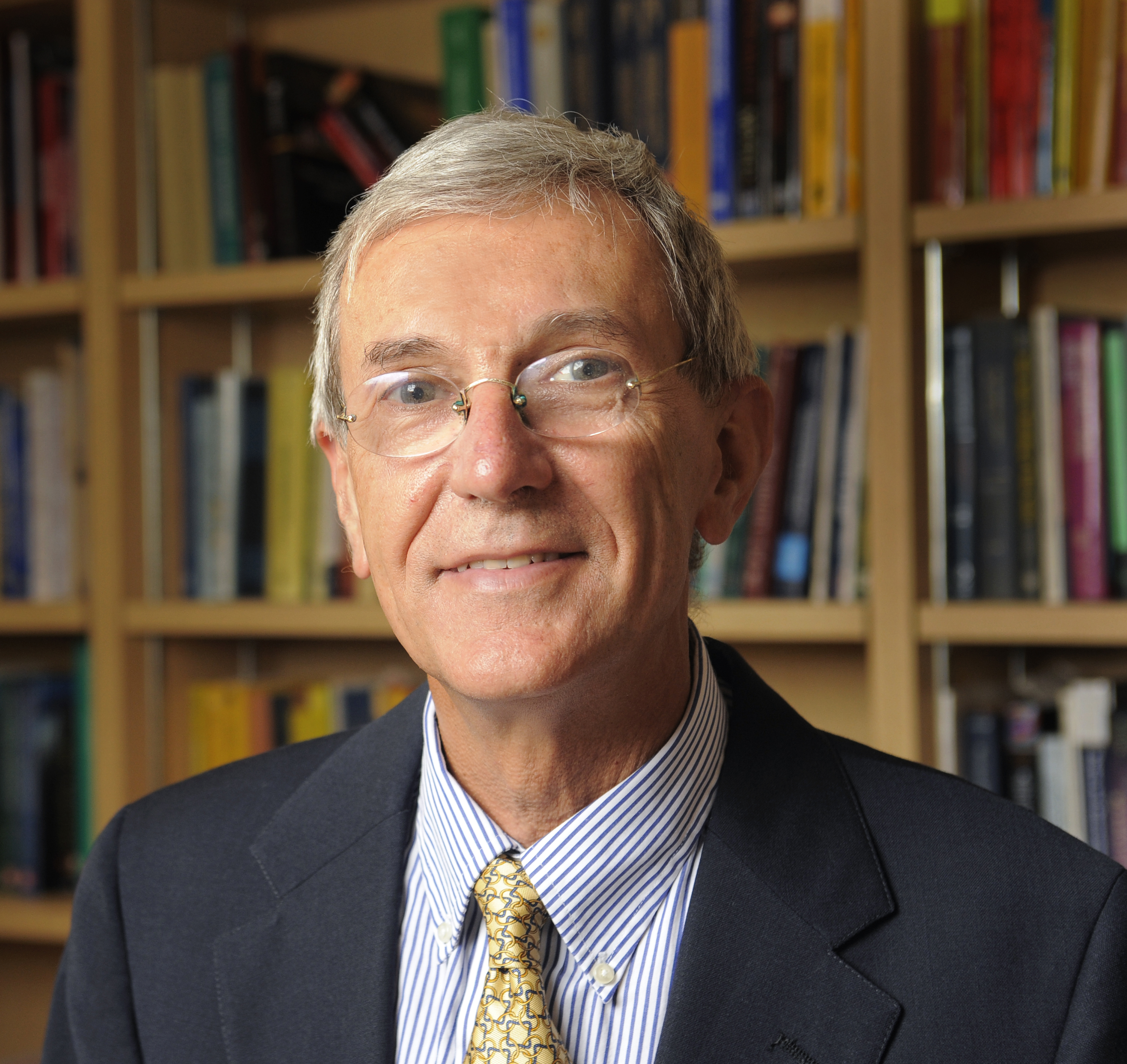}
\end{center}

\textbf{Jos{\'e} M.~F.~Moura}
(Life Fellow, IEEE) is the Philip L.~and Marsha Dowd University Professor at Carnegie Mellon University. His research is in statistical, algebraic, and graph signal processing. The technology of two of his 19 patents (co-inventor Alek Kavcic) is found in the read channel of over 4 billion  hard disk drives (60~\% of all computers sold worldwide since 2003) and the subject of a 2016 \$~750 million settlement between Carnegie Mellon University and a major chip manufacturer. He was Editor in Chief of the IEEE Transactions on Signal Processing, 2008-09 President of the IEEE Signal Processing Society, and 2019 IEEE President and CEO. He is a Fellow of the IEEE, AAAS, and the US National Academy of Inventors, and a member of the Academy of Sciences, Portugal and of the US National Academy of Engineering. He was awarded doctor honoris causa by the University of Strathclyde (Scotland, UK) and by Universidade de Lisboa (Portugal), the Great Cross of Prince Henry from the President of the Republic of Portugal, and the IEEE Jack S. Kilby Signal Processing Medal ``[f]or contributions to theory and practice of statistical, graph, and distributed signal processing.'' He was recognized with the IEEE Haraden Pratt Award.

\end{document}